\newcommand{\x}{{\mathbf x}}
\newcommand{\bE}{{\mathbb E}}
\newcommand{\cdalg}{\texttt{SolveRidge}\xspace}
\newcommand{\loco}{{\sc Loco}\xspace}
\newcommand{\cocoa}{{\sc CoCoA}\xspace}
\newcommand{\risk}{R}
\newcommand{\samp}{n}
\newcommand{\dims}{p}
\newcommand{\dimset}{\mathcal{P}}
\newcommand{\minusk}{(-k)}
\newcommand{\subsamp}{\samp_{subs}}
\newcommand{\blocks}{K}
\newcommand{\dimsk}{\tau}
\newcommand{\dimss}{\dims_{subs}}
\newcommand{\dimsks}{\tau_{subs}}
\newcommand{\RP}{\boldsymbol{\Pi}}
\newcommand{\feats}{\bar{\At}}
\newcommand{\soln}{\boldbeta}
\newcommand{\solnRP}{\bar{\soln}}
\newcommand{\solnls}{\solnest^{\text{ls}}}
\newcommand{\solnrls}{\widehat{\mathbf{w}}^0} 
\newcommand{\solnrstar}{\mathbf{w}^*} 
\newcommand{\solnrrr}{\widehat{\mathbf{w}}^{\text{rr}}} 
\newcommand{\Xtr}{\Xt_{\tiny{\text{R}}}}
\newcommand{\boldSigmar}{\boldSigma_{\tiny{\text{R}}}}
\newcommand{\solnrr}{\solnest^{\text{rr}}}
\newcommand{\solnloco}{\solnest^{\text{loco}}}
\newcommand{\solnest}{\widehat{\soln}}
\newcommand{\noisevar}{\mathbf{z}}
\newcommand{\boundconst}{c} 
\newcommand{\solnlsmin}{\widehat{\soln}^0}
\newcommand{\tr}{^{\top}}
\newcommand{\trace}[1]{{\rm tr}\left({#1}\right)}
\newcommand{\cb}[1]{\left\{ {#1} \right\}}
\newcommand{\br}[1]{\left( {#1} \right)}
\newcommand{\sq}[1]{\left[ {#1} \right]}
\newcommand{\nrm}[1]{\Vert {#1} \Vert}
\newcommand{\order}[1]{O \br{ #1 }}
\newcommand{\Prob}{{\mathbb P}}
\newcommand{\R}{{\mathbb R}}
\newcommand{\N}{{\mathcal N}}
\newcommand{\Xt}{{\mathbf{X}}}
\newcommand{\y}{Y}
\newcommand{\z}{\mathbf{z}}
\newcommand{\K}{\mathbf{K}}
\newcommand{\wt}{{\mathbf{w}}}
\DeclareMathOperator{\diag}{diag}
\newcommand{\At}{{\mathbf{X}}}
\newcommand{\Dt}{{\mathbf{D}}}
\newcommand{\Gt}{{\mathbf{G}}}
\newcommand{\Ht}{{\mathbf{H}}}
\newcommand{\It}{{\mathbf{I}}}
\newcommand{\Kt}{{\mathbf{K}}}
\newcommand{\Lt}{{\mathbf{L}}}
\newcommand{\Nt}{{\mathbf{N}}}
\newcommand{\St}{{\mathbf{S}}}
\newcommand{\Ut}{{\mathbf{U}}}
\newcommand{\Vt}{{\mathbf{V}}}
\newcommand{\Wt}{{\mathbf{W}}}
\newcommand{\var}{\mathbb{V}}
\newcommand{\bias}{\mathbb{B}}
\newcommand{\boldalpha}{\boldsymbol{\alpha}}
\newcommand{\boldbeta}{\boldsymbol{\beta}}
\newcommand{\boldSigma}{\boldsymbol{\Sigma}}
\newtheorem{thm}{Theorem}
\newtheorem{cor}[thm]{Corollary}
\newtheorem{lem}[thm]{Lemma}
\newtheorem{rem}{Remark}
\newtheorem{defn}{Definition}
\newtheorem{assn}{Assumption}
\title{{\sc Loco}: Distributing Ridge Regression with Random Projections}
\author{\name Christina Heinze \email heinze@stat.math.ethz.ch \\
       \addr Seminar for Statistics\\
       ETH Z\"urich\\
       8092 Z\"urich, Switzerland
       \AND
	   \name Brian McWilliams \email brian.mcwilliams@inf.ethz.ch \\
       \addr Department of Computer Science\\
       ETH Z\"urich\\
       8092 Z\"urich, Switzerland
       \AND
       \name Nicolai Meinshausen \email meinshausen@stat.math.ethz.ch \\
       \addr Seminar for Statistics\\
       ETH Z\"urich\\
       8092 Z\"urich, Switzerland
       \AND
       \name Gabriel Krummenacher \email gabriel.krummenacher@inf.ethz.ch \\
       \addr Department of Computer Science\\
       ETH Z\"urich\\
       8092 Z\"urich, Switzerland
       }
\renewcommand{\algorithmicrequire}{\textbf{Input:}}
\renewcommand{\algorithmicensure}{\textbf{Output:}}
\begin{document}

\maketitle

\begin{abstract}
We propose \loco, an algorithm for large-scale ridge regression which distributes the features across workers on a cluster. Important dependencies between variables are preserved using structured random projections which are cheap to compute and must only be communicated once. We show that \loco obtains a solution which is close to the exact ridge regression solution in the fixed design setting. We verify this experimentally in a simulation study as well as an application to climate prediction. Furthermore, we show that \loco achieves significant speedups compared with a state-of-the-art distributed algorithm on a large-scale regression problem. 
\end{abstract}

\begin{keywords}
  Distributed Estimation, Ridge Regression, Random Projection, High-dimensional Data \end{keywords}

\section{Introduction}
In the last few years there has been great interest in solving large-scale optimization and estimation problems. Parallelization has naturally emerged to leverage now-commonplace multi-core architectures to enable moderate sized problems to be solved quickly. Some datasets are large enough such that they are impractical to store and process on a single machine and so the problem must be solved in a distributed manner on a computing cluster. 

Two obvious questions arise: (1) how should the data and processing tasks be distributed among processing units (workers) and (2) how and what should each worker communicate. The data, computing architecture and choice of learning algorithm influence both of these points. 
Stochastic gradient descent (SGD) methods are suited to parallelization over the rows (observations) of the data \citep{zinkevich2010parallelized}. However, synchronization of results to ensure each worker is updating the current gradient becomes expensive. This has motivated recent asynchronous  approaches to parallel SGD  \citep{hogwild,Duchi:2013}. 

In this work, we limit our focus to $\ell_2$ penalized linear regression for large-scale estimation tasks -- in particular when the dimension $\dims$ of the data is also large. In such cases, it may not be practical to process the entire dataset on a single multi-core machine due to memory limitations.  
We therefore wish to distribute the problem in a way which allows computation to be shared across many machines which do not share memory. Common approaches outlined above typically limit the number of \emph{samples} each worker sees but when $\dims$ is large, it is preferable for each worker to instead solve a lower dimensional problem. This setting motivates distributing the data according to \emph{features} rather than samples. Distributing the task in this way introduces an additional difficulty -- features are not assumed to be independent as samples are. Therefore, care must be taken to maintain the important dependencies between features while keeping synchronization and communication between workers at a minimum. Another natural setting which motivates distributing estimation across features is one of privacy preservation. In this framework, no single worker may see all of the features and so even when the data size is not massive, sharing memory and data between workers is not permitted. 

Randomized dimensionality reduction based on the Johnson-Lindenstrauss lemma has emerged as a way to quickly obtain good approximations to a variety of learning tasks \citep{Ailon:2009}. Notably, structured random projections have been used to speed up approximate kernel expansions \citep{Le:2013}, computation of statistical leverage scores \citep{Drineas:2011ts,mcwilliams2014fast} and linear regression. For the latter, it can be shown that the least squares solution computed on a random projection of either the row \citep{Mahoney:2011te} or column  \citep{lu:2013, kaban:2014} space of the data matrix results in a solution which is close to optimal. An obvious downside to dimensionality reduction is that the solution obtained is no longer in the original space. Therefore, the estimated coefficients are difficult to interpret with respect to the observed features -- a task often as important as prediction accuracy. Furthermore, in order to compute the projection, a single machine is assumed to have access to the entire dataset.

In this work we propose and analyze \loco \xspace -- a simple, {\sc lo}w-{\sc co}mmunication distributed algorithm to approximately solve $\ell_2$ penalized least squares estimation which crucially requires no synchronization between workers. \loco assigns features to workers by randomly partitioning the data into $\blocks$ blocks (alternatively, this may be part of the problem specification). In each block, a small number of cheaply computed random projections are used to approximate the contribution from the remaining columns of the data. This ensures that important dependencies between features are maintained. Each worker then simply optimizes the objective independently on this compressed dataset, the size of which is proportional to the size of the random projection and the total number of workers. The solution vector returned by \loco is constructed by collecting the estimates for the respective unprojected ``raw'' features from each worker such that it lies in the original space. 

\loco is particularly suited to the high-dimensional setting (i.e.\ when the number of dimensions $\dims$ is larger than the number of samples $\samp$). High-dimensional data occurs frequently in practice: for example in bioinformatics, climate science and computer vision among others. Furthermore, it is often desirable to expand the dimensionality of low-dimensional data to improve predictions using e.g.\ higher-order interactions, feature transformations or representation learning. In such high-dimensional settings \loco retains good statistical properties and benefits from large potential speedups.

\paragraph{Outline and Contribution.} 

In \S\ref{sec:related} we place our contribution in the context of recently proposed related approaches to distributed optimization. In \S\ref{sec:setting} we formally describe our estimation problem and the distributed setting which we consider. We also give a brief introduction to random projections, in particular the Subsampled Randomized Hadamard Transform (SRHT) \citep{Tropp:2010uo}. In \S\ref{sec:algo} we describe \loco, our algorithm for distributed ridge regression. In \S\ref{sec:analysis} we show in the fixed design setting that the error between the coefficients estimated by \loco and the optimal ridge regression coefficients is bounded, under natural assumptions about the problem setting -- that some proportion of the signal lies in the top principal components. Importantly, unlike other approaches to parallelizing or distributing optimization, we make \emph{no} assumptions on sparsity in the data.   In \S\ref{sec:results} we provide implementation details and empirical evaluation of our algorithm on large-scale simulated and real datasets. \loco typically exhibits significant speedups with the number of workers with little loss in prediction accuracy.

\section{Related work} \label{sec:related}

Recently a number of methods have been proposed for large-scale optimization which parallelize the problem either locally amongst multiple cores on the same physical machine with shared memory or in a distributed fashion on a computing cluster. These general approaches are not mutually exclusive and can often be used in combination. However, each has a domain for which it is best suited as well as its own specific drawbacks.

Here we will briefly review some of the main directions in parallel and distributed optimization and estimation. 

\paragraph{Parallel methods.}
Parallel methods such as {\sc hogwild!}  \citep{hogwild}, {\sc AsyncDA} and {\sc AsyncAdaGrad}  \citep{Duchi:2013} have shown that large speedups are possible with asynchronous gradient updates when data is sparse. These methods rely on the idea that if the number of non-zero coordinates in each stochastic gradient evaluation is small compared to the number of variables $\dims$, workers updating the same solution vector in parallel will rarely propose conflicting updates. As such each worker is allowed to update the solution asynchronously without the need for locking, provided the delay of any processor is not too great.

Whilst sparsity is a natural and common feature of large datasets, in some fields the data collected is dense with many correlated features. Furthermore, in the high-dimensional setting, SGD in particular may take many passes over the data to reach the optimum. Under these conditions we might expect the performance of the above mentioned approaches to suffer. 

Local parallel methods are able to achieve large speedups in part because the data is assumed to be stored locally and each core is able to access shared memory which makes communication relatively cheap. As dataset size increases several limiting factors arise: the number of processors on a single machine, the amount of local memory,  and finally the local storage size. In other cases, the dataset might be physically stored in several different locations. These aspects can make parallel optimization impractical for particularly large scale problems.

\paragraph{Distributed methods.}
Methods which distribute computation amongst $\blocks$ networked workers on a cluster have been proposed which alleviate these constraints. However, the communication between workers introduces significant overhead -- it can be orders of magnitude slower than accessing local memory. This necessitates a different update strategy from the parallel approaches outlined above. 

\citet{jaggi2014communication} propose a communication efficient approach to dual optimization ({\sc CoCoA}). In each iteration, each worker solves a local dual problem, using a fraction of $\samp/\blocks$ samples, and communicates the coefficient estimates which are then aggregated. This procedure is iterated until convergence. The user can steer the tradeoff between communication and local computation by specifying how many data points to process locally in each iteration. 

Aside from these methods which consider quite broad classes of optimization problems, several methods have been proposed for solving specific statistical estimation tasks in a distributed fashion. For example, \citet{zhang:2013} considered the problem of kernel ridge regression. Each worker computes a local estimator using $\samp/\blocks$ samples which is then communicated back to the master.  Since only a single round of communication is necessary, simply computing an estimate which is the average of the local estimates achieves a superlinear speedup whilst retaining an optimal rate of convergence (in the statistical sense) up to a number of workers, $\blocks$ which is problem dependent. \citet{liu2014distributed} address the more general problem of distributed maximum likelihood estimation in exponential family models.

Each of these methods for optimization and estimation analyzes strategies which distribute across the \emph{samples} only. The setting where each worker receives a subset of the \emph{features} has received less attention.  \citet{Richtarik:2013} proposed a distributed approach to {coordinate descent} where each worker sees a block of features. Similarly to parallel approaches to coordinate descent \citep{Bradley:2011}, blocks of features are required to be nearly independent to keep communication costs down. 

In this work we focus on the setting where each worker receives a subset of the features. Notably however, \loco requires no assumptions about sparsity or independence between features since each block sees a representation of the remaining features such that updates to the individual solution vectors are not independent of the rest of the dataset. 
\loco does not require synchronization between workers since each worker may only update its own part of the solution vector.

\paragraph{Johnson-Lindenstrauss projections.} Johnson-Lindenstrauss (J-L) projections are a popular method for dimensionality reduction. J-L projections are low-dimensional embeddings which preserve -- up to a small distortion -- pairwise $\ell_2$ distances between vectors according to the J-L lemma (see e.g.\ \citet{Ailon:2009}). Specific constructions also guarantee that the spectrum of an entire subspace of vectors is preserved \citep{Tropp:2010uo}.  Typically, the projection matrix is constructed to be a nearly-orthogonal matrix with entries drawn at random from a sub-gaussian distribution \citep{achlioptas2003}. Recently, fast constructions based on sparse matrices \citep{Ailon:2009}, or highly structured matrices \citep{Halko:2011kg, Boutsidis:2012tv} have been proposed which retain similar guarantees but reduce the dependence of the computational cost on the dimension from linear to logarithmic.

Random projections have been used for dimensionality reduction for least squares \citep{kaban:2014} and ridge regression \citep{lu:2013}. However, the solution vector is in the compressed space and so interpretability of coefficients is lost. 

\section{Problem Setting and Notation} \label{sec:setting}
In this work we will concentrate on ridge regression, a ubiquitous tool for high-dimensional data analysis \citep{esl}.
Given a matrix of features $\At\in\R^{\samp\times\dims}$ and a corresponding vector of responses, $\y\in\R^\samp$ where the dimensionality $\dims$ and sample size $\samp$ are very large, we are interested in solving the following estimation task
\begin{equation}
\min_{\soln \in\R^{\dims}} L(\soln) := \samp^{-1} \nrm{\y - \At\soln}^2 +  \lambda\nrm{\soln}^2 
\label{eq:obfn}
\end{equation}
The first term is the squared error loss and the second term is the ridge penalty which regularizes the size of the coefficient vector according to the tuning parameter, $\lambda$.\footnote{Throughout, $\nrm{\cdot}$ refers to the Euclidean norm for vectors and the spectral norm for matrices, i.e.\ $\nrm{\mathbf A }=\sup_{\x}\nrm{\mathbf{A}\x}/\nrm{\x}$.} 


Ridge regression has a closed-form solution
$\solnrr = \br{\At\tr\At + \samp \lambda \It }^{-1}\At\tr\y$, but
clearly when the dimensionality of the data is large, constructing and inverting the covariance matrix is prohibitively expensive. 
When the number of samples is very large, ridge regression is usually solved using stochastic gradient descent (SGD) or stochastic dual  coordinate ascent (SDCA) \citep{shalev:2013}.


\paragraph{Feature-wise distributed ridge regression.} We now consider the case where we distribute the features across $K$ different workers. Formally, let $\dimset = \lbrace 1,\ldots,\dims \rbrace$ be the set of indices. We partition this set into $\blocks$ non-overlapping subsets $\dimset_{1},\ldots,\dimset_{\blocks}$ of equal size, $\dimsk=\dims/\blocks$ so $\dimset = \bigcup_{k=1}^\blocks  \dimset_{k}$ and $|\dimset_1|=|\dimset_2|,\ldots,=|\dimset_{\blocks}|=\dimsk$.\footnote{This is for simplicity of notation only, in general the partitions can be of different sizes.} 

A naive attempt at parallelizing \eqref{eq:obfn} would simply be solving the minimization problem on each subset of features $\dimset_k$ independently. However, without sparsity in the dataset to guide the partitioning process, important dependencies between features in different blocks would not in general be preserved. 

We can rewrite \eqref{eq:obfn} making explicit the contribution from block $k$. 
Letting $\At_{k}\in\R^{\samp\times \dimsk}$ be the sub-matrix whose columns correspond to the coordinates in $\dimset_{k}$ (the ``raw'' features of block $k$) and $\At_{\minusk}\in\R^{\samp\times (\dims-\dimsk)}$ be the remaining columns of $\At$, we have
\begin{equation} \label{eq:optim_global}
L(\soln) = \samp^{-1} \nrm{\y - \At_k\soln_{\text{raw}} - \At_{\minusk}  \soln_{\minusk}}^2 
+ \lambda  \nrm{ \soln_{\text{raw}}}^2 + \lambda \nrm{\soln_{\minusk}}^2. 
\end{equation}
The idea behind our approach is to replace $\At_{\minusk}$ in each block with a low-dimensional approximation. Since the regularizer is separable across blocks, we only require that the contribution from $\At_{\minusk}  \soln_{\minusk}$ to $f(\soln)$ is preserved.



\begin{figure*}[!tp]
\begin{centering}
    \includegraphics[width=0.65\textwidth, keepaspectratio=true]{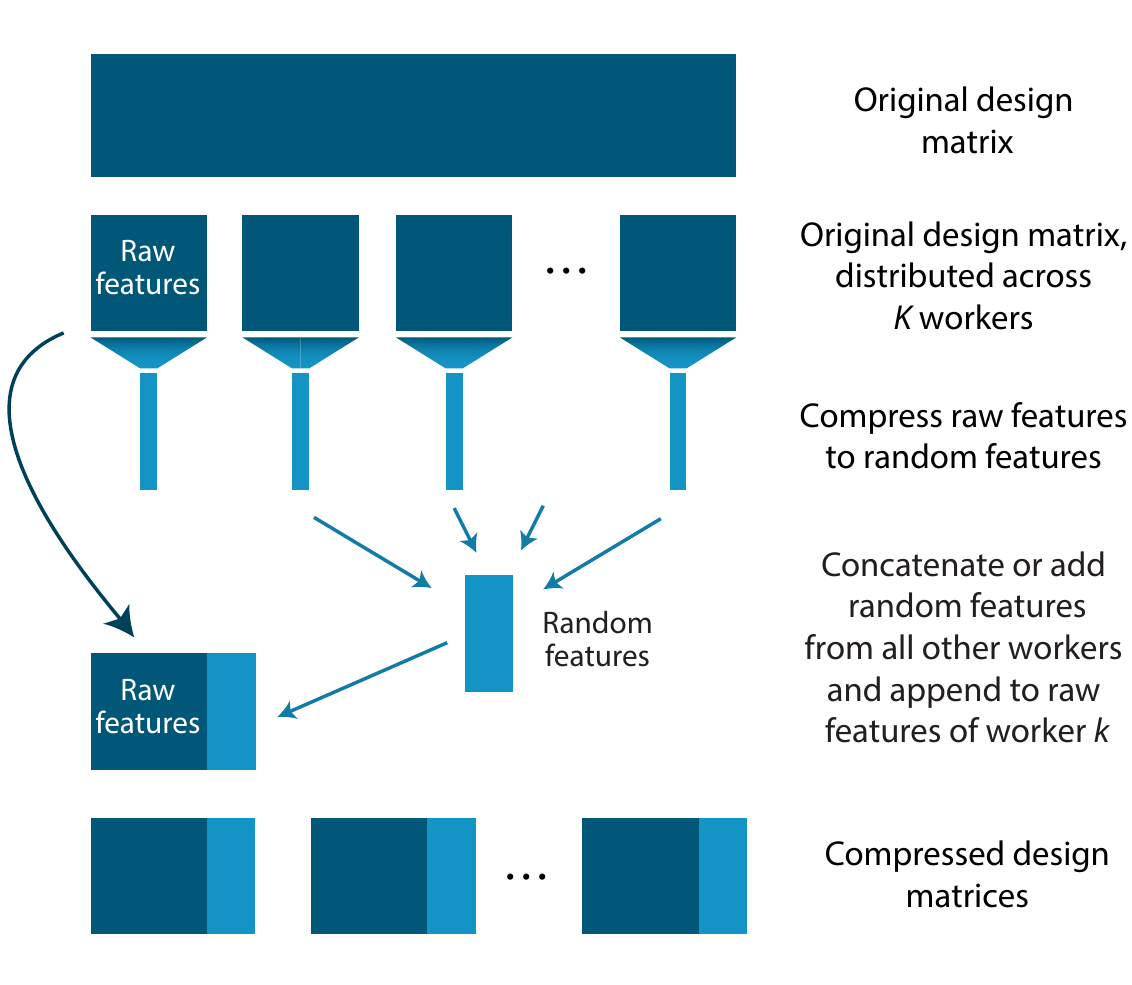}
\caption{Schematic for the approximation of a large data set in a distributed fashion using random projections. The random features can either be concatenated or added. \label{fig:loco_partition}
}
\end{centering}
\end{figure*}

Let $\tilde{\At}_k\in\R^{\samp\times (\blocks-1)\dimsks}$ be the matrix whose columns are a low-dimensional approximation to $\At_{\minusk}$, i.e.\ to the columns of $\At$ not in $\At_k$,  and $\dimsks\ll \dimsk$. The matrix $\tilde{\At}_k$ approximates $\At_{\minusk}$ as each of the other $\blocks-1$ blocks provides an approximation to its respective $\dimsk$ raw features of size $\dimsks$, resulting in $(\blocks-1) \dimsks$ columns. We shall call the columns in $\tilde{\Xt}_k$ the ``random'' features of block $k$. This procedure is described in Figure \ref{fig:loco_partition}.

Defining the sub-problem that worker $k$ solves as
\begin{equation}\label{eq:optim_workerk}
L_k(\soln_k) = \samp^{-1} \nrm{\y - \underbrace{\At_k\soln_{\text{raw}}}_{\text{raw}} - \underbrace{\tilde{\At}_k \soln_{k, {\text{rp}}}}_{\text{random}}}^2 + \lambda  \nrm{ \soln_{\text{raw}}}^2 + \lambda \nrm{\soln_{k, \text{rp}}}^2,
\end{equation}
we require the approximation $\tilde{\At}_k$ to be such that the risk of the estimator which minimizes eq.~\eqref{eq:optim_workerk} is similar to the risk of the minimizer of eq.\ \eqref{eq:optim_global} (we formalize this in \S \ref{sec:analysis}). In order to achieve this we construct the approximation using random projections which we briefly describe below.



\paragraph{Subsampled Randomized Hadamard Transform.}  J-L projections are low-dimensional embeddings $\RP :\R^\dimsk\rightarrow \R^{\dimsks}$.
We concentrate on the class of \emph{structured} random projections, among which the Subsampled Randomized Hadamard Transform (SRHT) has received particular recent attention \citep{Tropp:2010uo,Boutsidis:2012tv}. 
The SHRT consists of a preconditioning step after which $\dimsks$ columns of the new matrix are subsampled
uniformly at random. In more detail, it consists of a projection matrix, $\RP\in\R^{\dimsk\times\dimsks}= \sqrt{\dimsk/\dimsks} \Dt\Ht\St$ 
\citep{Halko:2011kg,Boutsidis:2012tv} with the definitions:
\begin{itemize}
\item $\St\in\R^{\dimsk\times\dimsks}$ is a subsampling matrix.
\item $\Dt\in\R^{\dimsk\times\dimsk}$ is a diagonal matrix whose entries are drawn independently
  from $\{-1, 1\}$. 
\item $\Ht \in \R^{\dimsk \times \dimsk}$ is a normalized Walsh-Hadamard
  matrix\footnote{For the Hadamard transform, $\dimsk$ must be a power
    of two but other transforms exist (e.g. DCT, DFT)
    with similar theoretical guarantees and no
    restriction on $\dimsk$.} which is defined recursively as
$$
\Ht_\dimsk = \sq{\begin{array}{cc} \Ht_{\dimsk/2} & \Ht_{\dimsk/2} \\ \Ht_{\dimsk/2} & -\Ht_{\dimsk/2} \end{array} }
,~~
\Ht_2 = \sq{\begin{array}{cc} +1 & +1 \\ +1 & -1 \end{array} }.
$$
We set $\Ht = \frac{1}{\sqrt{\dimsk}}\Ht_\dimsk$ so it has orthonormal
columns.
\end{itemize}

The SRHT has similar $\ell_2$ distance preserving properties as sub-gaussian random projections but has the added benefit of a fast $\order{\dimsk\log\dimsk}$ matrix-vector product due to its recursive definition. 


\section{Algorithm}\label{sec:algo}
Our procedure \loco for distributed ridge regression is presented in Algorithm \ref{alg:batch}. We describe the steps in more detail below.

\begin{algorithm}[ht]
\caption{\loco  \label{alg:batch}}
	\algorithmicrequire\; Data: $\At$, $\y$, Number of blocks: $\blocks$, Parameters: $\dimsks$, $\lambda$
  \begin{algorithmic}[1]
  \STATE Partition $\dimset=\{1,\ldots,\dims\}$ into $\blocks$ subsets $\dimset_{1},\ldots,\dimset_{\blocks}$ of equal size, $\dimsk$.
    \FOR{\textbf{each} worker $k\in\{1,\ldots\blocks\}$ \textbf{in parallel}}
        \STATE Compute and send random projection $\widehat{\At}_k = \At_k \RP_k$. 
        \STATE Construct $\feats_k = [\At_k, \tilde{\At}_k] $ 
        \STATE $\solnRP_k \leftarrow \cdalg(\feats_k,\y,\lambda) $
        \STATE $\solnest_k = \sq{\solnRP_k}_{1:\dimsk}$
    \ENDFOR
	  \end{algorithmic}
  \algorithmicensure\; Solution vector: $\solnloco = \sq{\solnest_1,\ldots,\solnest_\blocks}$
\end{algorithm} 
\paragraph{Input.} As well as the usual regularization parameter $\lambda$, \loco requires the specification of the number of workers $\blocks$ and the random projection dimension $\dimsks$.
\paragraph{Steps 1 \& 3.} We first randomly partition the coordinates into $\blocks$ subsets. Then each worker computes a random projection, via the SRHT, of its respective block which we denote by 
$\widehat{\At}_k = \At_k \RP_k \in \R^{\samp\times\dimsks} $.
 
\paragraph{Step 4.} Each worker $k$ constructs the matrix 
$$
\feats_k\in\R^{\samp\times(\dimsk+ (\blocks-1)\dimsks)} = \sq{\At_k, \tilde{\At}_k}, \quad \tilde{\At}_k = \sq{\widehat{\At}_{k'}}_{k'\neq k}
$$
which is the column-wise concatenation of the raw feature matrix $\At_k$ and the random approximations from all other blocks, $\tilde{\At}_k$. 

\paragraph{Alternative Step 4.}
Each worker $k$ constructs the matrix 
$$
\feats_k\in\R^{\samp\times(\dimsk + (\blocks-1)\dimsks)} = \sq{\At_k, \tilde{\At}_k}, \quad \tilde{\At}_k = \sum_{k'\neq k} \widehat{\At}_{k'} .
$$
When $\RP$ is defined explicitly as a random matrix (e.g.\ entries sampled i.i.d.\ from a sub-Gaussian or very sparse distribution), summing $\R^\dimsk\rightarrow\R^{(\blocks-1)\dimsks}$-dimensional random projections from $(\blocks-1)$ blocks is equivalent to computing the $\R^{(\dims - \dimsk)}\rightarrow\R^{(\blocks-1)\dimsks}$-dimensional random projection in one go which potentially allows for the random feature representation to be computed and combined more efficiently. \newline 

Without loss of generality the raw features will always occupy the first $\dimsk$ columns of $\feats_k$. The last $(\blocks-1)\dimsks$ columns of $\feats_k$ are a good approximation of the remaining $(\blocks-1)$ blocks of the full data matrix not in $\At_k$ and so solving \eqref{eq:obfn} using $\feats_k$ obtains a solution which is close to the optimal solution using $\At$. We make this explicit in \S\ref{sec:analysis}.

\paragraph{Steps 5 \& 6.} The function $\cdalg(\feats_k,\y,\lambda)$ returns a vector 
\begin{equation}
\solnRP_k\in\R^{\dimsk + (\blocks-1)\dimsks} = \arg\min_{\soln_k} \samp^{-1} \nrm{\y - \feats_k \soln_k}^2 + \lambda\nrm{\soln_k}^2 \label{eq:ridge_k}
\end{equation}
In practice, any fast algorithm which returns an accurate solution to eq. \eqref{eq:ridge_k} can be used here. The final solution vector $\solnloco$ is the concatenation of the first $\dimsk$ coordinates of each $\solnRP_k$ and so lives in the same space as the original data.

\paragraph{Computational, memory and communication costs.} Each worker must only store a copy of its block of raw features and a random projection of the remaining features which is $\order{\dimsk+(\blocks-1)\dimsks}$. This is substantially smaller than the original dimensionality $\dims$. Each worker must communicate its random projection once to all other workers (or to a shared location where the other workers can read  it). Aside from this there is no further communication between workers.  The small size of the projection ensures that for appropriately sized problems, each worker is able to store its relevant features in local memory.

A key benefit of \loco which differentiates it from most other distributed algorithms is that there are three areas where speedups are possible as $\blocks$ increases. 
\begin{enumerate}
	\item[(i)] {\bf The problem each worker solves becomes easier in a computational sense.} The cost of computing a fast random projection of the $\dimsk$ features in each block is $\order{\samp\dimsk\log\dimsks}$. As $\blocks$ increases $\dimsk$ decreases, resulting in a speedup in the computation of the random projection. As long as the total number of features per worker $(\dimsk + (K-1)\dimsks)$ also decreases, each iteration of the local optimization algorithm becomes cheaper. Assuming a solver whose iteration cost scales linearly with the problem dimension is used in $\cdalg(\feats_k,\y,\lambda)$,
the part of the computational cost which is dependent on the dimension scales with $\order{\dimsk \log\dimsks + \dimsk + (\blocks-1)\dimsks}$. 

\item[(ii)] {\bf Each local problem becomes easier in a statistical sense}. The ratio between the number of parameters and the sample size $(\dimsk + (K-1)\dimsks)/\samp$ decreases allowing faster convergence to the optimal solution in each block. 

\item[(iii)] As a consequence of (i), {\bf the size of the random projections to be communicated by each worker decreases}. 
\end{enumerate}

The speedup occurring from point (i) is common to all distributed algorithms. However, the speedup contribution from (ii) and (iii) are specific to \loco. In contrast, row-wise distribution often involves a trade-off between speed increases coming from (i) and a \emph{slow-down} coming from the fact that the local problems are \emph{more} high-dimensional and so local convergence will be slower.


\section{Analysis} \label{sec:analysis}

In this section we will show that in the fixed design setting the coefficients estimated by \loco are close to the full ridge regression solution. The results here are developed for the case where the random features in $\tilde{\Xt}_k$ result from concatenating the SRHT projections of all other blocks and throughout we shall assume that the columns of $\Xt$ and $\feats_k$ are standardized. \linebreak
 
\noindent Consider the linear model 
\begin{equation}
 \y = \At \soln^*  + \varepsilon,
\label{eq:linear_model} 
\end{equation}
with fixed $\At \in \mathbb{R}^{n \times p}$ and  true parameter vector $\soln^* \in \mathbb{R}^p$. Assumption \ref{assn:bound_t} below will formalize our assumptions on the noise, $\varepsilon$. Let $\solnrr$ denote the ridge estimate for $\soln^*$, so $\solnrr$ is the solution which results from solving the ridge regression problem in the original space, stated in eq. \eqref{eq:optim_global}.

In order to formulate our result, we define the following risk function. 
\begin{defn}[Risk]
Let  $\widehat{\mathbf{b}}$ be an estimator for $\soln^*$ and define the risk of $\widehat{\mathbf{b}}$ with fitted values $\widehat{\y} = \Xt\widehat{\mathbf{b}}\in\R^n$ as
$$
\risk(\Xt\widehat{\mathbf{b}}) = \samp^{-1} \bE_\varepsilon \nrm{\Xt \soln^* - \Xt\widehat{\mathbf{b}}}^2.
$$
\end{defn}

In the formulation of Theorem \ref{thm_ridge} we make use of the fact that we can rewrite the regularized optimization problems in eqs.\ \eqref{eq:optim_global} and \eqref{eq:optim_workerk} as constrained optimization problems with a monotonic relationship between the regularization parameter $\lambda$ and the constraint $t$ which upper-bounds the squared $\ell_2$ norm of the solution vector. In the original space we have
\begin{equation}
\underset{\nrm{\soln}^2 \leq t}{\min} n^{-1} \nrm{\y - \At \soln}^2 
\label{analysis:global_ridge}
\end{equation}
while each worker solves
\begin{equation}
\underset{\nrm{\soln_k}^2 \leq t}{\min} n^{-1} \nrm{\y - \feats_k \soln_k}^2 
\label{analysis:workerk_ridge}
\end{equation}
in a compressed space. Recall that $\solnrr $ minimizes eq.~\eqref{analysis:global_ridge} and $\solnRP_k$ minimizes eq.~\eqref{analysis:workerk_ridge}. \linebreak

\noindent Before we state our main theorem, we make the natural assumption that the main contribution to the $\ell_2$ norm of the true parameter vector -- i.e. most of the important signal -- lies in the direction of the first $J$ principal components of $\Xt$. This merely formalizes the conditions under which ridge regression yields good results. Since ridge regression applies more shrinkage in directions associated with smaller eigenvalues \citep{esl},  if Assumption \ref{assn:bound_t} does not hold we might expect a different estimator to be more appropriate.

\begin{assn}
Let $\solnrstar$ be the true parameter vector after rotating $\Xt$ to the PCA coordinate system.
There exists $1\le J\le \min\{n,p\}$ and $\boundconst\in (0,1/2)$ such that 
\begin{enumerate}
\item[(A1)] the $J$-th largest eigenvalue of the covariance matrix is strictly positive, that is $\lambda_J >0$,
\item[(A2)] the ridge constraint is active: $t\le (1-\boundconst) \sum_{j=1}^J (\solnrstar_j)^2  $,
\item[(A3)] the errors $\varepsilon_i$, $i=1,\ldots,n$ have zero mean, are independent and their variances are bounded by $\sigma^2>0$.
\end{enumerate}
\label{assn:bound_t}
\end{assn}

To shed some light onto Assumption (A2), consider the noiseless case where the entire signal lies in the first $J$ principal components. Then $c = 0$ implies no shrinkage, while increasing $c$ means that the amount of regularization becomes larger. 

If Assumptions (A1) and (A2) do not hold, then ridge regression may not be a suitable estimator for $\soln^*$ in Eq. \eqref{eq:linear_model}, independent of how we choose the size of the constraint. If, on the other hand, (A1) and (A2) do hold, the amount of required regularization can differ. In problem settings where the signal-to-noise ratio is low,
$p>n$ or where the covariance matrix of $\Xt$ is otherwise close to singular, the ridge constraint is active and (A2) covers the relevant section of the regularization parameter. The ridge estimator will then shrink less along directions associated with large variance.
If the data are full rank and the noise value is very low, shrinkage may be unnecessary and the ordinary least squares estimator may be more appropriate. This issue is  discussed in \S\ref{sec:supp-rr} and we derive a similar bound for OLS in \S \ref{sec:supp_ols}, which is the relevant bound if the ridge constraint is not active and (A2) does not apply.

We now present Theorem \ref{thm_ridge} which states that the expected difference between the coefficients $\solnloco$ returned by \loco and the full ridge regression solution is bounded. 
\begin{thm}

Under Assumption \ref{assn:bound_t}, $ \exists ~ n_0(\xi) $ for all $\xi > K (\delta +(p-\tau)/e^r)$ such that for all $n \geq n_0$ with probability at least $1-\xi$
$$
 \bE_\varepsilon[\nrm{\solnrr -\solnloco}^2] \leq 
\frac{5 \blocks}{\boundconst \lambda_J} \br{\frac{1}{(1-\rho)^{2}} -1} \risk(\Xt\solnrr)
$$
where $\rho = C \sqrt{\frac{r \log(2r / \delta) }{(\blocks-1)\dimsks}}$, $r=\text{rank}(\Xt)$, $\lambda_J$ denotes the $J^{th}$ largest non-zero eigenvalue of the covariance matrix and
$\risk(\Xt\solnrr)$ is the risk of the ridge estimator. The expectation is conditional on the random projection as the uncertainty coming from the SRHT is captured in the probability with which the statement holds. 
\label{thm_ridge}
\end{thm}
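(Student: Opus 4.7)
The plan is to bound the difference $\|\solnrr - \solnloco\|^2$ by passing through the prediction-space (fitted-value) error and then converting back to a coefficient-space bound using the eigenvalue $\lambda_J$ from Assumption (A1). The argument proceeds worker-by-worker and then aggregates across the $K$ blocks.

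First, I would set up the SRHT as a subspace embedding. For each worker $k$, the matrix $\tilde{\Xt}_k$ is built from SRHT projections of the columns of $\Xt_{-k}$, and the standard subspace embedding guarantee for the SRHT (e.g.\ the bound used in \cite{Tropp:2010uo,Boutsidis:2012tv}) says that with probability at least $1-\delta - (p-\tau)/e^r$ the induced distortion on the column space of $\Xt$ is at most $\rho = C \sqrt{r\log(2r/\delta)/((K-1)\tau_{subs})}$. Taking a union bound over the $K$ workers produces the failure probability $K(\delta + (p-\tau)/e^r)$ claimed in the theorem, and from here I can condition on the event that every worker has a good embedding.

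Second, for each worker I would compare $\solnRP_k$ to $\solnrr$ through the constrained forms \eqref{analysis:global_ridge} and \eqref{analysis:workerk_ridge}. Because the feasible set is the same ($\|\soln_k\|^2 \le t$) and the only change is replacing $\Xt_{-k}$ by its embedding $\tilde{\Xt}_k$, the subspace embedding gives $\|\feats_k \solnRP_k - \y\|^2 \le (1+\rho)^2/(1-\rho)^2 \cdot \|\Xt \solnrr - \y\|^2$ (up to constants), and symmetrically a lower bound on $\|\feats_k \solnRP_k - \y\|^2$ in terms of $\|\Xt \solnloco_{\text{block } k} - \y\|^2$. Rearranging yields a prediction-space bound of the form $\|\Xt(\solnrr - \widehat{\soln}_k)\|^2 \lesssim [(1-\rho)^{-2} - 1] \,\risk(\Xt\solnrr)$, where $\widehat{\soln}_k$ is the natural embedding of $\solnRP_k$ back to $\R^p$. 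The factor $1/c$ appears here because Assumption (A2) forces the constraint to be active and controls the slack between $\|\solnrr\|^2$ and $t$, which is essential for the constrained comparison to be tight.

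Third, I would convert the prediction-space bound into a coefficient-space bound. Writing $\Xt(\solnrr - \solnloco)$ in the PCA coordinate system, and using that (A1) guarantees $\lambda_J>0$ together with (A2) ensuring the mass of $\solnrstar$ (and hence effectively of $\solnrr$ and $\solnloco$) sits in the top-$J$ eigendirections, I can divide by $\lambda_J$ to obtain $\|\solnrr - \solnloco\|^2 \le \lambda_J^{-1} \|\Xt(\solnrr - \solnloco)\|^2$ on the relevant subspace. Summing this across the $K$ workers (each contributing its block of coordinates to $\solnloco$) produces the $5K/(c\lambda_J)$ prefactor. The sample-size threshold $n_0(\xi)$ appears when one needs concentration arguments to translate the fixed-design risk appearing in intermediate steps to the form stated on the right-hand side.

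The main obstacle will be the careful bookkeeping in the second step: making the constrained comparison tight enough that the relative excess prediction error collapses to the clean $(1-\rho)^{-2}-1$ form rather than something worse, and doing so simultaneously across workers without the factors multiplying. Tracking the role of $c$ from (A2) so that it enters as $1/c$ (and not, say, $1/c^2$), and ensuring that the union bound over $K$ blocks only costs us a linear factor in $K$ in both the probability and the bound, are the delicate points. Once those are controlled, the conversion from prediction error to coefficient error via $\lambda_J$ is standard and closes the argument.
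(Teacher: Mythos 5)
Your first step (SRHT subspace embedding per worker, union bound over the $K$ blocks giving the failure probability $K(\delta+(p-\tau)/e^r)$) and your second step (an excess-risk comparison yielding a bound of order $((1-\rho)^{-2}-1)\risk(\Xt\solnrr)$ on the gap between worker $k$'s objective value and the global one) do match the paper's lemmas on concatenating random features and on risk inflation. The genuine gap is in your third step, the conversion from prediction space back to coefficient space. You propose $\nrm{\solnrr - \solnloco}^2 \le \lambda_J^{-1}\nrm{\Xt(\solnrr-\solnloco)}^2$ ``on the relevant subspace,'' justified by (A2). But (A2) constrains the true signal $\solnrstar$, not the error vector $\solnrr - \solnloco$; in the high-dimensional regime $p>n$ that the theorem explicitly targets, $\Xt\tr\Xt/n$ has a nontrivial null space and nothing forces the error vector into the top-$J$ eigendirections, so this inequality is false in general. (The paper's own experimental discussion even points out that the difference between solutions may lie in the null space of $\Xt$ without affecting prediction.)

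The paper closes this step by a different mechanism that never requires lower-bounding $\nrm{\Xt v}^2/\nrm{v}^2$. It shows that the gradient of the loss at the constrained ridge optimum satisfies $\nabla f(\solnrr) = -2\lambda\solnrr$, so for any feasible competitor $\tilde\soln'$ with $\nrm{\tilde\soln'}^2\le t=\nrm{\solnrr}^2$ (the constraint being active by (A2)), convexity gives $f(\tilde\soln')-f(\solnrr)\ge -2\lambda\,\solnrr{}\tr(\tilde\soln'-\solnrr)\ge\lambda\nrm{\tilde\soln'-\solnrr}^2$, a coefficient-space lower bound on the objective gap $\Delta$ obtained directly from first-order optimality at the boundary of the constraint set. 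The quantities $\boundconst$, $\lambda_J$, the factor $5$, and the sample-size threshold $n_0$ then all enter through a separate probabilistic lemma establishing $\lambda\ge\boundconst\lambda_J/5$ with probability at least $1-\gamma$ once $n\ge n_0$ --- not, as you suggest, from tracking slack in the constrained comparison or from translating the fixed-design risk. Without this gradient/active-constraint argument (or an explicit restricted minimum-eigenvalue assumption, which the paper invokes only in the separate case where the constraint is inactive), your outline cannot produce the stated coefficient-space bound.
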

The exact value of $\samp_0$ depends on $\xi$ and the exact form is given in the proof of Theorem \ref{thm_ridge} which is presented in \S\ref{sec:supp-rr}.

The bound above intuitively trades off several fundamental quantities which determine the overall approximation error; the projection dimension, $\dimsks$, the number of workers, $\blocks$ and the rank of the data, $r$. The bound scales with the number of workers and inversely with $(1-\rho)^2$, which measures the quality of the random feature representation. This can be improved by either increasing the projection dimension, $\dimsks$, for a fixed $\blocks$ or by increasing the number of workers, $\blocks$, for a fixed $\dimsks$. However, doing so increases the computational overhead per worker which scales as $\order{(\blocks-1)\dimsks}$.  

The approximation quality term $\rho$ also depends on the rank, $r$ of the design matrix. Intuitively, if $r$ is small the performance of \loco improves. For a fixed projection dimension the random feature representation is most successful in capturing the signal of its corresponding block of raw features if the rank of that block is not too large. If $\blocks$ and $\dimsks$ are chosen such that $(\blocks-1) \dimsks >> r$, the approximation error vanishes.


\section{Experimental Results} \label{sec:results}

\paragraph{Implementation details.} 
We implemented \loco in the Apache Spark framework and ran the experiments on the Brutus and Euler clusters\footnote{\url{http://en.wikipedia.org/wiki/Brutus_cluster}}. A software library  {\sc Loco}$^{\text{lib}}$ is available at \url{http://christinaheinze.github.io/loco-lib/}.
In practice, to guarantee portability across different computing architectures, instead of using specialized libraries providing the SRHT, we used a sparse random projection matrix \citep{achlioptas2003} with entries sampled as
$$
\RP_{i,j} \sim \left\{ \begin{array}{cc}1, & \text{ w.p. ~} 1/6 \\ 0, & \text{ w.p. ~} 2/3\\ -1, & \text{ w.p. ~} 1/6  \end{array} \right. .
$$   
The sparse random projection matrix has similar guarantees to the SRHT. However, in the case of fully dense data it is not as fast. In the future we aim to add SRHT functionality to {\sc Loco}$^{\text{lib}}$. The local ridge regression solver called by \cdalg is SDCA \citep{shalev:2013}.
We use the alternative step 4 in the algorithm as it allows for a more efficient aggregation of the random projection. Although Theorem \ref{thm_ridge} applies only for concatenating random projections, we find that summing also performs well in practice.

\paragraph{Competing methods.}
We compared against \cocoa \citep{jaggi2014communication} which is also implemented in Spark\footnote{Code available from: \url{https://github.com/gingsmith/cocoa/}} and ran it on the same cluster. We modified the local solver in \cocoa to ridge regression, also using SDCA.

\begin{figure*}[!tp]
\begin{centering}
\subfloat[\xspace]{
    \includegraphics[trim=0 40 0 0, clip, width=0.48\textwidth, keepaspectratio=true]{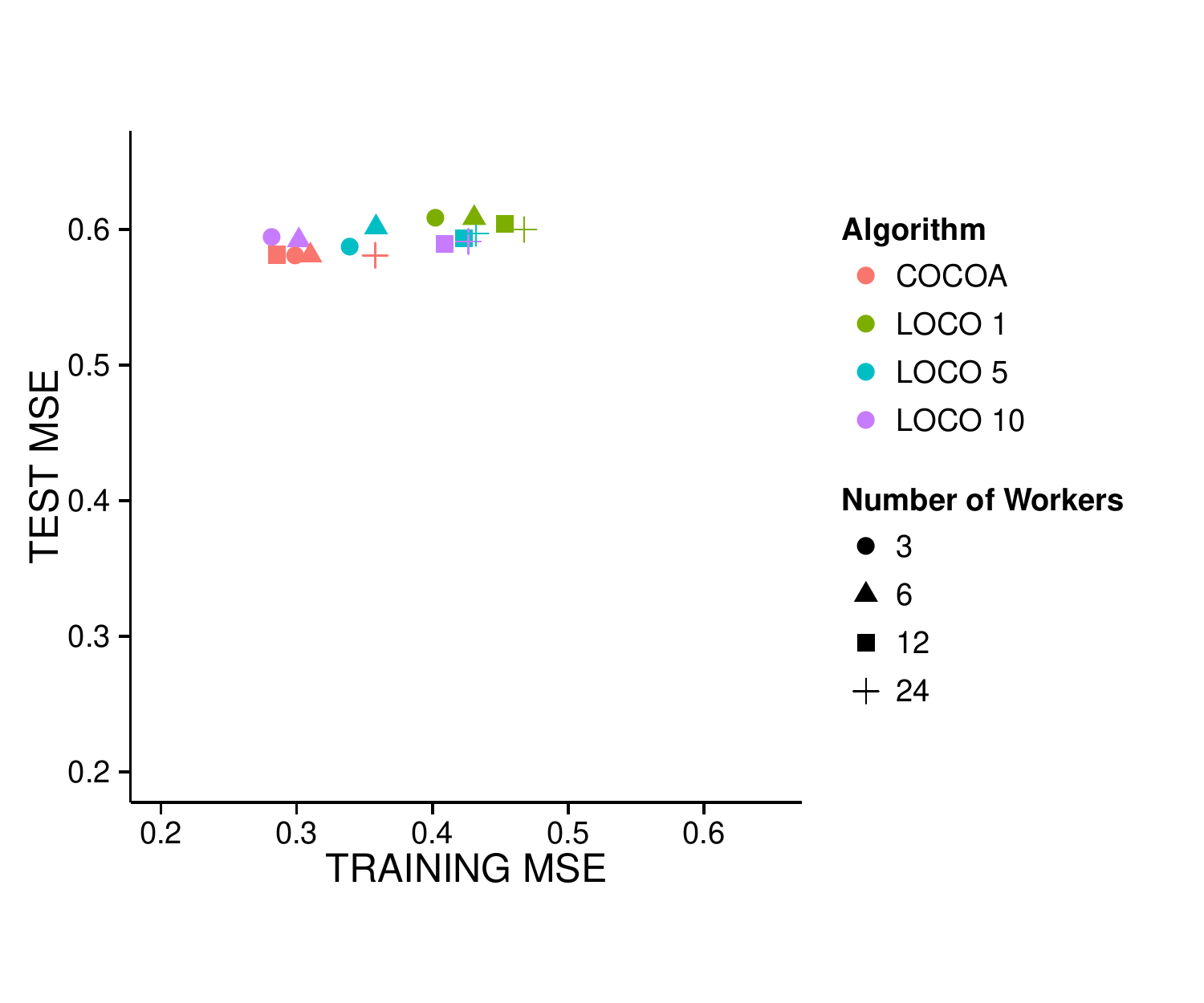}
    \label{fig:error_150K}
}
\hspace{-0.5cm}
\subfloat[\xspace]{
\vspace{1cm}
    \includegraphics[trim=0 51 0 0, clip, width=0.48\textwidth, keepaspectratio=true]{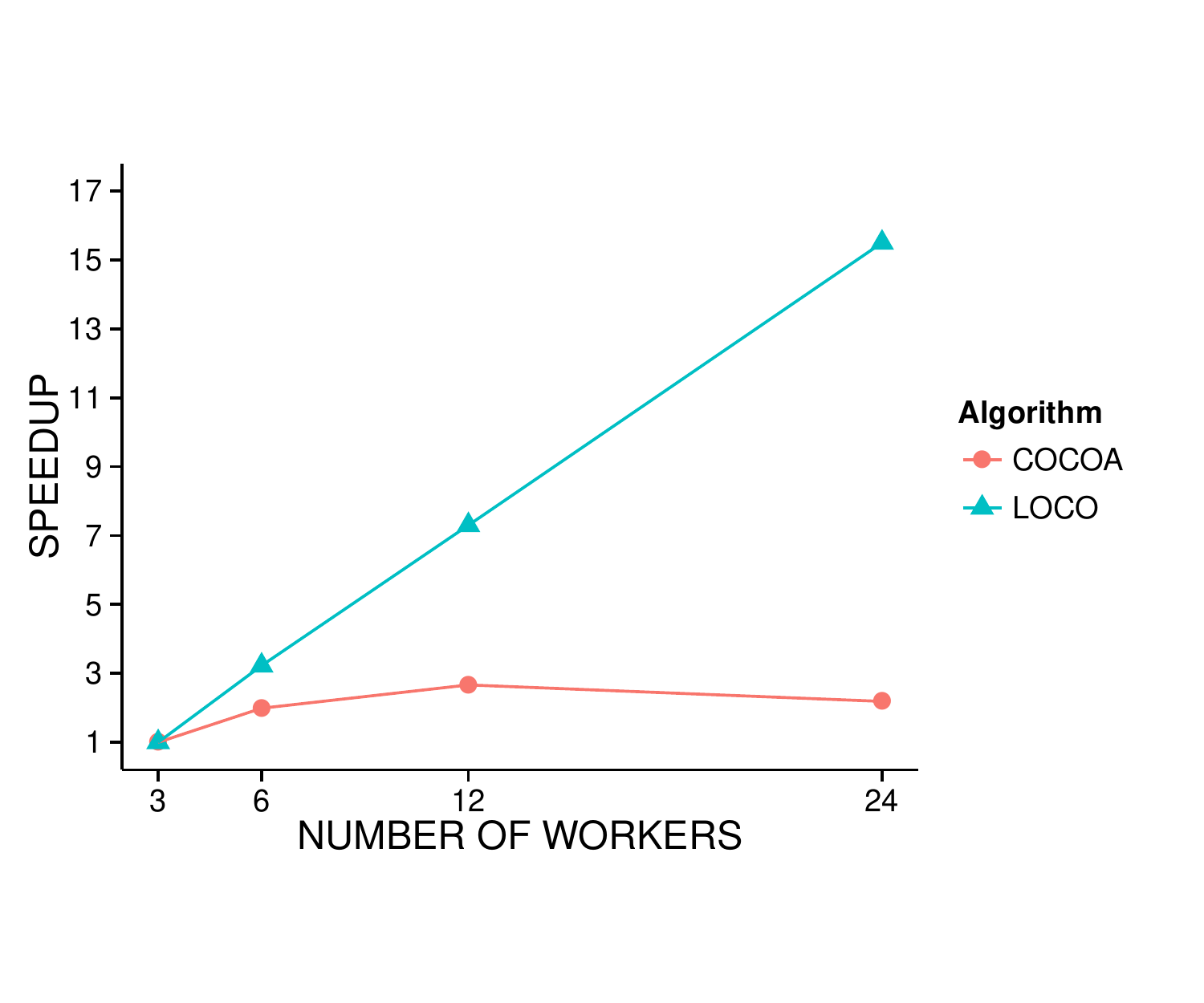}
    \label{fig:speedup_150K}
}
\vspace{-5pt}
\caption{\protect\subref{fig:error_150K} Normalized training and test error and \protect\subref{fig:speedup_150K} relative speedup for different number of workers when $\dims=150,000$. \label{fig:150K}}
\end{centering}
\vspace{-0.3cm}
\end{figure*}

\begin{figure*}[!tp]
\begin{centering}
\subfloat[\xspace]{
    \includegraphics[width=0.48\textwidth, keepaspectratio=true]{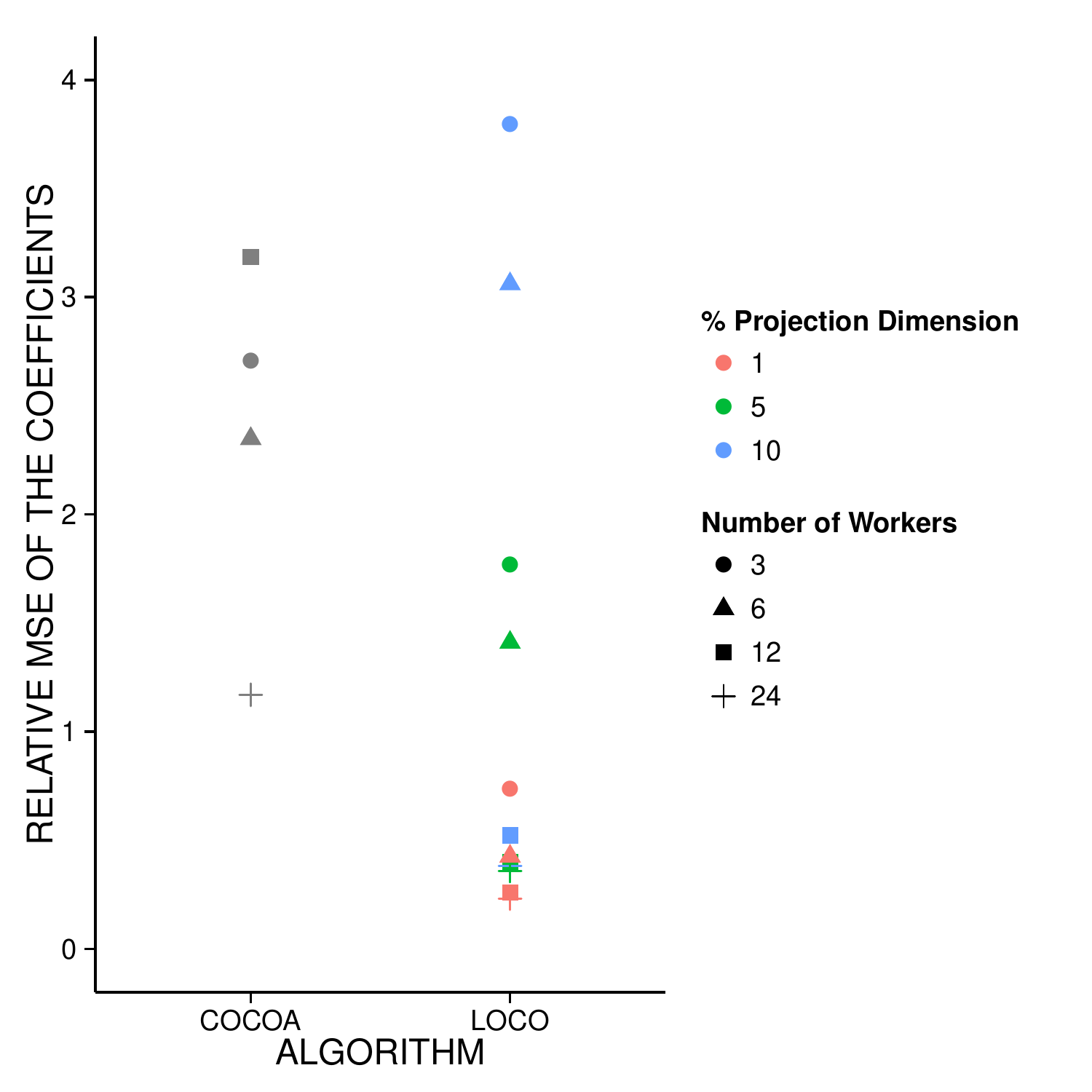}
    \label{fig:coeff_error_150K}
}
\hspace{-0.5cm}
\subfloat[\xspace]{
    \includegraphics[width=0.48\textwidth, keepaspectratio=true]{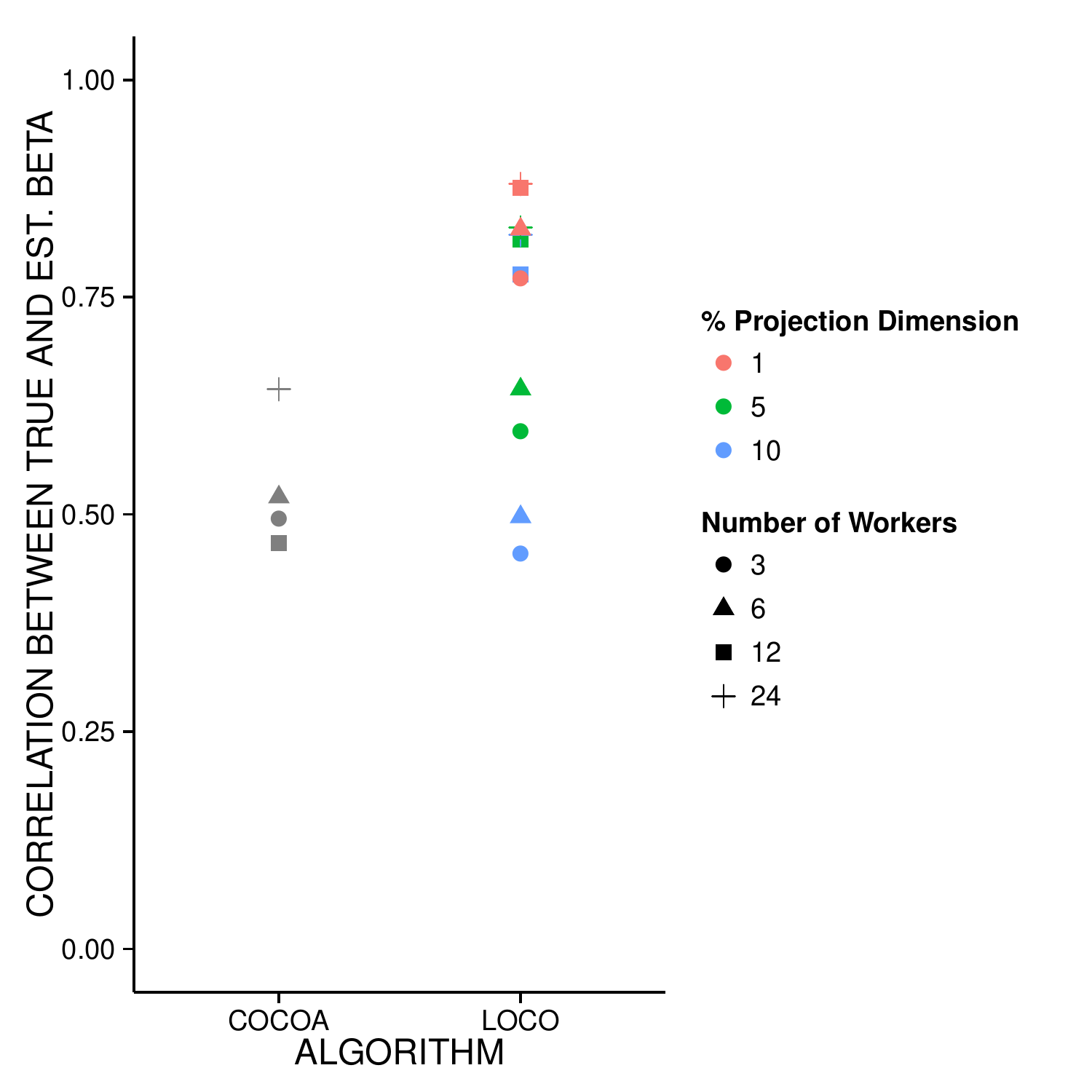}
    \label{fig:coeff_corr_150K}
}
\vspace{-5pt}
\caption{\protect\subref{fig:coeff_error_150K} Relative MSE and \protect\subref{fig:coeff_corr_150K} correlation between estimated and true coefficients when $\dims=150,000$. \label{fig:150K_coeffs}}
\end{centering}
\vspace{-0.3cm}
\end{figure*}

\paragraph{Simulated Gaussian data.}
We consider two large-scale simulated problems. The data is generated from a Gaussian distribution with mean zero and a block-wise covariance matrix such that the features are \emph{not} independent and the block structure is not known to the algorithm \emph{a priori}. Since the features in each block are correlated, this implies that the data is effectively low rank. That is, it has a number of large singular values equal to the number of blocks. The data simulation method is described in full detail in \S\ref{sec:datagen}.

\paragraph{Scenario one.} The first scenario we consider is $\samp = 4000$ and $\dims = 150,000$. This results in $600M$ non-zeros, translating into a file size of $10$GB. The test set has additional $\samp_{test}=1,000$ observations. The data has rank $r=150$, the within-block correlation is 0.7 and the signal to noise ratio is 1.

According to Theorem \ref{thm_ridge}, increasing $\dimsks$ will improve the prediction error. Since for different number of workers, $\dimsk=\dims/\blocks$ is different, the random projection dimension, $(\blocks - 1) \dimsks$ is chosen relative to $\dims - \dimsk$, i.e. $(\blocks - 1) \dimsks = \{0.01, 0.05, 0.1 \}\times (\dims - \dimsk)$. We label the corresponding results in Figure~\ref{fig:150K} with \loco 1, \loco 5 and \loco 10.

Figure~\ref{fig:150K}\subref{fig:error_150K} shows the normalized training and test MSE for $\blocks =\{3,6,12, 24\}$. As the size of the projection dimension $\dimsks$ increases, the performance of \loco improves and approaches that of \cocoa. The main difference between \cocoa and the different runs of \loco lies in the training error -- the differences between the test errors are very small. This suggests that a small projection dimension might suffice if the performance on unseen test data is of primary importance.

Figure~\ref{fig:150K}\subref{fig:speedup_150K} compares the relative speedup for increasing $\blocks$ for \loco and \cocoa , averaged over 5 trials. \cocoa exhibits near-linear speedup for up to 12 workers but as more workers are added, overall running time increases due to communication overhead and causes a relative slowdown. In contrast, \loco exhibits better-than-linear speedup between $3$ and $24$ workers as the size of the communicated matrices and the dimensionality of the local optimization problems decreases.

Figure~\ref{fig:150K_coeffs} shows the \subref{fig:coeff_error_150K} relative MSE and \subref{fig:coeff_corr_150K} correlation between the true coefficients and the coefficients returned by \loco and \cocoa. These figures show that \loco is able to obtain good estimates of the true coefficients. Contrasting these figures with Figure~\ref{fig:150K}\subref{fig:error_150K} suggest an inverse relationship between accuracy of estimating coefficients and prediction performance. This can be explained by the fact that since the data is low rank the difference between the solutions may lie in the null space of $\Xt$ which does not adversely affect the accuracy of the estimated responses.

\paragraph{Scenario two.} The second scenario we consider is $\samp=8,000$, $\dims = 500,000$ and $r=500$. Since the data is fully dense, there are 4 \emph{billion} non-zeros\footnote{Comparable in number with the experiment size of \citet{Richtarik:2013} and \citet{Peng:2013} with the key difference that we do not impose block sparsity in the data like \citet{Richtarik:2013} and we do not simply sample $\Xt$ from $\N(0,1)$ like \citet{Peng:2013}.}. Now the size of the data starts to become impractical for a single machine (training data is $\geq 64$GB) and the distributed nature of \loco is advantageous. We compare the performance of \loco against \cocoa for $K=\{50,100,200\}$.

Figure~\ref{fig:500K}\subref{fig:error_500K} shows the normalized mean-squared prediction error achieved by \loco, using $(\blocks - 1) \dimsks = \lbrace 0.01, 0.02 \rbrace \times (\dims - \dimsk)$, and \cocoa. \loco is again able to achieve good test performance, comparable to \cocoa. Figure~\ref{fig:500K}\subref{fig:speedup_500K} shows that \loco obtains a $1.5\times$ speedup when increasing from $50$ to $200$ workers whereas the run time of \cocoa increases by a factor larger than $3$, resulting in a speedup of $0.32$.

The reason for the relatively smaller speedup of \loco compared with the smaller scale experiment is that the relative reduction in local dimensionality from increasing $\blocks$ get smaller for larger $\blocks$ (assuming the projection dimension is fixed proportionally to $\dims - \dimsk$).

In scenario one, for \loco 1, the local problems are largest when $\blocks=3$. This corresponds to a local dimensionality of $\dimsk + (\blocks-1)\dimsks = 51000$. Using four times as many workers, $\blocks=12$, the local dimensionality is $13875$. This represents a decrease in local problem size of more than $72\%$ which explains the linear speedup. In scenario two, the largest local problems have size $14900$ when $\blocks=50$. When $\blocks=200$, the local dimensionality is $7475$. This represents a decrease in local dimensionality of $50\%$ for a four-fold increase in $\blocks$, explaining the smaller speedup.

Figure~\ref{fig:500K_coeffs} shows the \subref{fig:coeff_error_500K} relative MSE and \subref{fig:coeff_corr_500K} correlation between the true coefficients and the coefficients returned by each method. Both \loco and \cocoa are able to estimate coefficients which are close to the true ones.

In summary, what these results on simulated data show is that as the number of machines increases, \loco is often able to achieve significant speedup over \cocoa at the expense of a small loss of prediction accuracy. Although the differences in training error are more noticeable, these may be neglected if generalization performance is of primary interest.

\begin{figure*}[!tp]
\begin{centering}
\subfloat[\xspace]{
    \includegraphics[trim=0 32 0 0, clip, width=0.48\textwidth, keepaspectratio=true]{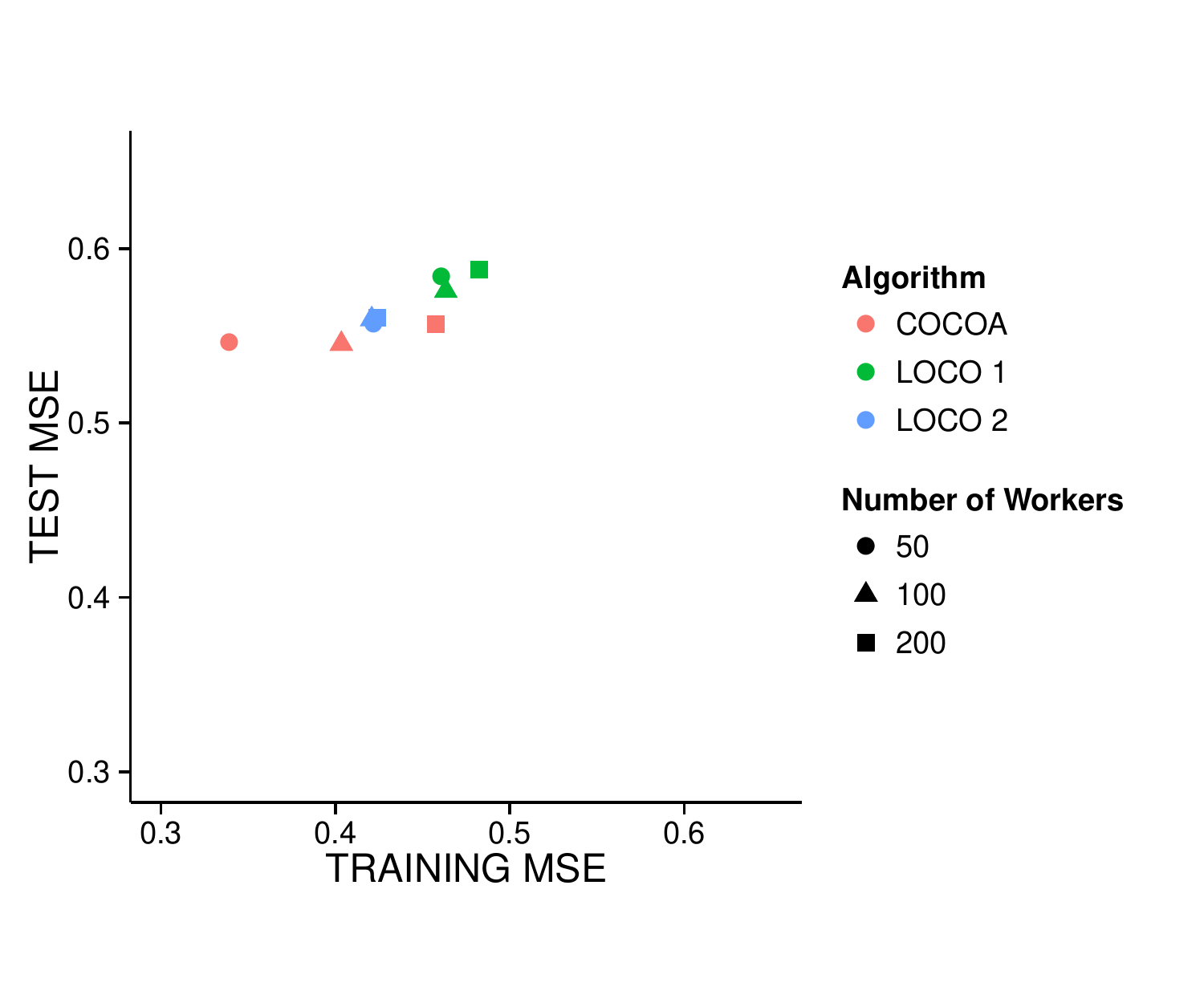}
    \label{fig:error_500K}
}
\hspace{-0.5cm}
\subfloat[\xspace]{
    \includegraphics[width=0.48\textwidth, keepaspectratio=true]{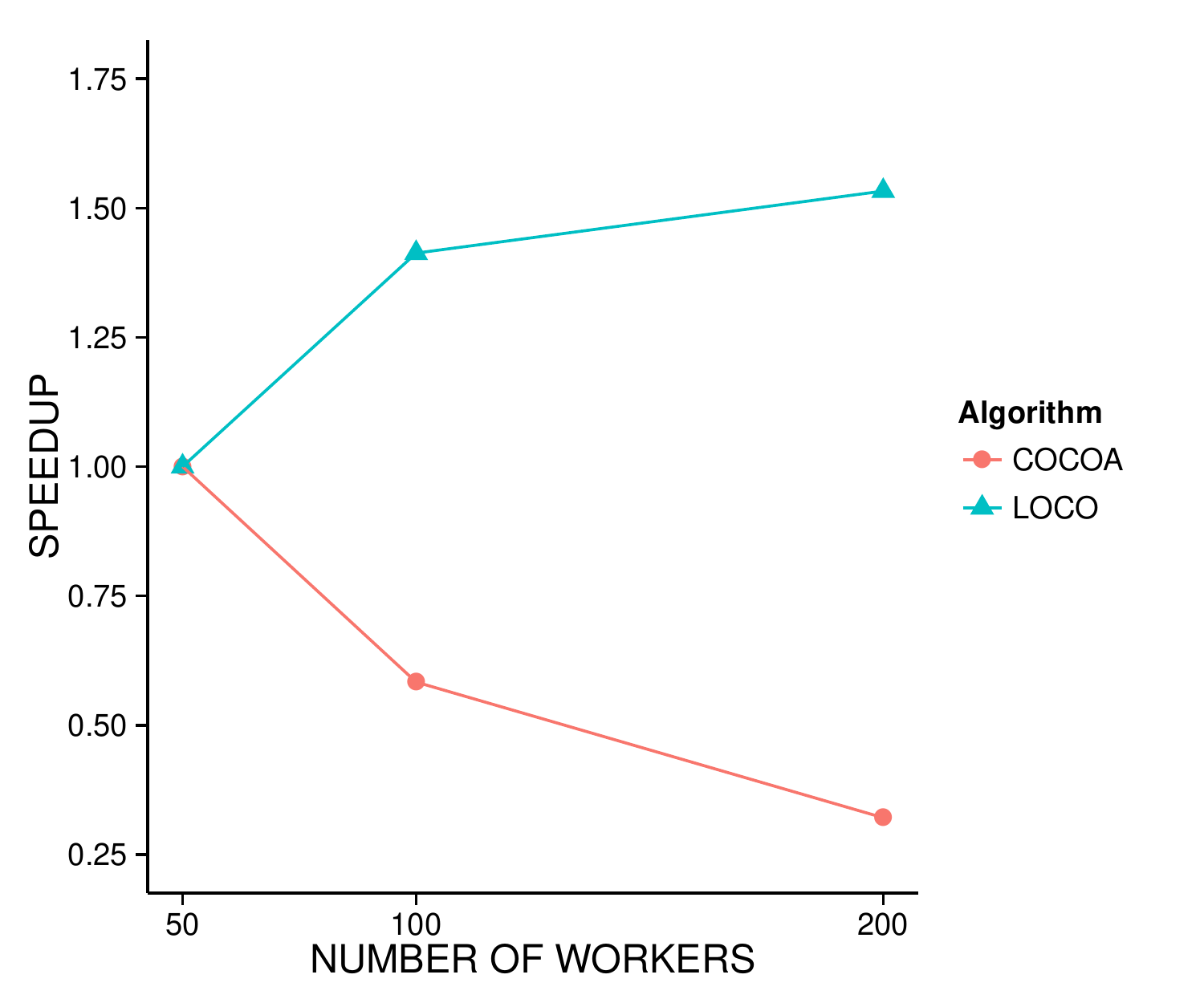}
    \label{fig:speedup_500K}
}
\vspace{-5pt}
\caption{\protect\subref{fig:error_500K} Training and test error and \protect\subref{fig:speedup_500K} relative speedup for different number of workers when $\dims=500,000$. \label{fig:500K}}
\end{centering}
\vspace{-0.3cm}
\end{figure*}

\begin{figure*}[!tp]
\begin{centering}
\subfloat[\xspace]{
    \includegraphics[width=0.48\textwidth, keepaspectratio=true]{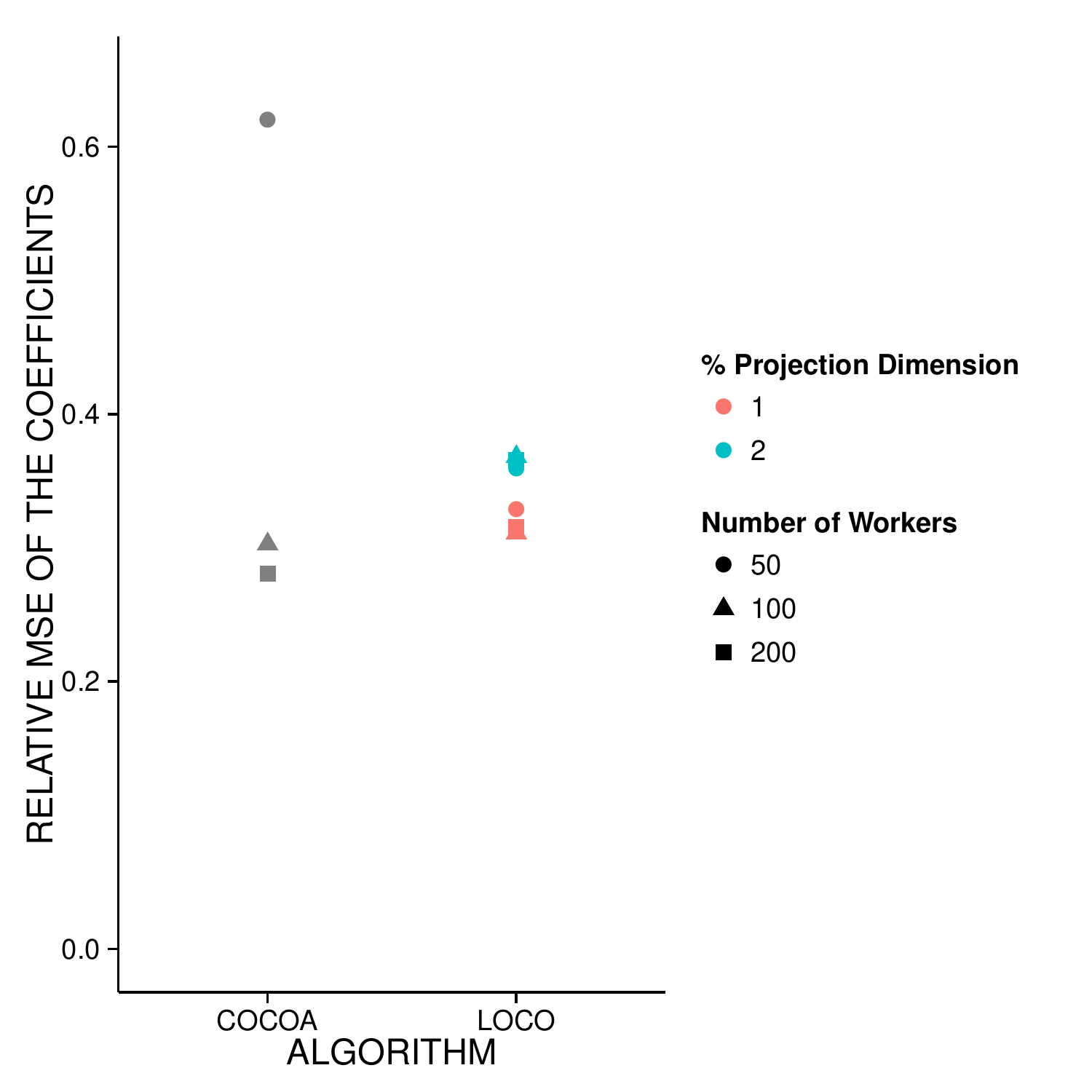}
    \label{fig:coeff_error_500K}
}
\hspace{-0.5cm}
\subfloat[\xspace]{
    \includegraphics[width=0.48\textwidth, keepaspectratio=true]{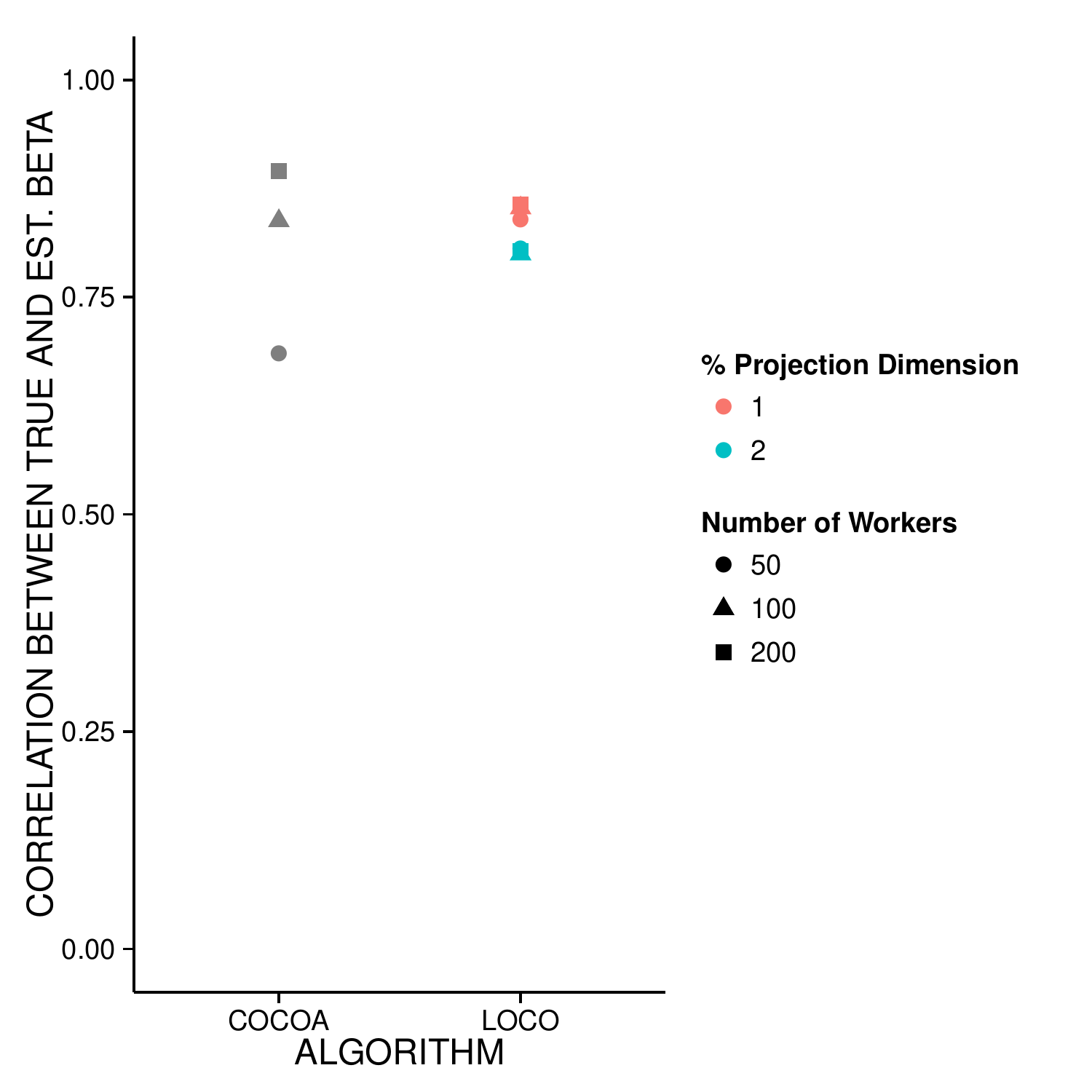}
    \label{fig:coeff_corr_500K}
}
\vspace{-10pt}
\caption{\protect\subref{fig:coeff_error_500K} Relative MSE and \protect\subref{fig:coeff_corr_500K} correlation between estimated and true coefficients when $\dims=500,000$. \label{fig:500K_coeffs}}
\end{centering}
\vspace{-0.3cm}
\end{figure*}

\paragraph{Climate data.} Finally, we present an application to a real-world problem in climate modeling.  The data we consider is part of the CMIP5 climate modeling ensemble, specifically the data are taken from control simulations of the GISS global circulation model \citep{schmidt2014configuration}. We try a simple forecast of  the global temperature  based on the temperature pattern observed a month earlier.  This allows to quantify which anomalies in the temperature pattern are persistent over time-scales of a month and which anomalies in the temperature disappear on faster time-scales.   We pick as response $\y$ here the global average temperature in February (results are very similar for other months). The $\dims = 10368$ features are the January temperatures at 10368 grid points spread across the globe.
The model simulates the climate for a range of 531 years and we use the output from two control simulation runs. The data set is split into training (80\%) and test set (20\%), resulting in $n_{\text{train}} = 849$ and $n_{\text{test}} = 213$.

In Figure~\ref{fig:beta_comparison} we compare the estimated coefficients for five methods in addition to the full solution. Three of these methods apply to the non-distributed setting and for the distributed setting we show the results of \loco and \cocoa. In Figure~\ref{fig:beta_comparison}
\begin{enumerate}
	\item[\subref{fig:beta_full}] shows the coefficients estimated in the non-distributed setting with SDCA. 
	\item[\subref{fig:beta_naive_diag}] shows the coefficients returned by the naive single-machine approximation 
	$$
	\solnest^{\text{diag}} = \text{diag}(\Xt\tr\Xt)^{-1}\Xt\tr\y ,
	$$ 
	which is equivalent to assuming independence between the features.
	\item[\subref{fig:beta_proj}] shows the coefficients that are returned when the dimensionality of the design matrix is first compressed with a random projection prior to estimating the coefficients using SDCA in this low-dimensional space and then projected back to the original space.
	\item[\subref{fig:beta_rowcompression}] shows the coefficients returned as a result of compressing the rows of $\Xt$ with a random projection to $\subsamp = \samp/2$ prior to performing ridge regression.
	\item[\subref{fig:beta_loco}] shows the coefficients returned by \loco, distributed over 4 workers, compressing each worker's raw features ($\dimsk = 2592$) to $10\%$ of the dimensionality, i.e.  $\dimsks = 260$ and concatenating these representations.
	\item[\subref{fig:beta_cocoa}] shows the coefficients returned by CoCoA.
\end{enumerate}

The coefficients returned by \loco are similar to the optimal non-distributed solution. This behaviour is expected as a consequence of Theorem \ref{thm_ridge}. On the other hand, the up-projected coefficients are a poor approximation to the optimal solution which justifies our distributed approach to ridge regression over standard dimensionality reduction approaches. 
The coefficients returned by \cocoa are also similar to the optimal non-distributed solution. 

The diagonal approximation obtains a large $MSE$ which is expected due to the important correlations between the features which are neglected in this approach. Lastly, due to the large ratio between $\dims$ and $\samp$, for the row-compression approach both the $MSE$ and the approximation quality of the coefficients suffer due to reducing the effective sample size.

In this application, the regression coefficients have a clear physical interpretation. The regression coefficients in panel (a) of Figure~\ref{fig:beta_comparison} show a near-optimal ridge regression solution computed on a single machine. The regression coefficients are essentially 0 across the oceans, showing that any deviation of sea surface temperatures is not relevant for persistent global temperature anomalies in the winter months of the northern hemisphere. The main contribution stems from large regression coefficients over the landmasses of the northern hemisphere which show a large variability of temperature in these winter months due to the possible influx of cooler arctic air and the regression shows that these anomalies are persistent on a monthly time-scale and allow to forecast the global temperature anomaly a months later in February. 
    Ensuring the estimated coefficients returned by \loco are close to the optimal coefficients is in applications like this at least as important as obtaining a low prediction error.

\begin{figure*}[!tp]
\begin{centering}

\subfloat[Single machine: Full solution ($MSE = 0.5756$)]{
    \includegraphics[trim=58 73 0 50, clip, width=0.45\textwidth, keepaspectratio=true]{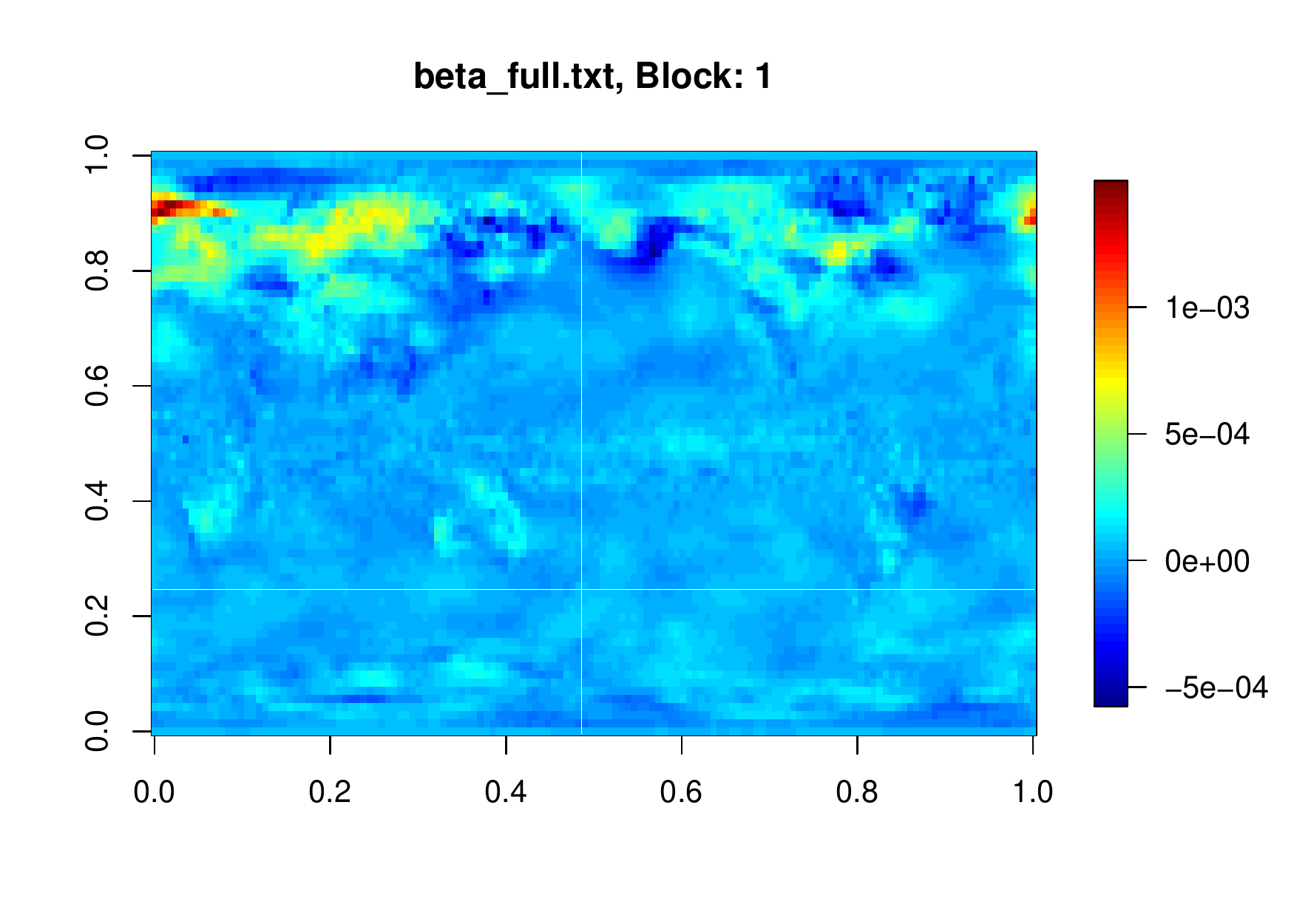}
    \label{fig:beta_full}
}
\hspace{0.5cm}
\subfloat[Single machine: Diagonal approximation ($MSE = 53956.4377$)]{
    \includegraphics[trim=58 73 0 50, clip, width=.45\textwidth, keepaspectratio=true]{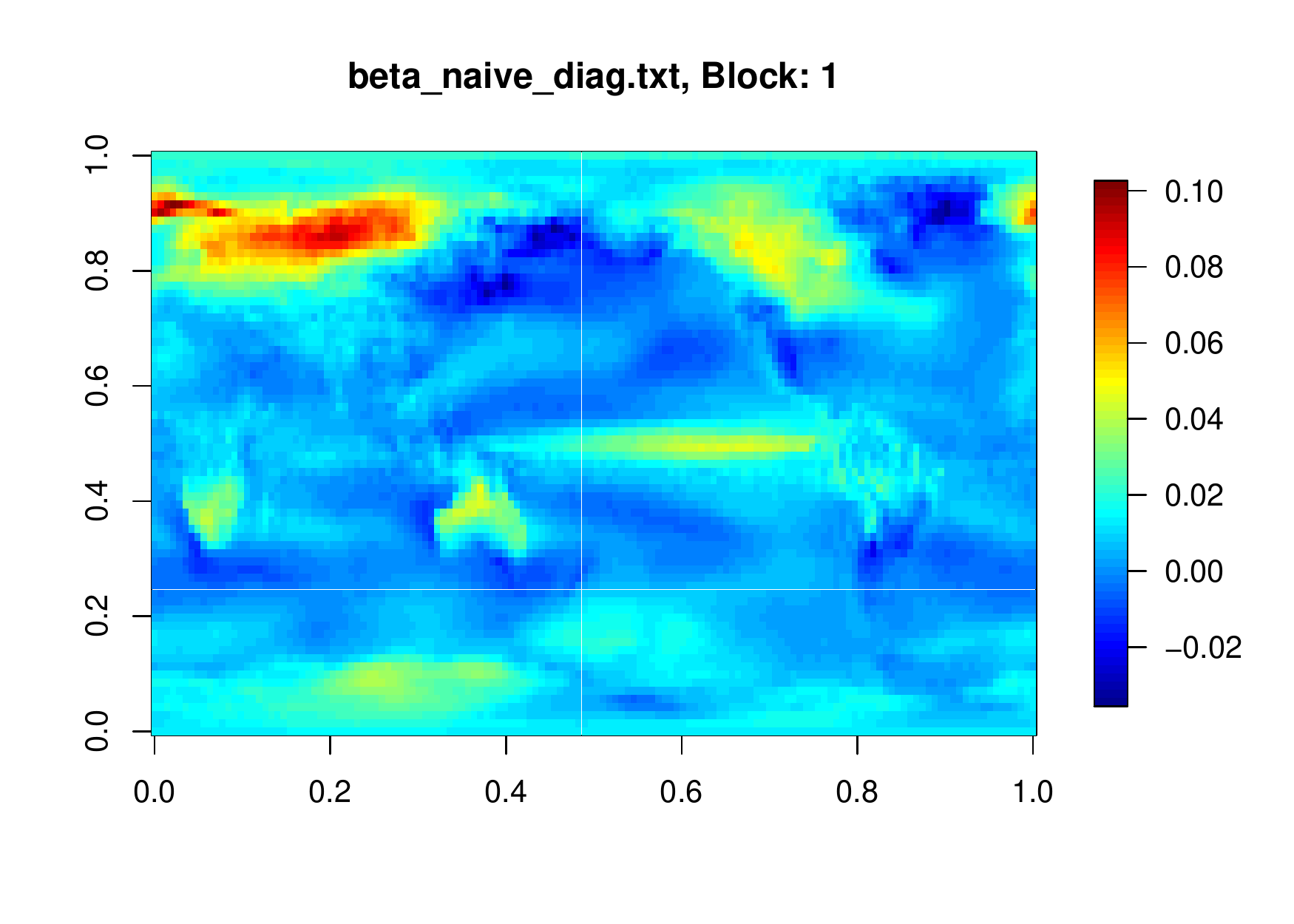}
    \label{fig:beta_naive_diag}
}

\subfloat[Single machine: Column-wise compression ($MSE = 0.5661$)]{
    \includegraphics[trim=58 73 0 50, clip, width=0.45\textwidth, keepaspectratio=true]{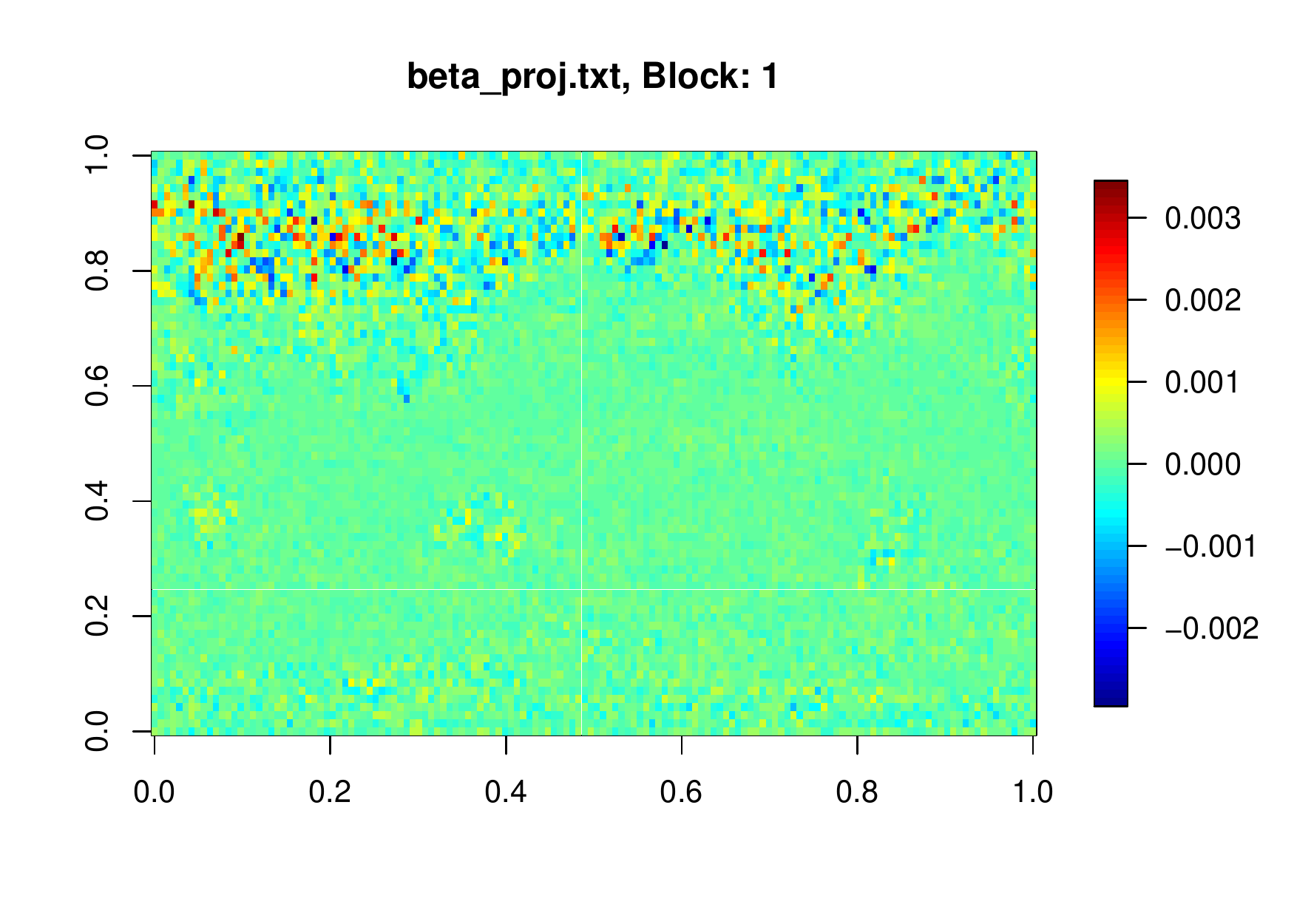}
    \label{fig:beta_proj}
}
\hspace{0.5cm}
\subfloat[Single machine: Row-wise compression ($MSE = 0.9977$)]{
    \includegraphics[trim=58 73 0 50, clip, width=0.45\textwidth, keepaspectratio=true]{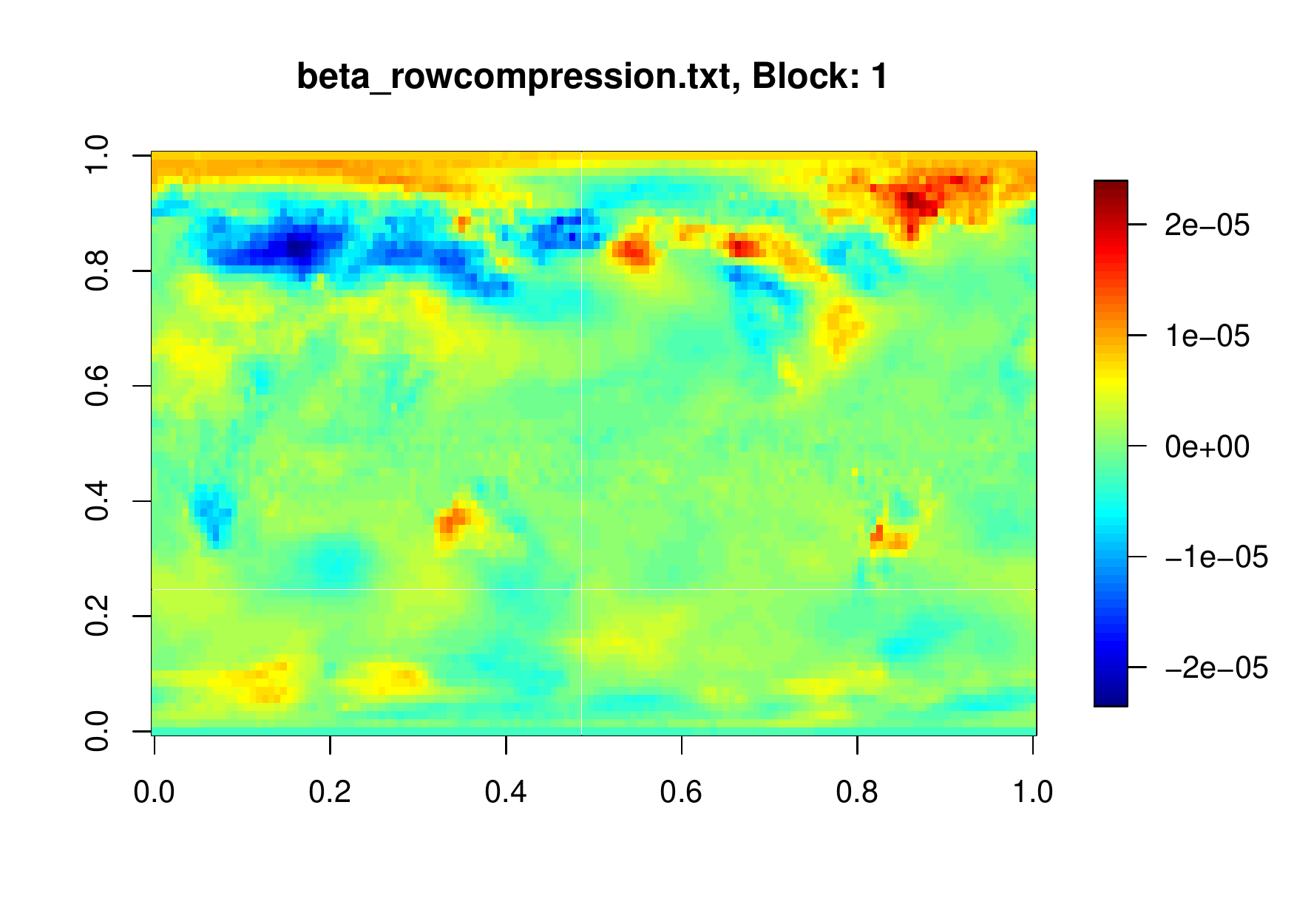}
    \label{fig:beta_rowcompression}
}

\subfloat[Distributed setting: \loco  ($MSE = 0.5782$)]{
    \includegraphics[trim=58 73 0 50, clip, width=0.45\textwidth, keepaspectratio=true]{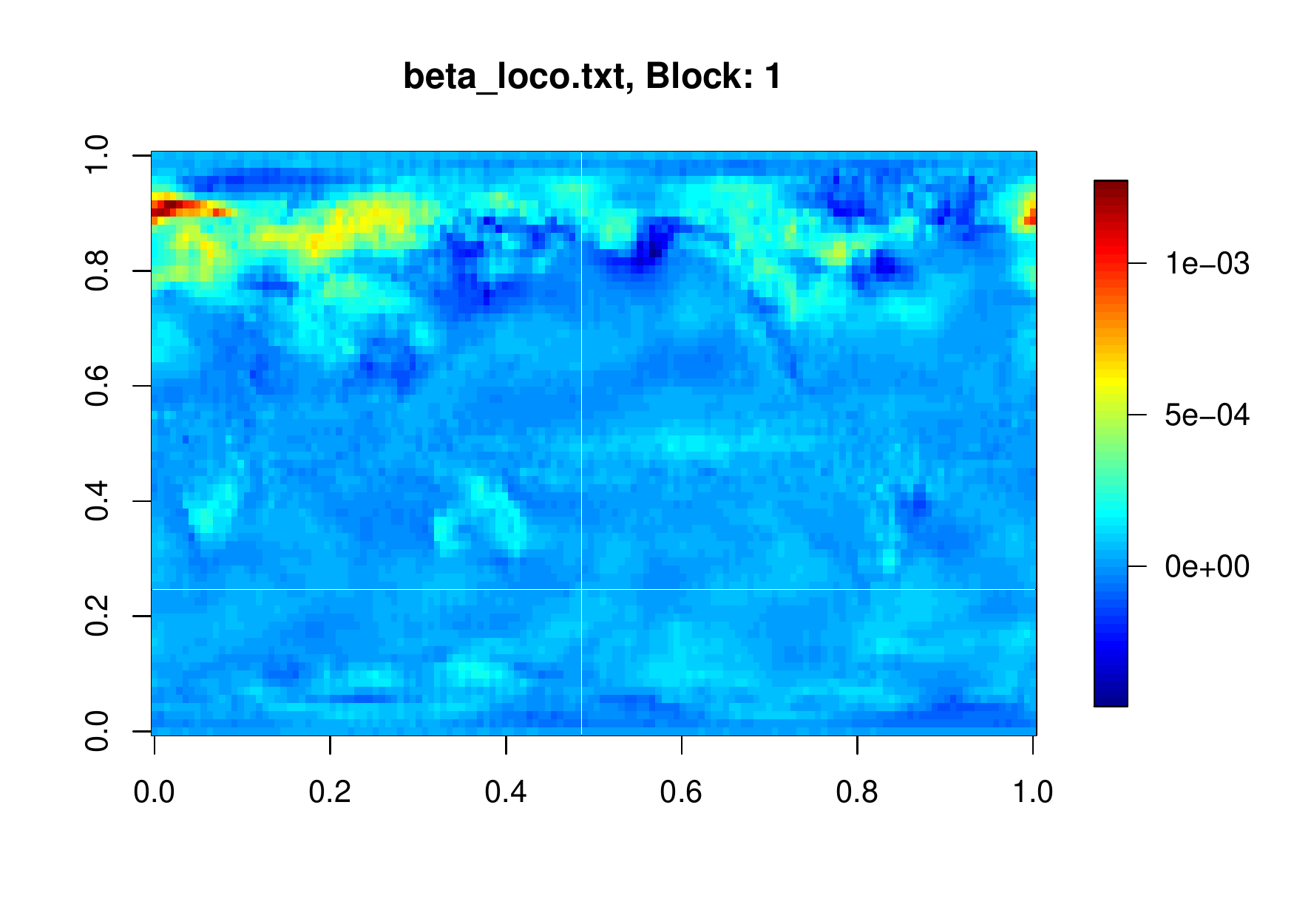}
    \label{fig:beta_loco}
}
\hspace{0.5cm}
\subfloat[Distributed setting: \cocoa  ($MSE = 0.5795$)]{
    \includegraphics[trim=58 73 0 50, clip, width=0.45\textwidth, keepaspectratio=true]{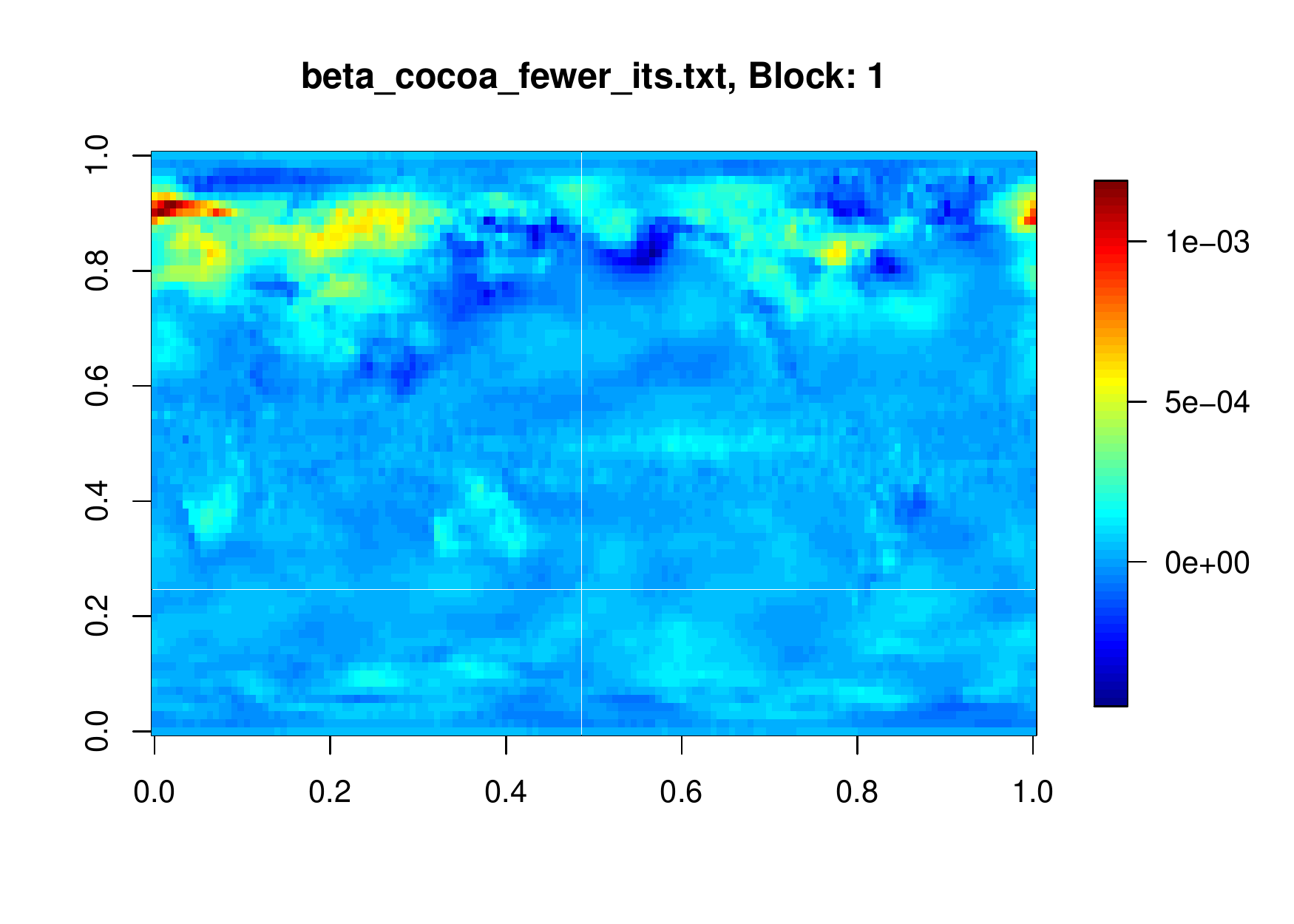}
    \label{fig:beta_cocoa}
}

\caption{Climate data: Comparison of different methods which return coefficients lying in the original space. The regression coefficients are shown as maps with the prime median (passing through London) corresponding to the left and right edge of the plot. The Pacific Ocean is occupying the centre of each map. \label{fig:beta_comparison}}
\end{centering}
\vspace{-0.3cm}
\end{figure*}

\section{Discussion}
\vspace{-0.2cm}
In this work we have presented \loco, a simple algorithm for distributed ridge regression -- requiring minimal communication and no synchronization -- based on random projections. We have shown theoretically and empirically that \loco achieves small additional error compared with the optimal ridge regression solution. It obtains significant speedups with the number of workers without making any additional assumptions about sparsity in the data. If the data is very sparse, we expect to see additional performance gains with a sparse random projection. 

	\loco is useful in settings where physical meaning can be assigned to estimated coefficients and it is therefore important to estimate coefficients in the original data space. In such cases, as illustrated in the experiments presented on climate model data, \loco is able to preserve structure in the estimated coefficients which is lost when performing standard dimensionality reduction. Although currently our results are specific for ridge regression, we expect that the same principles can be generalized to a larger class of estimation problems. 

As mentioned in the introduction, distributed optimization -- where no single worker sees all of the data -- is a natural paradigm when preserving privacy is required. Additionally, the class of J-L projections that we use have been shown to preserve differential privacy \citep{Blocki:2012}. We aim to explore the connection between \loco and privacy aware learning.

Finally, \citet{zhang:2013inf} have recently established bounds on the minimum amount of communication necessary for a distributed estimation task to achieve minimax optimal risk. It would be interesting investigate how \loco fits into this framework since the distribution strategy of \loco (across features rather than rows) differs from most commonly analysed methods.



\acks{We acknowledge the World Climate Research Programme's Working
  Group on Coupled Modelling, which is responsible for CMIP, and we
  thank Reto Knutti and Jan Sedlacek from the Climate Physics group at ETH Zurich
  for producing and making available their model output and for their kind help with 
  the preparation of the data. For CMIP the
  U.S. Department of Energy's Program for Climate Model Diagnosis and
  Intercomparison provides coordinating support and led development of
  software infrastructure in partnership with the Global Organization
  for Earth System Science Portals. 

We would also like to thank Martin Jaggi for valuable discussions on optimization, Stefan
Deml for contributing to the implementation of the software and Rok Roskar for help with Spark. }


\bibliographystyle{plain}
\bibliographystyle{abbrv}
\bibliography{loco_jmlr}
\newpage
\appendix
\noindent{\Large{\textbf{Supplementary Information for {\sc Loco}: Distributing Ridge Regression with Random Projections} }}
\newcounter{si-sec}
\renewcommand{\thesection}{SI.\arabic{si-sec}}


Here we collect supplementary technical details, empirical results and discussion which support the results presented in the main text.

\addtocounter{si-sec}{1}

\section{Data generation} \label{sec:datagen}
Typically parallel optimization methods are evaluated on extremely sparse datasets \citep{hogwild,Duchi:2013} or uncorrelated simulated data \citep{Richtarik:2013,agarwal:2011,Peng:2013} which fulfils the types of assumptions on sparsity or low-correlations between features necessary to obtain theoretical results. Since we do not make these assumptions, we aim to show that \loco is robust to correlations between features which can be accounted for using random projections.

We generate data which have a blockwise correlation structure. Within each of the $r=1,\ldots,R$ blocks, the correlation between variables is given by $\sigma_r$. We can do this in an efficient way by constructing a symmetric matrix $\boldSigma_r \in\R^{(\dims/R) \times (\dims/R)}$ with diagonal elements $1$ and off-diagonal elements $\sigma_r$. This construction allows us to specify a different covariance matrix within each block.
We decompose $\boldSigma_r =\Lt_r\Lt_r\tr$ using the Cholesky decomposition where $\Lt_r$ is an upper triangular matrix. 

Now, the data is generated in the following way: 
\begin{enumerate}
\item	First, a $\dims$-dimensional standard Gaussian vectors is sampled according to
$$
\z\in\R^{\dims}\sim\N(0,\It_{\dims}).
$$
\item Now, we construct $\x\in\R^{\dims} = \z \sq{\Lt_1,\ldots,\Lt_R}$ so that 
$$
\bE\sq{\x\x\tr} =
\left[
\begin{array}{ccc}
\boldSigma_1 &  & \mathbf{0} \\
 & \ddots &  \\	
\mathbf{0} &  & \boldSigma_r
\end{array} 
\right]
.
$$

This ensures that the features are correlated within each block but uncorrelated between blocks. Although the resulting matrix, $\Xt\in\R^{\samp\times\dims}$ whose rows consist of samples $\x_i\tr$, $i=1,\ldots,\samp$ is nominally full rank, the size of the gap between the $R$ and $R+1$ singular values depend on the chosen values of $\sigma_r$ and $R$.

\item We sample the true regression vector in each block according to $\soln_r^{*}\in\R^{\dims/R}\sim\N(\mu_r,0.5\It_{\dims/R})$ where each $\mu_r$ is sampled uniformly at random from the integers $\{-10,\ldots,-1,1,\ldots, 10 \}$ without replacement. We construct the full coefficient vector by concatenating $\soln^* = \sq{\soln^*_1,\ldots,\soln^*_R}$.   

\item We generate the vector of responses as $\bar{\y}\in\R^\samp = \Xt \soln^*$. In order to control the signal to noise ratio we compute the size of the signal as $\sigma_s = \text{std}(\bar{\y})$ and add Gaussian noise to each response variable with variance proportional to $\sigma_s$, i.e.\ for a SNR of unity we set $\y = \bar{\y} + \sigma_s\epsilon$, where each element of $\epsilon\in\R^{\samp}$ is sampled i.i.d. from a standard Gaussian. 

\item Finally, we permute the columns of $\Xt$ and the entries in $\soln^*$ using the same random permutation so that the indices are no longer ordered according to block membership. 

\end{enumerate}

Using this routine, we can quickly generate very high dimensional and dense vectors $\x$ which have an interesting, effectively low-dimensional structure and dependencies between features.

\addtocounter{si-sec}{1}
\section{Proofs of main results}

\paragraph{Proof roadmap.} Since the modified ridge regression problem \eqref{eq:ridge_k} that each worker solves is convex, each worker will obtain a global minimizer to its own problem. In order to ensure a good solution to the global problem, we quantify the approximation error each worker incurs for the raw features with respect to the global solution since only these estimates are ultimately used in the solution $\solnest$ \loco returns. This is achieved by first bounding the difference between the risk of ridge regression in the original space $\risk(\Xt \solnrr)$ and the risk of ridge regression in the compressed space of worker $k$ $\risk(\feats_k \solnRP_k)$. Using this bound and the convexity of $f$, we will derive the final bound for the expected difference between the estimates.

Before presenting the proof of Theorem \ref{thm_ridge} we shall introduce a few necessary lemmata. Subsequently, we derive a similar bound for OLS regression.  An illustration of the bound in Theorem \ref{thm_ridge} can be found in Remark \ref{rem:discussion_bound} in \S \ref{sec:supp-rr}.

\subsection{Ridge Regression} \label{sec:supp-rr}
Recall that $\solnrr$ denotes the solution which results from solving the ridge regression problem using the original data and let $\solnrr_k = \solnRP_k $ be the solution to solving ridge regression in the compressed domain of worker $k$. 
Each of these parameter vectors can be partitioned into two parts - one containing the components corresponding to the raw features of interest while the other part contains either the remaining raw features not in $\dimset_k$ or the random features. This is clarified in the following definition. 
\begin{defn} \label{def:betas}
Assuming without loss of generality that the problem is permuted so that the raw features of worker $k$'s problem are the first $\dimsk$ columns of $\At$ and $\feats_k$ we have 
\begin{enumerate}[(a.)]
\item the ridge estimate in the original space,
$\solnrr = \sq{\solnrr_\text{raw}; \solnrr_{\minusk}},$ and
\item the ridge estimate returned by worker $k$, 
$\solnrr_k = \sq{\solnrr_{k,\text{raw}}; \solnrr_{k, \text{rp}}}.$
\end{enumerate} 
\end{defn} 
In the following we will show that $\solnrr_{k,\text{raw}}$ is approximately equal to the corresponding coordinates of the coefficient vector, \emph{(a.)} in the original space, i.e.
$$
\solnrr_{k,\text{raw}} \approx \solnrr_\text{raw}.
$$
For each worker we need to bound difference between these components as \loco eventually constructs its estimate $\solnest$ by concatenating the estimates for the raw features of each worker.

The following lemma establishes a basic inequality containing the estimates worker $k$ returns for its raw features and the global ridge regression solution.
\begin{lem} \label{lem:workerk_soln}
Setting $\solnrr_{k, \text{raw}}$ as the first $\dimsk$ elements (i.e. those corresponding to the raw features) of the vector which minimizes the ridge problem of worker $k$
$$
\underset{\nrm{\soln_k}^2 \leq t}{\min} n^{-1} \nrm{\y - \feats_k \soln_k}^2 = 
\underset{\nrm{\soln_{\text{raw}}}^2 +\nrm{\soln_{k, \text{rp}}}^2 \leq t}{\min} n^{-1} \nrm{\y - \At_k \soln_{\text{raw}} - \tilde{\At}_k  \soln_{k, \text{rp}}}^2
$$
in a compressed space, we have
$$
\underset{\nrm{\solnrr_{k, \text{raw}}}^2 + \nrm{ \soln_{\minusk}}^2 \leq t}{\min} n^{-1} \nrm{\y - \At_k \solnrr_{k, \text{raw}} - \At_{\minusk}  \soln_{\minusk}}^2
\leq 
\underset{\nrm{\soln}^2 \leq t}{\min} n^{-1} \nrm{\y - \At \soln}^2 + \Delta
$$
where $\solnrr_{k, \text{raw}}$ is now fixed and the design matrices are standardized. $\Delta$ is the difference between the global objective and the objective of worker $k$ at their respective minimal values, i.e.
$$
\Delta = n^{-1} \nrm{\y - \feats_k \solnrr_k}^2 - n^{-1} \nrm{\y - \At \solnrr}^2.
$$
\end{lem}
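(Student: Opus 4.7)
The plan is to exhibit a single feasible point for the LHS minimization whose objective value reproduces the RHS, which by the definition of $\Delta$ equals $n^{-1}\nrm{\y - \feats_k\solnrr_k}^2$. The key structural observation is that, since the random features of worker $k$ are constructed from projections of the other blocks of raw features, one can write $\tilde{\At}_k = \At_{\minusk}\boldsymbol{\Pi}_{\minusk}$, where $\boldsymbol{\Pi}_{\minusk}$ is the block-diagonal matrix assembling the projections $\RP_{k'}$ for $k' \neq k$. Consequently, $\tilde{\At}_k \solnrr_{k,\text{rp}}$ already lies in the column span of $\At_{\minusk}$.

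Concretely, I would take the candidate $\soln^{(k)}_{\minusk} := \boldsymbol{\Pi}_{\minusk}\solnrr_{k,\text{rp}}$. Substituting it into the LHS objective gives
$$
n^{-1}\nrm{\y - \At_k\solnrr_{k,\text{raw}} - \At_{\minusk}\soln^{(k)}_{\minusk}}^2 \;=\; n^{-1}\nrm{\y - \feats_k\solnrr_k}^2 \;=\; n^{-1}\nrm{\y - \At\solnrr}^2 + \Delta,
$$
where the first equality uses $\At_{\minusk}\boldsymbol{\Pi}_{\minusk} = \tilde{\At}_k$ and the second is just the definition of $\Delta$. Provided $\soln^{(k)}_{\minusk}$ is feasible for the LHS problem, i.e. $\nrm{\solnrr_{k,\text{raw}}}^2 + \nrm{\soln^{(k)}_{\minusk}}^2 \leq t$, passing to the infimum over feasible $\soln_{\minusk}$ immediately delivers the desired inequality.

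The hard part is certifying this feasibility. Because the SRHT scaling $\sqrt{\dimsk/\dimsks}$ sits inside $\boldsymbol{\Pi}_{\minusk}$, one has $\nrm{\boldsymbol{\Pi}_{\minusk}} > 1$, so $\nrm{\soln^{(k)}_{\minusk}}$ may exceed $\nrm{\solnrr_{k,\text{rp}}}$ and the candidate may violate the LHS norm constraint even though $[\solnrr_{k,\text{raw}};\solnrr_{k,\text{rp}}]$ satisfies the worker's constraint. The natural route around this is to invoke the equivalence between the constrained and penalized forms of both ridge problems: for a common penalty $\lambda$, the values of $t$ induced by the KKT conditions of the LHS and worker-$k$ problems are coupled, and one can show that the relevant level $t$ for the LHS is precisely the one for which $\soln^{(k)}_{\minusk}$ is admissible. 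If that coupling cannot be pushed through directly, a fallback is to pick instead the least-norm preimage of $\tilde{\At}_k\solnrr_{k,\text{rp}}$ in the column space of $\At_{\minusk}$, whose size is controlled by the smallest nonzero singular value of $\boldsymbol{\Pi}_{\minusk}$ rather than by its operator norm; under the standardization assumption on the design, this quantity is essentially one. Once feasibility is settled, the string of identities above closes the argument.
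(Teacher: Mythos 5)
Your proof takes essentially the same route as the paper's. The paper also argues by comparing constrained minima through the column-space inclusion $\tilde{\At}_k = \At_{\minusk}\boldsymbol{\Pi}_{\minusk}$: it first asserts that, with the raw block fixed at $\solnrr_{k,\text{raw}}$, minimizing over $\soln_{\minusk}$ in the original space under the budget $t - \nrm{\solnrr_{k,\text{raw}}}^2$ does at least as well as minimizing over $\soln_{k,\text{rp}}$ in the compressed space under the same budget, and then rewrites the worker's minimum as the global minimum plus $\Delta$ using the definition of $\Delta$. Your single-feasible-point formulation is a streamlined version of exactly those two steps.

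The feasibility problem you isolate is genuine, and you should know that the paper does not resolve it either: it justifies the domination of the minima only by remarking that the compressed features lie in the column space of $\At_{\minusk}$, which accounts for the objective value but says nothing about the norm constraint. With the SRHT as defined, $\RP_{k'} = \sqrt{\dimsk/\dimsks}\,\Dt\Ht\St$ has orthonormal columns up to the factor $\sqrt{\dimsk/\dimsks}>1$, so $\nrm{\boldsymbol{\Pi}_{\minusk}\solnrr_{k,\text{rp}}}^2 = (\dimsk/\dimsks)\nrm{\solnrr_{k,\text{rp}}}^2$ and your candidate really can violate the constraint. Neither of your fallbacks closes this as sketched: the coupling of $t$ across the two problems is not available because the lemma fixes the same $t$ in both, and the least-norm preimage is only guaranteed to be no larger than $\nrm{\boldsymbol{\Pi}_{\minusk}\solnrr_{k,\text{rp}}}$, not than $\nrm{\solnrr_{k,\text{rp}}}$. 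So your proposal is as complete as the paper's own argument, and it has the merit of naming explicitly the step that the paper glosses over.
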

\begin{proof} 
For the same value of $t$ we have
\begin{eqnarray} 
\nonumber
\underset{\nrm{\soln}^2 \leq t}{\min} n^{-1} \nrm{\y - \At \soln}^2 &\leq & 
\underset{\nrm{\soln_k}^2 \leq t}{\min} n^{-1} \nrm{\y - \feats_k \soln_k}^2 \\
\small
\underset{\tiny \nrm{\soln_{\text{raw}}}^2 + \nrm{ \soln_{\minusk}}^2 \leq t}{\min} n^{-1} \nrm{\y - \At_k \soln_{\text{raw}} - \At_{\minusk}  \soln_{\minusk}}^2 &\leq & 
\small
\underset{\tiny \nrm{\soln_{\text{raw}}}^2 + \nrm{ \soln_{k, \text{rp}}}^2 \leq t}{\min} n^{-1} \nrm{\y - \At_k \soln_{\text{raw}} - \tilde{\At}_k  \soln_{k, \text{rp}}}^2
\nonumber\\
\label{analysis:ridge_ineq1}
\end{eqnarray}
where the inequality follows from the fact that the original space is larger such that $\y$ can be approximated better. In particular, we have more features in $  \At_{\minusk} $ than in $\tilde{\At}_k $ and the latter lies in the column space of the former.

Replacing $ \At_k \soln_{\text{raw}} $ with $\At_k \solnrr_{k, \text{raw}}$ on the left hand side in inequality \eqref{analysis:ridge_ineq1} where $\solnrr_{k, \text{raw}}$ is part of the solution to the right hand side we obtain
\begin{eqnarray}\nonumber
\underset{\nrm{\solnrr_{k, \text{raw}}}^2 + \nrm{ \soln_{\minusk}}^2 \leq t }{\min} n^{-1} \nrm{\y - \At_k \solnrr_{k, \text{raw}} - \At_{\minusk}  \soln_{\minusk}}^2 &\leq & 
\underset{\nrm{\soln_{\text{raw}}}^2 + \nrm{ \soln_{k, \text{rp}}}^2 \leq t}{\min} n^{-1} \nrm{\y - \At_k \soln_{\text{raw}} - \tilde{\At}_k  \soln_{k, \text{rp}}}^2
\end{eqnarray}
which holds due to the same argument as above.

Lastly, we rewrite the right hand side in terms of the original objective
\begin{eqnarray}
\nonumber
\small
\underset{\tiny \nrm{\solnrr_{k, \text{raw}}}^2 + \nrm{ \soln_{\minusk}}^2 \leq t }{\min} n^{-1} \nrm{\y - \At_k \solnrr_{k, \text{raw}} - \At_{\minusk}  \soln_{\minusk}}^2 &\leq &   
\small
\underset{\tiny \nrm{\soln_{\text{raw}}}^2 + \nrm{ \soln_{\minusk}}^2 \leq t }{\min} n^{-1} \nrm{\y - \At_k \soln_{\text{raw}} - \At_{\minusk}  \soln_{\minusk}}^2 + \Delta
\end{eqnarray} 
where $\Delta$ accounts for the difference, i.e.
\begin{equation} \label{eq:delta_explicit}
\Delta = \underset{\nrm{\soln_k}^2 \leq t}{\min} n^{-1} \nrm{\y - \feats_k \soln_k}^2 - \underset{\nrm{\soln}^2 \leq t}{\min} n^{-1} \nrm{\y - \At \soln}^2.
\end{equation}
\end{proof}

Recall that in {\bf step 4.} of \loco we construct the design matrix of worker $k$ by concatenating the random features from the remaining blocks. The following lemmas use ideas from \citep{lu:2013} to quantify the effect of the random projections.
First, Lemma \ref{cor:concatRF} establishes a bound on the spectral norm between the design matrix of worker $k$ and the global design matrix. We use this fact in Lemma \ref{cor:kernel_bound_concat} to upper bound the largest eigenvalue and lower bound the $r^{th}$-largest eigenvalue of the covariance matrix of the projected data in terms of the respective eigenvalues of the data covariance matrix. Recall that $r$ is the rank of $\Xt$.

\begin{lem}[Concatenating random features]\label{cor:concatRF}
Consider the singular value decomposition $\Xt = \Ut \Dt \Vt\tr$ where $\Ut \in \R^{\samp\times r}$ and $\Vt \in \R^{p\times r}$ have orthonormal columns and $\Dt \in \R^{r\times r}$ is diagonal; $r=\text{rank}(\At)$. In addition to the raw features, let $\feats_k \in \R^{n\times(\dimsk + (K-1)\dimsks)}$ contain random features which result from concatenating the $K-1$ random projections from the other workers. Furthermore, assume without loss of generality that the problem is permuted so that the raw features of worker $k$'s problem are the first $\dimsk$ columns of $\At$ and $\feats_k$.
Finally, let
$$
\Theta_C = \begin{bmatrix}
\It_\tau & 0 & \ldots  & 0 \\
0 & \RP_1 & 0   & \vdots\\ 
\vdots & \ldots & \ddots & 0\\ 
0 & \ldots & \ldots  & \RP_{K-1}\\ 
\end{bmatrix}
\in \R^{p\times(\dimsk + (K-1)\dimsks)}
$$
such that $\feats_k = \Xt\Theta_C.$

With probability at least $1-\br{\delta + \frac{\dims-\dimsk}{e^r}}$

$$
\nrm{\Vt\tr\Theta_C \Theta_C\tr\Vt - \Vt\tr\Vt} \leq \sqrt{\frac{c \log(2 r /\delta) r}{(\blocks-1)\dimsks}}.
$$
\end{lem}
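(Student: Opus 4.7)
The plan is to exploit the block-diagonal structure of $\Theta_C$ and reduce the statement to an SRHT subspace-embedding bound on the rank-$r$ row space of the non-raw part of $\Vt$. First I would split the rows of $\Vt$ according to block membership into a raw part $\Vt_k\in\R^{\dimsk\times r}$ (the rows indexed by worker $k$'s features) and the remainder $\Vt_{\minusk}\in\R^{(\dims-\dimsk)\times r}$. Let $\Theta_{\minusk}:=\diag(\RP_1,\ldots,\RP_{\blocks-1})\in\R^{(\dims-\dimsk)\times(\blocks-1)\dimsks}$ denote the concatenation of the $\blocks-1$ SRHT blocks. Using the block-diagonal form of $\Theta_C$, which has $\It_{\dimsk}$ on the raw block and $\Theta_{\minusk}$ on the rest, the raw part cancels and one obtains
$$
\Vt\tr\Theta_C\Theta_C\tr\Vt - \Vt\tr\Vt \;=\; \Vt_{\minusk}\tr\Theta_{\minusk}\Theta_{\minusk}\tr\Vt_{\minusk} - \Vt_{\minusk}\tr\Vt_{\minusk},
$$
so the statement reduces to controlling the spectral norm of the right-hand side.

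This is exactly a subspace-embedding statement for the rank-$\le r$ column space of $\Vt_{\minusk}$ under the structured sketch $\Theta_{\minusk}$, with total input dimension $\dims-\dimsk=(\blocks-1)\dimsk$ and total output dimension $(\blocks-1)\dimsks$. I would follow the standard two-step SRHT analysis (in the style of Tropp, 2011): (i) show that the randomized Hadamard preconditioning inside each block flattens the $\ell_\infty$ row norms of $\Vt_{\minusk}$ uniformly; (ii) apply a matrix Chernoff inequality to the uniform subsampling of the preconditioned rows. Combining these yields a spectral-norm deviation of order $\sqrt{c\, r \log(2r/\delta)/((\blocks-1)\dimsks)}$, which is the target rate.

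The hard part will be step (i): the preconditioner embedded in $\Theta_{\minusk}$ is block-diagonal (one $\mathbf{D}_{k'}\mathbf{H}_{k'}$ per block) rather than a single monolithic $\mathbf{DH}$ on all $(\blocks-1)\dimsk$ coordinates, so Tropp's incoherence lemma cannot be cited as a black box. My plan is to invoke that lemma on each block separately to obtain per-block incoherence with failure probability at most $\dimsk/e^r$, and then take a union bound across the $\blocks-1$ blocks; this contributes exactly the $(\blocks-1)\dimsk/e^r=(\dims-\dimsk)/e^r$ term appearing in the claimed probability. Conditional on the joint incoherence event, the subsampling step is a genuine uniform subsample of $(\blocks-1)\dimsks$ rows from a pool of $(\blocks-1)\dimsk$ preconditioned rows that share a common $\ell_\infty$ bound, so Tropp's matrix Chernoff argument applies verbatim to the aggregated sum and delivers the spectral-norm deviation with probability at least $1-\delta$. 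Union-bounding the incoherence event and the Chernoff event yields the overall failure probability $\delta+(\dims-\dimsk)/e^r$ and the stated bound.
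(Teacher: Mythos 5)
Your argument is essentially the paper's own proof: the same block decomposition reducing the claim to the non-raw rows $\Vt_{\minusk}$, followed by per-block Hadamard flattening with a union bound (yielding the $(\dims-\dimsk)/e^r$ term) and a matrix Chernoff bound on the aggregated subsampled sum, which the paper packages as Lemma~\ref{lem:concatenated_row} before invoking Lemma~\ref{lem:srht}. The only imprecision is describing the subsampling as a uniform draw of $(\blocks-1)\dimsks$ rows from the pooled $(\blocks-1)\dimsk$ preconditioned rows --- it is in fact a stratified sample of $\dimsks$ rows per block --- but that is precisely the case the concatenated row-sampling lemma handles, so the conclusion stands.
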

\begin{proof}
Let $\Vt_k$ contain the first $\dimsk$ rows of $\Vt$ and let $\Vt_{k'}$ contain the rows of $\Vt$ which are multiplied by the rows of $\Theta_C$ containing $\RP_{k'}$. Decompose the matrix products as follows
$$ 
\Vt\tr\Vt =   \Vt_k\tr\Vt_k + \sum_{k'\neq k}  \Vt_{k'}\tr\Vt_{k'} 
\; \; \; \text{ and } \; \; \;
\Vt\tr\Theta_C \Theta_C\tr\Vt =  \Vt_k\tr\Vt_k+ \sum_{k'\neq k}  \tilde{\Vt}_{k'}\tr\tilde{\Vt}_{k'}.
$$
With  $\tilde{\Vt}_{k'}\tr = \Vt_{k'}\tr \RP_{k'}$ we have
$$
\nrm{\Vt\tr\Theta_C \Theta_C\tr\Vt - \Vt\tr\Vt} = \nrm{\sum_{k'\neq k}\br{\Vt_{k'}\tr \RP_{k'} \RP_{k'}\tr \Vt_{k'} - \Vt_{k'}\tr\Vt_{k'}} }  
$$
Since $\Theta_C$ is an orthogonal matrix, from Lemma 3.3 in \citep{Tropp:2010uo} and Lemma \ref{lem:concatenated_row}, concatenating $(K-1)$ independent SRHTs from $\dimsk$ to $\dimsks$ is equivalent to applying a single SRHT from $\dims-\dimsk$ to $(K-1)\dimsks$. 
Therefore we can simply apply Lemma  \ref{lem:srht} to the above to obtain the result.
\end{proof}

\begin{lem}\label{cor:kernel_bound_concat}
Let $\feats_k \in \R^{n\times(\dimsk + (K-1)\dimsks)}$ be as defined above.
For $\At$ with $r=\text{rank}(\At)$ and with probability at least $1-  (\delta + \frac{\dims-\dimsk}{e^r})$
$$
(1-\rho) \At\At\tr \preceq \bar{\At}_k\bar{\At}_k\tr \preceq (1+\rho) \At\At\tr
$$
where $\rho = C\sqrt{\frac{r\log(2r/\delta)}{(\blocks-1)\dimsks}}$. Here, ${\bf A} \preceq {\bf B}$ means that $({\bf B} - {\bf A})$ is a positive semi-definite matrix. 
\end{lem}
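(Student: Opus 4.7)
The plan is to deduce this PSD sandwich inequality directly from the spectral‐norm bound already established in Lemma~\ref{cor:concatRF}, by passing through the SVD of $\At$.

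First I would write $\At = \Ut\Dt\Vt\tr$ as in the previous lemma, so that $\At\At\tr = \Ut\Dt^2\Ut\tr$ and $\feats_k\feats_k\tr = \At\,\Theta_C\Theta_C\tr\At\tr = \Ut\Dt\,(\Vt\tr\Theta_C\Theta_C\tr\Vt)\,\Dt\Ut\tr$. Using that $\Vt$ has orthonormal columns we have $\Vt\tr\Vt=\It_r$, so Lemma~\ref{cor:concatRF} can be restated as
$$
\bigl\|\Vt\tr\Theta_C\Theta_C\tr\Vt - \It_r\bigr\| \;\le\; \rho
\quad\text{with}\quad \rho = C\sqrt{\tfrac{r\log(2r/\delta)}{(\blocks-1)\dimsks}},
$$
on the event of probability at least $1-(\delta+(\dims-\dimsk)/e^r)$ supplied by that lemma.

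Next I would translate this spectral bound into a PSD statement: writing $\Mt := \Vt\tr\Theta_C\Theta_C\tr\Vt$, the bound $\|\Mt-\It_r\|\le\rho$ is equivalent to the eigenvalues of the symmetric matrix $\Mt$ all lying in $[1-\rho,\,1+\rho]$, i.e.
$$
(1-\rho)\,\It_r \;\preceq\; \Mt \;\preceq\; (1+\rho)\,\It_r.
$$
Conjugating all three matrices by $\Ut\Dt$ on the left and $\Dt\Ut\tr$ on the right preserves the Loewner order (conjugation by an arbitrary matrix preserves $\preceq$). This yields
$$
(1-\rho)\,\Ut\Dt^2\Ut\tr \;\preceq\; \Ut\Dt\Mt\Dt\Ut\tr \;\preceq\; (1+\rho)\,\Ut\Dt^2\Ut\tr,
$$
which is exactly the claimed inequality $(1-\rho)\At\At\tr\preceq \feats_k\feats_k\tr \preceq (1+\rho)\At\At\tr$.

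There is essentially no hard step here — the work has already been done in Lemma~\ref{cor:concatRF}; the only thing to check is that the probability qualifier is inherited verbatim and that the factor $\Dt\Ut\tr$ indeed intertwines the two quadratic forms. Equivalently, for any $x\in\R^n$ one may set $y := \Dt\Ut\tr x$ and verify $x\tr\At\At\tr x = y\tr y$ and $x\tr\feats_k\feats_k\tr x = y\tr \Mt y$, whence the scalar bound $(1-\rho)\|y\|^2 \le y\tr\Mt y \le (1+\rho)\|y\|^2$ follows from $\|\Mt-\It_r\|\le\rho$. This vector‐form argument makes the PSD conclusion immediate and avoids any subtlety when $\At$ is rank‑deficient.
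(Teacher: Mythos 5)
Your proposal is correct and follows essentially the same route as the paper's own proof: both pass through the SVD $\At=\Ut\Dt\Vt\tr$, invoke Lemma~\ref{cor:concatRF} to get $(1-\rho)\Vt\tr\Vt\preceq\Vt\tr\Theta_C\Theta_C\tr\Vt\preceq(1+\rho)\Vt\tr\Vt$, and conjugate by $\Ut\Dt$ and $\Dt\Ut\tr$. Your write-up is in fact slightly more careful than the paper's, since you make explicit that $\Vt\tr\Vt=\It_r$, that the spectral-norm bound is equivalent to the eigenvalue sandwich, and that conjugation preserves the Loewner order.
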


\begin{proof}
The proof is analogous to the proof of Corollary 1 in \citep{lu:2013}. Using the singular value decomposition $\Xt = \Ut \Dt \Vt\tr$, we have 
$$
\feats_k \feats_k\tr=  \Ut \Dt \Vt\tr \Theta_C \Theta_C\tr \Vt \Dt \Ut\tr 
$$
where $\Theta_C$ is as defined in Lemma \ref{cor:concatRF}. Lemma \ref{cor:concatRF} implies that 
$$
(1-\rho) \Vt\tr\Vt \preceq \Vt\tr \Theta_C \Theta_C\tr \Vt \preceq (1+\rho) \Vt\tr\Vt
$$
with probability at least $1-\br{\delta + \frac{\dims-\dimsk}{e^r}}$ and $ \rho = C\sqrt{\frac{r\log(2r/\delta)}{(\blocks-1)\dimsks}}$. The result follows by multiplying with $\Ut \Dt$ and $\Dt \Ut\tr$.
\end{proof}


\begin{lem}\label{lem:bound_delta}
Let $\risk(\Xt\solnrr)$ denote the risk of the global ridge estimate and let $\risk(\feats_k\solnrr_k)$ denote the risk of the ridge estimate worker $k$ returns. For $\feats_k \in \R^{n\times(\dimsk + (\blocks-1)\dimsks)}$ with probability at least $1-\br{\delta + \frac{\dims-\dimsk}{e^r}}$, the expected difference in the risk is bounded
$$
\risk(\feats_k\solnrr_k)  - \risk(\Xt\solnrr) = \bE_\varepsilon \Delta \leq \br{\frac{1}{(1-\rho)^{2}}-1} \risk(\Xt\solnrr)
$$
where $\rho = C \sqrt{\frac{r \log(2r / \delta) }{(\blocks-1)\dimsks}}$ and $\Delta$ is the difference between the objectives \eqref{analysis:global_ridge} and \eqref{analysis:workerk_ridge} at their minimal values.
\end{lem}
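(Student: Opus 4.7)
My plan is to leverage the convex optimality of worker $k$'s constrained problem: since $\solnrr_k$ is its minimizer, for any feasible candidate $\soln_k^0$ (i.e.\ $\nrm{\soln_k^0}^2 \le t$) one has $\nrm{\y - \feats_k \solnrr_k}^2 \le \nrm{\y - \feats_k \soln_k^0}^2$, so the task reduces to constructing a $\soln_k^0$ whose fitted value closely tracks the global fitted value $\widehat{\y} = \Xt\solnrr$. The only property of the random projection I would need is the two-sided spectral equivalence $(1-\rho)\K \preceq \K_k \preceq (1+\rho)\K$ from Lemma \ref{cor:kernel_bound_concat}, where $\K = \Xt\Xt\tr$ and $\K_k = \feats_k\feats_k\tr$.

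Since the ridge constraint is active by Assumption (A2), the representer theorem gives $\solnrr = \Xt\tr\widehat{\alpha}$ for some $\widehat{\alpha}\in\R^n$ with $\widehat{\alpha}\tr\K\widehat{\alpha} = t$, and $\widehat{\y} = \K\widehat{\alpha}$. The spectral bound forces $\K$ and $\K_k$ to share the same range, so $\widehat{\y}$ lies in the range of $\K_k$ and I would set
\[
\soln_k^0 \;:=\; \sqrt{1-\rho}\,\feats_k\tr \K_k^{+}\K\widehat{\alpha},
\]
which by direct computation yields $\feats_k\soln_k^0 = \sqrt{1-\rho}\,\widehat{\y}$. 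Feasibility follows from $\K_k^{+} \preceq (1-\rho)^{-1}\K^{+}$ on the common range, which gives $\nrm{\soln_k^0}^2 = (1-\rho)\widehat{\alpha}\tr\K\K_k^{+}\K\widehat{\alpha} \le \widehat{\alpha}\tr\K\widehat{\alpha} = t$.

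Applying optimality, $n\Delta \le \nrm{\y - \sqrt{1-\rho}\widehat{\y}}^2 - \nrm{\y - \widehat{\y}}^2 = 2(1-\sqrt{1-\rho})\langle\y,\widehat{\y}\rangle - \rho\nrm{\widehat{\y}}^2$. The ridge KKT identity at the active constraint, $\Xt\tr(\y-\widehat{\y}) = n\lambda\solnrr$, implies $\langle\widehat{\y}, \y-\widehat{\y}\rangle = n\lambda t$, so substitution collapses the right-hand side to $(1-\sqrt{1-\rho})^2\nrm{\widehat{\y}}^2 + 2(1-\sqrt{1-\rho})n\lambda t$.

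The final step is to take $\bE_\varepsilon$ and recast this expression as a multiple of $\risk(\widehat{\y})$. Here I would invoke the bias-variance decomposition of ridge regression, together with the active-constraint condition, to tie both $\bE\nrm{\widehat{\y}}^2$ and $\bE[\lambda t]$ back to the risk via the spectrum of $\Xt$. The main obstacle I expect is precisely this concluding scalar bookkeeping, which must consolidate the $\sqrt{1-\rho}$-dependent coefficients into the compact form $(1-\rho)^{-2}-1$; the rest of the argument is a routine consequence of optimality and the spectral equivalence of Lemma \ref{cor:kernel_bound_concat}.
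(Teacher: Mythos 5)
Your feasible-point construction is internally consistent up to the point you stop: the two-sided spectral bound of Lemma \ref{cor:kernel_bound_concat} does force $\Kt=\Xt\Xt\tr$ and $\feats_k\feats_k\tr$ to share a range, the candidate $\soln_k^0$ is feasible, and the expansion $n\Delta \le (1-\sqrt{1-\rho})^2\nrm{\widehat{\y}}^2 + 2(1-\sqrt{1-\rho})\,n\lambda t$ is algebraically correct. But the step you defer as ``concluding scalar bookkeeping'' is where the argument genuinely fails, not merely where it gets tedious. Neither $n^{-1}\nrm{\widehat{\y}}^2$ nor $\lambda t=\lambda\nrm{\solnrr}^2$ is controlled by $\risk(\Xt\solnrr)$: to first order in $\rho$ your bound is $\approx\rho\,\lambda t$ while the target is $\approx 2\rho\,\risk(\Xt\solnrr)$, and in a strong-signal regime with the constraint active one has $\lambda t\gg\risk(\Xt\solnrr)$ (in the PCA basis $\lambda t\approx\lambda\sum_j\lambda_j^2(\solnrstar_j)^2/(\lambda_j+\lambda)^2$, whereas the bias contribution to the risk is $\approx\lambda^2\sum_j\lambda_j(\solnrstar_j)^2/(\lambda_j+\lambda)^2$, smaller by a factor of order $\lambda_j/\lambda$). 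A pathwise bound on the in-sample objective gap obtained by inserting a scaled copy of $\widehat{\y}$ into worker $k$'s problem therefore cannot be converted into a bound that is a multiple of the risk.

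The paper avoids this by never bounding $\Delta$ pathwise. It first identifies $\bE_\varepsilon\Delta$ with the difference of risks $\risk(\feats_k\solnrr_k)-\risk(\Xt\solnrr)$ (the noise terms cancel in the difference of expected objectives), and then applies the risk-inflation result of Lemma \ref{lem:risk_inflation}: working in the dual, it decomposes the risk into $\var(\Kt)+\bias(\Kt)$, uses monotonicity of each term in the kernel together with $(1-\rho)\Kt\preceq\feats_k\feats_k\tr\preceq(1+\rho)\Kt$, and concludes $\risk(\feats_k\solnrr_k)\le(1-\rho)^{-2}\risk(\Xt\solnrr)$. The multiplicative factor is obtained relative to the risk itself, which is precisely what your feasible-point route cannot deliver; to close the gap you would have to compare the two \emph{estimators} through their bias and variance rather than exhibit a feasible point, at which stage you are reproducing the paper's argument.
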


\begin{proof}
As Lemma \ref{cor:kernel_bound_concat} holds for $\feats_k \in \R^{n\times(\dimsk + (\blocks-1)\dimsks)}$, Lemma \ref{lem:risk_inflation} applies with probability at least $1-\br{\delta + \frac{\dims-\dimsk}{e^r}}$. Thus, we have
$$
\risk(\feats_k \solnrr_k) \leq (1-\rho)^{-2} \risk(\Xt\solnrr) = \risk(\Xt\solnrr) + \br{\frac{1}{(1-\rho)^{2}}-1} \risk(\Xt\solnrr) 
$$
where $\rho = C \sqrt{\frac{r \log(2r / \delta) }{(\blocks-1)\dimsks}}$. $\Delta$ was introduced in eq. \eqref{eq:delta_explicit} in Lemma \ref{lem:workerk_soln} as the difference between the objectives \eqref{analysis:global_ridge} and \eqref{analysis:workerk_ridge} at their minimal values. Taking the expectation of this difference w.r.t. $\varepsilon$ coincides with the difference between the risk functions. Therefore, we have
$$
\risk(\feats_k \solnrr_k) -  \risk(\Xt\solnrr) =  \bE_\varepsilon \Delta.
$$
\end{proof}

\begin{lem}\label{lem:gradient}
Let $\nabla f(\solnrr)$ denote the gradient of $f$ at $\solnrr$ and let $\lambda$ be the regularization parameter in the penalized formulation of the objective. 
Then we have that 
\begin{equation}\label{eq:gradient_lambda}
\nabla f(\solnrr) \geq - 2 \lambda \solnrr.
\end{equation}
\end{lem}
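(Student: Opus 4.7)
The plan is to recognize the lemma as a direct consequence of the first-order optimality condition for ridge regression. First, I identify $f$ as the quadratic loss term $f(\soln)=n^{-1}\nrm{\y-\At\soln}^2$, so that the penalized ridge objective is $L(\soln)=f(\soln)+\lambda\nrm{\soln}^2$ and $\solnrr$ is its unconstrained global minimizer. A direct computation gives $\nabla f(\soln)=-\tfrac{2}{n}\At\tr(\y-\At\soln)$, while differentiating the penalty yields $\nabla(\lambda\nrm{\soln}^2)=2\lambda\soln$.

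Second, stationarity at the ridge minimizer requires $\nabla L(\solnrr)=\nabla f(\solnrr)+2\lambda\solnrr=0$, which rearranges immediately to $\nabla f(\solnrr)=-2\lambda\solnrr$. This is the same identity that one reads off the normal equations $(\At\tr\At+n\lambda\It)\solnrr=\At\tr\y$ after multiplying by $-2/n$ and regrouping. Equality of course implies the (weaker) inequality stated in the lemma.

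The whole argument is essentially a one-line application of stationarity, so the only genuine subtlety is pinning down what $f$ denotes here (the unregularized squared loss, not the full objective $L$) and understanding why the statement is phrased as an inequality rather than an equality. In the constrained formulation $\min_{\nrm{\soln}^2\le t} f(\soln)$ used elsewhere in the analysis, the KKT stationarity condition reads $\nabla f(\solnrr)=-2\mu\solnrr$ with a nonnegative Lagrange multiplier $\mu\ge 0$ corresponding to the penalization parameter $\lambda$ via the standard constrained/penalized equivalence; this is what naturally produces the $\ge$ form of the bound and explains how the lemma will be deployed downstream when combined with convexity inequalities for $f$.
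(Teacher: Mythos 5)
Your proof is correct and reaches the same conclusion (in fact the equality $\nabla f(\solnrr) = -2\lambda\solnrr$, of which the stated inequality is a trivial weakening), but by a more direct route than the paper. You invoke stationarity of the penalized objective $L = f + \lambda\nrm{\cdot}^2$ at its minimizer, i.e.\ the normal equations $(\At\tr\At + n\lambda\It)\solnrr = \At\tr\y$, which gives the identity in one line. The paper instead rotates to the PCA coordinate system, writes $f$ as $\sum_j \lambda_j(\solnrls_j - \soln_j)^2 + c'$ around the minimum-norm least squares solution $\solnrls$, and uses the explicit shrinkage formula $\solnrrr_j = \tfrac{\lambda_j}{\lambda_j+\lambda}\solnrls_j$ to compute the gradient coordinatewise, arriving at the same $-2\lambda\solnrrr$ and rotating back. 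The two arguments are mathematically equivalent; what the paper's longer derivation buys is the explicit spectral picture (gradient magnitude $\tfrac{2\lambda\lambda_j}{\lambda_j+\lambda}\solnrls_j$ per coordinate, and the geometry relative to $\solnlsmin$) that it reuses in Lemma 6 and in the discussion of the bound, whereas your KKT remark more cleanly explains the role of the constrained/penalized correspondence and why the statement is phrased as an inequality. Your identification of $f$ as the unregularized squared loss is the right reading and matches the paper's usage.
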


\begin{proof}
Without loss of generality rotate $\Xt$ to the PCA coordinate system such that 
$$
\boldSigmar = \frac{1}{n}\Xtr\tr\Xtr = \diag(\lambda_1, \ldots, \lambda_{\min(n, p)}, 0, \ldots, 0)
$$
with $\lambda_1, \ldots, \lambda_{\min(n,p)}$ being the non-zero eigenvalues of $\boldSigmar$ in decreasing order. The subscript indicates that $\Xtr$ is the rotated design matrix.

Consider the estimator
$$
\solnrls = \lim_{\lambda \rightarrow 0} \solnrrr
$$
where $\solnrrr$ contains the ridge regression estimates in the rotated space.
Since the OLS estimator is not uniquely defined if there are zero eigenvalues, $\solnrls$ is the least squares solution with minimal $\ell_2$ norm. 

In this orthogonal setting, the ridge estimate for the $j$-th coefficient ${\solnrrr}_j$ has a simple relation to the corresponding estimate $ {\solnrls}_j$
$$
{\solnrrr}_j = \frac{\lambda_j}{\lambda_j + \lambda} {\solnrls}_j.
$$
Furthermore, the loss can be expressed as
$$
f(\solnrrr) 
= \sum_{j = 1}^\dims \lambda_j \br{1 - \frac{\lambda_j}{\lambda_j + \lambda}}^2 (\solnrls_j)^2 + c'
= \sum_{j = 1}^\dims \lambda_j \br{{\solnrls}_j - {\solnrrr}_j}^2 + c'
$$
where $c'$ is a constant and the gradient follows as
$$
\nabla f(\solnrrr) 
= - \sum_{j = 1}^\dims 2 \lambda_j \br{{\solnrls}_j - {\solnrrr}_j}
= - \sum_{j = 1}^\dims \frac{2 \lambda \lambda_j}{\lambda_j + \lambda} {\solnrls}_j 
= - 2 \lambda \solnrrr.
$$
As $f$ was not changed by rotating $\Xt$, rotating $\solnrrr$ back to retrieve $\solnrr$ does not change the gradient. Therefore, we have
\begin{equation*}
\nabla f(\solnrr) = - 2 \lambda \solnrr.
\end{equation*}
\end{proof}

\begin{lem} \label{lem:bound_lambda}
Let $\lambda$ be the regularization parameter in the penalized formulation of the objective and let $\lambda_J$ denote the $J$-th largest eigenvalue of the covariance matrix.
Then under assumptions (A1) through (A3) there exists a $n_0$ such that for all $n\ge n_0=n_0(\gamma, \sigma^2, J, \lambda_J, L_J, \boundconst^2)$, with probability at least $1-\gamma$, 
\[ \lambda \ge \frac{\boundconst}{5} \lambda_J.\]
\end{lem}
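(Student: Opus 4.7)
The plan is to work in the PCA coordinate system of Lemma~\ref{lem:gradient}, where the ridge estimator admits the closed form $\solnrrr_j = \frac{\lambda_j}{\lambda_j+\lambda}\solnrls_j$ on every direction with $\lambda_j>0$. Using that the penalized and constrained formulations are equivalent with an active constraint exactly when the minimum--norm OLS estimator $\solnrls$ lies outside the feasible ball, I would first verify activeness: under (A3) the coordinates $\solnrls_j$ for $j\le J$ are unbiased for $\solnrstar_j$ with variance $\sigma^2/(n\lambda_j)\le \sigma^2/(n\lambda_J)$, so a Chebyshev plus union-bound argument shows $\sum_{j=1}^{J}(\solnrls_j)^2 \ge (1-\eta)L_J$, with $L_J := \sum_{j=1}^{J}(\solnrstar_j)^2$, on a high-probability event. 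Combined with (A2) this puts $\solnrls$ outside $\{\|\beta\|^2\le t\}$ for $n$ large and forces the KKT identity $\|\solnrrr\|^2=t$.

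The KKT identity reads $\sum_{j:\lambda_j>0}\br{\frac{\lambda_j}{\lambda_j+\lambda}}^2(\solnrls_j)^2 = t$. Discarding the tail $j>J$, using the monotonicity of $\lambda_j \mapsto \lambda_j/(\lambda_j+\lambda)$ to factor out $\lambda_J/(\lambda_J+\lambda)$, and invoking (A2) yields
\[
\br{\frac{\lambda_J}{\lambda_J+\lambda}}^2 \sum_{j=1}^{J}(\solnrls_j)^2 \;\le\; t \;\le\; (1-\boundconst)L_J,
\]
which rearranges to $\lambda \ge \lambda_J\br{\sqrt{\sum_{j\le J}(\solnrls_j)^2/((1-\boundconst)L_J)}-1}$. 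To close the argument I need the multiplicative lower bound $\sum_{j\le J}(\solnrls_j)^2 \ge (1-\eta)L_J$ for a suitable $\eta$. Writing $\solnrls_j = \solnrstar_j + \xi_j$, expanding the square, and applying Cauchy--Schwarz gives $\sum_{j\le J}(\solnrls_j)^2 \ge L_J - 2(\max_{j\le J}|\xi_j|)\sqrt{JL_J}$, while Chebyshev plus a union bound controls $\max_{j\le J}|\xi_j|$ by any prescribed $\varepsilon$ with probability $1-\gamma$ once $n\ge J\sigma^2/(\gamma\varepsilon^2\lambda_J)$.

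Choosing $\varepsilon$ so that $\eta = 3\boundconst/5$, a short algebraic check using $\boundconst\in(0,1/2)$ verifies $(1-\boundconst)(1+\boundconst/5)^2 \le 1-3\boundconst/5$, which in turn yields $\sqrt{(1-\eta)/(1-\boundconst)}-1 \ge \boundconst/5$ and hence the claimed bound $\lambda \ge (\boundconst/5)\lambda_J$. Tracking constants produces $n_0 \asymp J^2\sigma^2/(\boundconst^2\gamma L_J\lambda_J)$, which exhibits the stated dependence on $(\gamma, \sigma^2, J, \lambda_J, L_J, \boundconst^2)$. The main obstacle I anticipate is intersecting the two high-probability events---activeness of the constraint and uniform closeness of $\solnrls_j$ to $\solnrstar_j$ on $j\le J$---on a single event of probability at least $1-\gamma$ while keeping the slack parameter $\eta$ tight enough to recover the explicit constant $1/5$. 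The constant itself is not intrinsic (any fraction strictly below $1/2$ works by the same argument), but it fixes the scaling of $n_0$ through the Chebyshev radius $\varepsilon$.
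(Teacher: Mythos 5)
Your proposal is correct and follows essentially the same route as the paper's proof: rotate to the PCA basis, write $\solnrrr_j = \frac{\lambda_j}{\lambda_j+\lambda}(\solnrstar_j+\xi_j)$, combine feasibility of $\solnrrr$ with (A2) and eigenvalue monotonicity to factor out $\lambda_J/(\lambda_J+\lambda)$, and control the noise in the top $J$ directions by Chebyshev plus a union bound, arriving at the same $n_0 \asymp J^2\sigma^2/(\gamma\,\lambda_J L_J \boundconst^2)$ scaling and the constant $\boundconst/5$. The only cosmetic difference is that you separate signal from noise via Cauchy--Schwarz and a bound on $\max_{j\le J}|\xi_j|$ (with slack $\eta=3\boundconst/5$), whereas the paper applies Minkowski's inequality to the weighted norms and bounds $\sum_j \bigl(\tfrac{\lambda_j}{\lambda_j+\lambda}\bigr)^2\xi_j^2$ in aggregate by $\boundconst^2 L_J/16$; your detour through activeness of the constraint is harmless but unnecessary, since the chain of inequalities already forces $\lambda>0$.
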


\begin{proof}
Let $\Xtr$ denote the design matrix after rotating $\Xt$ to the PCA coordinate system and let $\solnrrr$ denote the ridge estimate in this space. Then we can express ${\solnrrr}_j$ as
$$
{\solnrrr}_j = \frac{\lambda_j}{\lambda_j + \lambda} \br{\solnrstar_j + \noisevar_j}
$$
where $\solnrstar$ is the true parameter vector in the PCA coordinate system and 
$$\noisevar_j = \dfrac{(\Xtr)_j\tr\varepsilon}{(\Xtr)_j\tr(\Xtr)_j} =  \dfrac{(\Xtr)_j\tr\varepsilon}{\samp \lambda_j}.$$

Define \[ L_J := \sum_{j=1}^J (\solnrstar_j)^2.\]

As $\solnrr$ is the solution to the constrained optimization problem in eq. \eqref{analysis:global_ridge}, we have that $ \nrm{\solnrrr}^2 \leq t$. As $J < p$ and using the relation between $\solnrstar$ and $\solnrrr$ yields
\[ \sum_{j=1}^J \Big( \frac{\lambda_j}{\lambda_j+\lambda}\Big) ^2 (\solnrstar_j+\noisevar_j)^2\le t.\]
Hence
\begin{equation} \label{eq:sq} \Big(\sum_{j=1}^J \Big( \frac{\lambda_j}{\lambda_j+\lambda}\Big) ^2 (\solnrstar_j)^2\Big)^{1/2} \le (t)^{1/2}+ \Big(\sum_{j=1}^J \Big( \frac{\lambda_j}{\lambda_j+\lambda}\Big) ^2 \noisevar_j^2 \Big)^{1/2}.\end{equation}
Using monotonicity of the eigenvalues, the left hand side is bounded by
\[ \sum_{j=1}^J \Big( \frac{\lambda_j}{\lambda_j+\lambda}\Big) ^2 (\solnrstar_j)^2\;\ge\;  \Big(\frac{\lambda_J}{\lambda_J+\lambda}\Big)^2 \sum_{j=1}^J (\solnrstar_j)^2.\]
Using assumption (A2), we thus have from (\ref{eq:sq}) 
\begin{equation} \label{eq:sq2}  \Big( \Big(\frac{\lambda_J}{\lambda_J+\lambda}\Big)^2 L_J\Big)^{1/2}  \le  \big( (1-\boundconst) L_J\big)^{1/2} + \Big(\sum_{j=1}^J \Big( \frac{\lambda_j}{\lambda_j+\lambda}\Big) ^2 \noisevar_j^2 \Big)^{1/2}.\end{equation}
If we can show that there exists some $n_0$ such that for all $n\ge n_0$ with probability at least $1-\gamma$,
\begin{equation}\label{eq:toshow} \sum_{j=1}^J \Big( \frac{\lambda_j}{\lambda_j+\lambda}\Big) ^2 \noisevar_j^2  \le  \frac{\boundconst^2}{16} L_J,\end{equation}
then, from (\ref{eq:sq2}), with probability at least $1-\gamma$,
\begin{align*}  \Big( \Big(\frac{\lambda_J}{\lambda_J+\lambda}\Big)^2 L_J\Big)^{1/2}  & \le \big( (1-\boundconst) L_J\big)^{1/2} + \big(  (\boundconst^2 /16) L_J  \big)^{1/2} ,\\ 
\mbox{and hence   } \qquad  \Big(\frac{\lambda_J}{\lambda_J+\lambda}\Big)^2  &\le (1-\boundconst) + 2\sqrt{(1-\boundconst)}\sqrt{\boundconst^2/16} + (\boundconst^2/16) \\ 
&\le  (1-\boundconst) + \boundconst/2+ \boundconst^2/16 \le 1- \frac{ 7\boundconst}{16}.   \end{align*}
And, if (\ref{eq:toshow}) is true, there thus exists a $n_0$ such that with probability at least $1-\gamma$,
\begin{align*}
\lambda &\ge \Big( \frac{1}{\sqrt{1-7\boundconst/16}}-1 \Big) \lambda_J \\
&\ge \frac{\boundconst}{5} \lambda_J.
\end{align*}

It thus remains to show (\ref{eq:toshow}).
The mean of 
\[ \frac{\lambda_j}{\lambda_j+\lambda}  \noisevar_j \]
vanishes and the variance is given, for all $j=1,\ldots,J$ by
\[  \frac{\sigma^2 \lambda_j }{n(\lambda+\lambda_j)^2} \le \frac{\sigma^2 }{n \lambda_J} . \]
Hence, for all $j=1,\ldots,J$ and $a>0$,
\[ 
\Prob \cb{  \big(\frac{\lambda_j}{\lambda_j+\lambda}\big)^2  \noisevar_j^2 \ge a^2 }  \le \frac{1}{a^2} \frac{\sigma^2}{n\lambda_J} .
\]
Thus, using a Bonferroni bound over $j=1\ldots,J$,
\begin{align*}
\Prob \cb{  \sum_{j=1}^J \big(\frac{\lambda_j}{\lambda_j+\lambda}\big)^2  \noisevar_j^2 \ge a^2 }  &\le 
\Prob \cb{  J \max_{j=1,\ldots,J}  \big(\frac{\lambda_j}{\lambda_j+\lambda}\big)^2  \noisevar_j^2 \ge a^2 } \\
&\le J \max_{j=1,\ldots,J} \Prob \cb{  \big(\frac{\lambda_j}{\lambda_j+\lambda}\big)^2  \noisevar_j^2 \ge a^2/J } \\
& \le \frac{J^2}{a^2} \frac{\sigma^2}{n\lambda_J}.
\end{align*}
Hence, with probability at least $1-\gamma$, 
\begin{equation} \sum_{j=1}^J \Big( \frac{\lambda_j}{\lambda_j+\lambda}\Big) ^2 \noisevar_j^2  \le  \frac{J^2}{\gamma} \frac{\sigma^2}{n\lambda_J},\end{equation}
If choosing 
\[ n_0\; := \; \frac{16 J^2 \sigma^2 }{\gamma \lambda_J L_J \boundconst^2},\]
then, with probability at least $1-\gamma$,
\begin{equation} \sum_{j=1}^J \Big( \frac{\lambda_j}{\lambda_j+\lambda}\Big) ^2 \noisevar_j^2  \le  \frac{\boundconst^2}{16} L_J,\end{equation}
which shows (\ref{eq:toshow}) and thus completes the proof.
\end{proof}

\vspace{0.5cm}


\noindent We are now ready to present the proof of our main theorem.
\paragraph{Proof of Theorem \ref{thm_ridge}.}
Letting $\solnrr_{k, \text{raw}}$ denote the first $\dimsk$ elements of $\solnrr_k$, we set
$$
\tilde\soln' = \underset{\nrm{\solnrr_{k, \text{raw}}}^2 + \nrm{ \soln_{\minusk}}^2 \leq t }{\arg\min} n^{-1} \nrm{\y - \At_k \solnrr_{k, \text{raw}} - \At_{\minusk}  \soln_{\minusk}}^2,
$$
where $\solnrr_{k, \text{raw}} $ is fixed and the minimization is over $\soln_{\minusk}$ i.e. $\tilde\soln' = \sq{\solnrr_{k,\text{raw}}; \tilde\soln_{\minusk}}$. 

From Lemma \ref{lem:workerk_soln}, it follows that $f(\tilde\soln') - f(\solnrr) \leq \Delta $ and as $f(\soln)$ is convex, we also have that
\begin{equation}
f(\tilde\soln') \geq f(\solnrr) + \nabla f(\solnrr)^\top (\tilde\soln' - \solnrr).
\label{analysis:ridge_convex}
\end{equation}
Under Assumption \ref{assn:bound_t} , using eq. \eqref{eq:gradient_lambda} from Lemma \ref{lem:gradient} and the lower bound on $\lambda$ from Lemma \ref{lem:bound_lambda},
the difference between the components of interest is bounded by
\begin{eqnarray*}
\Delta 
&\geq& f(\tilde\soln') - f(\solnrr) 
\geq \frac{\boundconst}{5} \lambda_J \nrm{\solnrr - \tilde\soln'}^2 \\
&\geq&  \frac{\boundconst}{5} \lambda_J \nrm{\solnrr_\text{raw} -\solnrr_{k,\text{raw}}}^2
\end{eqnarray*}
with probability $1 - \gamma$.
Using the expression for the expectation of $\Delta$ from Lemma \ref{lem:bound_delta}, 
$$
\bE_\varepsilon(\nrm{\solnrr_\text{raw} -\solnrr_{k,\text{raw}}}^2) \leq 
\frac{5}{\boundconst \lambda_J} \br{\frac{1}{(1-\rho)^{2}} -1} \risk(\Xt\solnrr)
$$
with probability $1-\br{\delta + \frac{\dims-\dimsk}{e^r} + \gamma}$.
Lastly, we use this expression to find a bound for the difference between the full ridge solution $\solnrr$ and the estimate $\solnloco$ returned by \loco

$$
\bE_\varepsilon(\nrm{\solnrr -\solnloco}^2) \leq 
\frac{5 K}{\boundconst \lambda_J} \br{\frac{1}{(1-\rho)^{2}} -1} \risk(\Xt\solnrr)
$$
with probability $1- \xi = 1- \blocks \br{\delta + \frac{\dims-\dimsk}{e^r} + \gamma}$.

\begin{rem}[Discussion of bound]\label{rem:discussion_bound}
Combining eq. \eqref{eq:gradient_lambda} and the lower bound on $\lambda$ from Lemma \ref{lem:bound_lambda} shows that the gradient is bounded away from zero under Assumption \ref{assn:bound_t}. In figure \ref{fig:bound}, $\solnlsmin$ denotes the least squares solution with minimal $\ell_2$ norm. Then, under (A1) through (A3) we can ensure that the distance $d$ shown in figure \ref{fig:bound} does not go to zero which would translate into a weaker bound.


\begin{figure*}[htp]
    \center
    \includegraphics[width=0.5\textwidth, keepaspectratio=true]{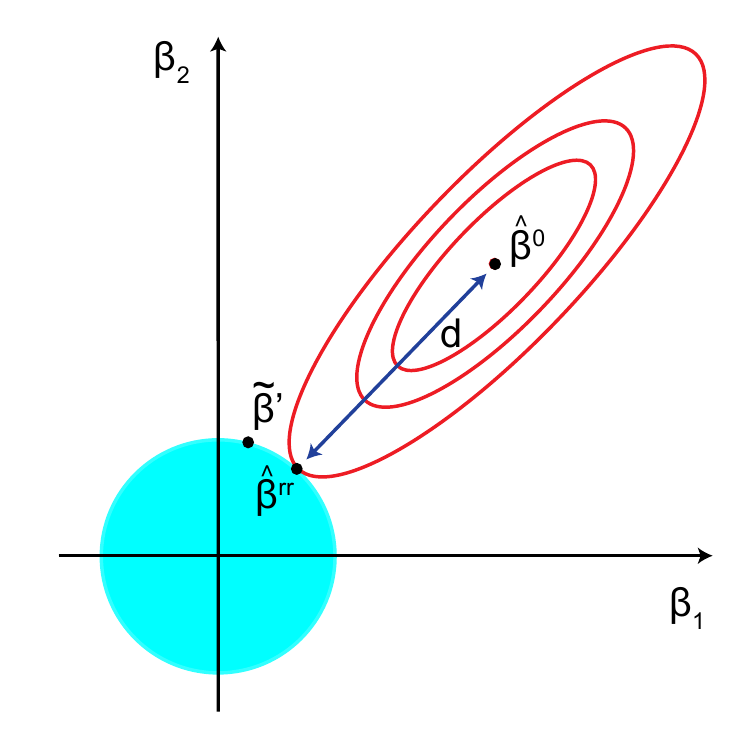}
    \caption{Illustration of bound. \label{fig:bound}}
\end{figure*}
\end{rem}

\subsubsection*{Bounding the error when $\nrm{\solnlsmin}^2 \leq t$ and further considerations.} 
Above, we considered the case where the constraint $t$ was active, i.e. where regularization was needed to achieve best predictive accuracy. In this setting, the least squares solution with minimal $\ell_2$ norm $\solnlsmin$ has a larger squared $\ell_2$ norm than $t$.
For completeness, consider the case where $\nrm{\solnlsmin}^2 \leq t $. Then the constraint $t$ is large enough such that the ridge estimate coincides with ${\solnlsmin}$ and $\nabla f(\solnrr) =0$. 
Then
$$
f(\tilde\soln') - f(\solnrr) =  (\tilde\soln' - \solnrr)\tr \frac{\Xt \tr \Xt}{n} (\tilde\soln' - \solnrr)
$$
where we made use of the fact that the second-order Taylor expansion holds exactly for the squared error loss. If $\tilde{\soln'} = \solnrr$, worker $k$ estimates the coefficients of the raw features optimally such that we do not have to consider this case further. If  $\tilde{\soln'} \neq \solnrr$, the following result relates $\nrm{\At (\tilde\soln' - \solnrr) }^2 $ to $\nrm{\tilde\soln' - \solnrr}^2 $.

We first make an assumption on the smallest eigenvalue of the covariance matrix $\boldSigma = \frac{1}{n} \Xt\tr\Xt$.
\begin{assn}[Restricted minimum eigenvalue condition]
$$
\min_\psi \cb{ \frac{(\psi - \gamma)^\top \boldSigma (\psi - \gamma)}{\nrm{\psi - \gamma}^2}; \text{ such that } \nrm{\psi} = 1,  \gamma = \frac{\solnrr}{\nrm{\solnrr}} } \geq \varphi > 0.
$$
\label{assn:smallest_ev}
\end{assn}

\begin{thm}
If for some constant $\varphi > 0$, $$\nrm{\tilde\soln' - \solnrr}^2 \leq (\tilde\soln' - \solnrr)^\top \boldSigma (\tilde\soln' - \solnrr) / \varphi,$$ 
we can upper bound $\nrm{\tilde\soln' - \solnrr}^2 $ by $\Delta / \varphi$ so that
$$
\bE_\varepsilon(\nrm{\solnrr_\text{raw} -\solnrr_{k,\text{raw}}}^2) \leq 
\frac{1}{\varphi} \br{\frac{1}{(1-\rho)^{2}} -1} R(\Xt\solnrr)
$$
holds with probability at least $1-\br{\delta + \frac{\dims-\dimsk}{e^r}}$ and $\rho = C \sqrt{\frac{r \log(2r / \delta) }{(\blocks-1)\dimsks}}$.
\end{thm}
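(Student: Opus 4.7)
The plan is to follow the skeleton of the proof of Theorem~\ref{thm_ridge}, but to replace the gradient-based lower bound on $f(\tilde\soln') - f(\solnrr)$ — which in the ridge case required the delicate Lemma~\ref{lem:bound_lambda} — by an exact quadratic expansion of $f$ around $\solnrr$. This is legitimate precisely because we are now in the inactive-constraint regime $\nrm{\solnlsmin}^2 \leq t$, in which the ridge estimate coincides with the minimum-norm least squares solution and hence $\nabla f(\solnrr) = 0$.

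First, I would form the stitched vector $\tilde\soln' = \sq{\solnrr_{k,\text{raw}}; \tilde\soln_{\minusk}}$ exactly as in the proof of Theorem~\ref{thm_ridge}, so that Lemma~\ref{lem:workerk_soln} immediately yields $f(\tilde\soln') - f(\solnrr) \leq \Delta$. Because $f$ is quadratic in $\soln$ and $\nabla f(\solnrr)=0$, its second-order Taylor expansion about $\solnrr$ is exact, giving the identity
$$f(\tilde\soln') - f(\solnrr) = (\tilde\soln' - \solnrr)^\top \boldSigma (\tilde\soln' - \solnrr).$$
Combining this identity with the hypothesized restricted-eigenvalue inequality produces
$$\nrm{\tilde\soln' - \solnrr}^2 \leq \br{f(\tilde\soln') - f(\solnrr)}/\varphi \leq \Delta/\varphi,$$
and since by construction the first $\dimsk$ coordinates of $\tilde\soln'$ equal $\solnrr_{k,\text{raw}}$, we obtain at once $\nrm{\solnrr_\text{raw} - \solnrr_{k,\text{raw}}}^2 \leq \nrm{\tilde\soln' - \solnrr}^2 \leq \Delta/\varphi$, which is the intermediate claim of the theorem.

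The remaining step is to take expectation over $\varepsilon$ and invoke Lemma~\ref{lem:bound_delta}, which controls $\bE_\varepsilon \Delta$ by $\br{1/(1-\rho)^{2} - 1}\risk(\Xt\solnrr)$ with probability at least $1-\br{\delta + (\dims-\dimsk)/e^r}$ over the SRHT — delivering exactly the claimed approximation factor, confidence level and form of $\rho$. I do not expect a genuine obstacle here: the technically demanding piece of the ridge proof was the lower bound on $\lambda$ from Assumption~\ref{assn:bound_t}, and the vanishing-gradient property here eliminates that work entirely. The only point worth verifying is that the hypothesized inequality is being applied at the specific difference vector $\tilde\soln' - \solnrr$ arising in the argument rather than at a generic direction; this is built into the statement of the hypothesis, so no extra geometric reduction (e.g.\ via Assumption~\ref{assn:smallest_ev}) is required within the proof itself.
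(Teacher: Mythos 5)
Your proposal is correct and follows essentially the same route as the paper: the vanishing gradient in the inactive-constraint regime makes the second-order Taylor expansion of $f$ around $\solnrr$ exact, the hypothesized restricted-eigenvalue inequality then converts $f(\tilde\soln') - f(\solnrr) \leq \Delta$ into $\nrm{\tilde\soln' - \solnrr}^2 \leq \Delta/\varphi$, the raw coordinates are extracted by norm restriction, and Lemma~\ref{lem:bound_delta} controls $\bE_\varepsilon\Delta$. Your write-up is in fact more explicit than the paper's terse chain of inequalities, but no step differs in substance.
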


\begin{proof}
If Assumption \ref{assn:smallest_ev} holds, then 
$$
\frac{\Delta}{\varphi} \geq 
(\tilde\soln' - \solnrr)^\top \boldSigma (\tilde\soln' - \solnrr)/ \varphi \geq 
\nrm{\tilde\soln' - \solnrr}^2  \geq
\nrm{\solnrr_{k,\text{raw}}  -  \solnrr_\text{raw}}^2
$$
where $\varphi$ is the smallest eigenvalue of $\boldSigma$, and 
$$
\nrm{\solnrr_{k,\text{raw}}  -  \solnrr_\text{raw}}^2 \leq \frac{\Delta}{\varphi}.
$$
Using the expression for the expectation of $\Delta$ from Lemma \ref{lem:bound_delta} yields the result.
\end{proof}

\begin{rem}\label{rem:eigenvalue}
Assumption \ref{assn:smallest_ev} implies that the smallest eigenvalue of the $\boldSigma $ is bounded away from zero. Of course, this would not be satisfied if $p > n$. On the other hand, if $p > n$ cross validation would always yield a value for $t$ such that $\nrm{\solnlsmin}^2 > t$ because the least squares solution has bad statistical properties in this setting and regularization is needed to achieve best predictive accuracy. Furthermore, if $n \geq p$ and the covariance matrix is close to being singular, we would again choose a constraint such that $\nrm{\solnlsmin}^2 > t $. Therefore, we can assume that the smallest eigenvalue of the covariance matrix is bounded away from zero in all relevant cases, namely if cross validation yields a value of t such that $\nrm{\solnlsmin}^2 \leq t $.
\end{rem}

\subsection{Ordinary Least Squares} \label{sec:supp_ols}
For OLS regression, $f(\soln) = \nrm{\y - \At\soln}^2$ is the squared error loss while there is no regularizer. Here, $\solnls$ denotes the solution which results from solving the OLS problem using the original data while each worker solves least squares using $\feats_k$ instead of $\At$ and returns $\solnls_k$.

\begin{defn}
Let $\widehat{\mathbf{z}}_j$ denote the residual which results from regressing the feature vector $j$, $\At^{j}$, onto the remaining features $\At^{-j}$
$$
\widehat{\mathbf{z}}_j = \At^{j} - \At^{-j} \widehat{\gamma} \; \; \;  \text{   where   } \; \; \;   \widehat{\gamma} = \underset{\gamma}{\arg\min} \nrm{ \At^{j} - \At^{-j} \gamma}^2
$$
and let $\tilde{\mathbf{z}}_j$ denote the residual which results from regressing feature $j$, $\At^{j}$, onto the randomized approximation of the remaining features $\feats_k^{-j} $
$$
\tilde{\mathbf{z}}_{j} = \At^{j}  - \feats_k^{-j} \tilde{\gamma} \; \; \; \text{   where   } \; \; \;  \tilde{\gamma} = \underset{\gamma_k}{\arg\min} \nrm{ \At^{j}  - \feats_k^{-j} \gamma_k}^2.
$$
\label{def:ols_auxresiduals}
\end{defn}

The following result relies on Lemma \ref{lem_kaban_general} by \citep{kaban:2014} which can be found in \S \ref{sec:supporting_lemma}.

\begin{thm}
Let $\widehat{\mathbf{z}}_j$, $\tilde{\mathbf{z}}_j$ and $\widehat{\gamma}$ be as given in Definition \ref{def:ols_auxresiduals} and let $\boldSigma = \At^{-j \top}\At^{-j}/n$. Furthermore, let all other quantities be as defined in Lemma \ref{lem_kaban_general}. Then the difference between the residuals is bounded by
$$
\frac{1}{n} \bE_\Pi (\nrm{\widehat{\mathbf{z}}_j -\tilde{\mathbf{z}}_j}^2) \leq  \frac{1}{\dimsks} \nrm{\widehat{\gamma} }^2_{\boldSigma + \trace{\boldSigma} \mathbf{I}_{p-1} + \kappa \sum_{i = 1}^{p-1} e_i B_i}
$$ 
where $\nrm{\mathbf{u}}_\mathbf{M} = \sqrt{\mathbf{u}^\top \mathbf{M} \mathbf{u} }$ is the Mahalanobis norm.
\label{thm_ols}
\end{thm}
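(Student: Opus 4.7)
}

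The plan is to recast the difference of the two residuals as exactly the prediction error incurred when solving an OLS problem after random-projecting the design matrix, and then to invoke Lemma \ref{lem_kaban_general} (the Kab\'an bound) as a black box. Concretely, observe that both $\widehat{\mathbf{z}}_j$ and $\tilde{\mathbf{z}}_j$ share the common term $\At^j$, so
$$
\widehat{\mathbf{z}}_j - \tilde{\mathbf{z}}_j \;=\; \feats_k^{-j}\,\tilde{\gamma} \,-\, \At^{-j}\,\widehat{\gamma}.
$$
Writing $\feats_k^{-j} = \At^{-j}\boldPi$ for the appropriate random projection $\boldPi$ (extracted from $\Theta_C$ as in Lemma \ref{cor:concatRF}, with the column corresponding to $j$ removed), $\tilde{\gamma}$ is by definition the minimizer of $\nrm{\At^j - \At^{-j}\boldPi\,\gamma_k}^2$, while $\widehat{\gamma}$ is the minimizer of $\nrm{\At^j - \At^{-j}\gamma}^2$. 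Thus the right-hand side is precisely the difference between the fitted values obtained from uncompressed and compressed OLS, with design $\At^{-j}$ and response $\At^j$.

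First I would make this identification explicit: set $\Bt := \At^{-j}$, $\mathbf{y}' := \At^j$, so that $\widehat{\gamma} = (\Bt^\top\Bt)^{-1}\Bt^\top\mathbf{y}'$ and $\tilde{\gamma} = (\boldPi^\top\Bt^\top\Bt\,\boldPi)^{-1}\boldPi^\top\Bt^\top\mathbf{y}'$. Then
$$
\tfrac{1}{n}\nrm{\widehat{\mathbf{z}}_j - \tilde{\mathbf{z}}_j}^2 \;=\; \tfrac{1}{n}\nrm{\Bt\widehat{\gamma} - \Bt\boldPi\tilde{\gamma}}^2,
$$
which is exactly the quantity bounded (in expectation over $\boldPi$) by Lemma \ref{lem_kaban_general}. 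Applying that lemma with its design matrix instantiated as $\Bt=\At^{-j}$, its OLS minimizer as $\widehat{\gamma}$, and its covariance as $\boldSigma = \Bt^\top\Bt/n$, directly yields
$$
\tfrac{1}{n}\,\bE_{\boldPi}\nrm{\widehat{\mathbf{z}}_j - \tilde{\mathbf{z}}_j}^2 \;\leq\; \tfrac{1}{\dimsks}\,\nrm{\widehat{\gamma}}^2_{\boldSigma + \trace{\boldSigma}\mathbf{I}_{p-1} + \kappa \sum_{i=1}^{p-1} e_i B_i},
$$
which is the claimed bound.

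The only nontrivial step is checking that the hypotheses of Lemma \ref{lem_kaban_general} — in particular the moment conditions on $\boldPi$ that give rise to the constants $\kappa$ and the matrices $e_i$, $B_i$ — are inherited by our induced projection on the $(p-1)$-dimensional space obtained by deleting coordinate $j$. This is immediate for sub-Gaussian or SRHT projections because the relevant second- and fourth-moment identities (orthogonal-columns in expectation, bounded per-entry fourth moments) are preserved under selecting a submatrix of the rows of $\boldPi$. I expect this bookkeeping — matching the constants in Kab\'an's lemma to the specific projection used in \loco and verifying that deleting the $j$-th feature preserves the required structure — to be the main (though largely mechanical) obstacle; the algebraic identification of the residual difference with a compressed-OLS prediction error is the key conceptual step and everything else reduces to invoking the cited lemma.
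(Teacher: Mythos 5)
Your overall strategy matches the paper's: both identify
$\widehat{\mathbf{z}}_j - \tilde{\mathbf{z}}_j = \feats_k^{-j}\tilde{\gamma} - \At^{-j}\widehat{\gamma}$
as the difference in fitted values between an uncompressed and a compressed OLS problem with response $\At^{j}$ and design $\At^{-j}$, and both then route the bound through Lemma \ref{lem_kaban_general}. The paper does this via an intermediate result stated for a generic response (Lemma \ref{lem:bound_diff_residuals}, built on Lemmas \ref{lem_residuals} and \ref{lem:diff_residuals_ols}), which it then instantiates with response $\At^{j}$.

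There is, however, a genuine gap at the point where you invoke Lemma \ref{lem_kaban_general}. You assert that $\frac{1}{n}\nrm{\Bt\widehat{\gamma} - \Bt\boldPi\tilde{\gamma}}^2$ is ``exactly the quantity bounded'' by that lemma and that the lemma ``directly yields'' the claim. It is not the same quantity: the lemma computes, with equality, the error of the \emph{plug-in} estimator $\boldPi\boldPi^\top\widehat{\gamma}$, i.e.\ $\frac{1}{n}\bE_\Pi\nrm{\Bt\widehat{\gamma} - \Bt\boldPi\boldPi^\top\widehat{\gamma}}^2$, whereas your quantity involves the compressed least-squares minimizer $\tilde{\gamma}$. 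The missing step is the comparison
$$
\nrm{\Bt\widehat{\gamma} - \Bt\boldPi\tilde{\gamma}}^2 \;\leq\; \nrm{\Bt\widehat{\gamma} - \Bt\boldPi\boldPi^\top\widehat{\gamma}}^2 ,
$$
which the paper supplies via two observations: (i) since $\At^{j} - \Bt\widehat{\gamma}$ is orthogonal to $\mathrm{col}(\Bt)\supseteq\mathrm{col}(\Bt\boldPi)$, minimizing $\nrm{\At^{j} - \Bt\boldPi\gamma}^2$ over $\gamma$ is, by Pythagoras, equivalent to minimizing $\nrm{\Bt\widehat{\gamma} - \Bt\boldPi\gamma}^2$, so the compressed fit is the closest point of $\mathrm{col}(\Bt\boldPi)$ to $\Bt\widehat{\gamma}$ and in particular beats the plug-in candidate; and (ii) the column space of the matrix worker $k$ actually uses, $\feats_k^{-j}$, contains that of the purely random projection to which Kab\'an's lemma applies, so replacing the former by the latter only increases the error. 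Point (ii) also corrects a secondary inaccuracy in your write-up: the map from $\At^{-j}$ to $\feats_k^{-j}$ is not an i.i.d.\ projection (it acts as the identity on the remaining raw features of block $k$), so the moment conditions of Lemma \ref{lem_kaban_general} do not hold for it directly; what makes the lemma applicable is the column-space comparison against a genuine random projection, not ``inheritance of moment conditions under taking a submatrix.'' With these steps restored, your argument coincides with the paper's.
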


\begin{cor}
As the $j$-th OLS regression coefficient can be expressed as 
$$
\solnls_j = \frac{\widehat{\mathbf{z}}_j \cdot \y }{\widehat{\mathbf{z}}_j\cdot \At^{j}}
\; \; \; \text{ resp. } \; \; \; 
\solnls_{k, j} = \frac{ \tilde{\mathbf{z}}_{j} \cdot \y }{\tilde{\mathbf{z}}_{j} \cdot \At^{j}},
$$
the bound on the difference between $\widehat{\mathbf{z}}_j$ and $\tilde{\mathbf{z}}_j$ implies that the difference between $\solnls_j$ and $\solnls_{k, j}$ is bounded as well. Therefore, the estimate for the raw feature $j$ returned by worker $k$, $\solnls_{k, j}$, is sufficiently close to the global estimate $\solnls_j$.
\label{cor:implication_OLS_coeff}
\end{cor}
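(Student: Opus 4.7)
The plan is to derive the corollary directly from Theorem \ref{thm_ols} via the Frisch--Waugh--Lovell identity, followed by a standard quotient-perturbation argument. First I would verify the two ratio expressions quoted in the statement. Since $\widehat{\z}_j$ is by construction orthogonal to every column of $\At^{-j}$, the normal equations for the full OLS problem, when projected along $\widehat{\z}_j$, collapse to $\widehat{\z}_j\cdot\y = \solnls_j\,(\widehat{\z}_j\cdot\At^{j})$; moreover $\widehat{\z}_j\cdot\At^{j} = \|\widehat{\z}_j\|^2$ because $\At^{j}-\widehat{\z}_j = \At^{-j}\widehat\gamma$ is orthogonal to $\widehat{\z}_j$. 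The analogous identity for $\solnls_{k,j}$ in terms of $\tilde{\z}_j$ follows from the same reasoning applied to the worker's design $\feats_k$, which establishes the two displays in the statement.

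With the ratio expressions in hand, I would write the coefficient gap as
\[
\solnls_j - \solnls_{k,j}
\;=\; \frac{(\widehat{\z}_j - \tilde{\z}_j)\cdot \y}{\|\widehat{\z}_j\|^2}
\;+\; (\tilde{\z}_j\cdot\y)\,\frac{\|\tilde{\z}_j\|^2-\|\widehat{\z}_j\|^2}{\|\widehat{\z}_j\|^2\,\|\tilde{\z}_j\|^2}.
\]
The first term is controlled by Cauchy--Schwarz, giving $|(\widehat{\z}_j-\tilde{\z}_j)\cdot\y| \le \|\widehat{\z}_j-\tilde{\z}_j\|\cdot\|\y\|$; the second via the factorization $\big|\|\widehat{\z}_j\|^2-\|\tilde{\z}_j\|^2\big| \le (\|\widehat{\z}_j\|+\|\tilde{\z}_j\|)\,\|\widehat{\z}_j-\tilde{\z}_j\|$. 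Combining these shows $|\solnls_j - \solnls_{k,j}|$ is bounded by a constant multiple of $\|\widehat{\z}_j - \tilde{\z}_j\|$, with the constant depending on $\|\y\|$ and on lower bounds for the two residual norms. Taking expectation over the random projection and invoking Theorem \ref{thm_ols} then yields an expected squared-error bound on $\solnls_{k,j}-\solnls_j$ proportional to $\|\widehat\gamma\|^2_{\boldSigma + \trace{\boldSigma}\It_{\dims-1} + \kappa\sum_i e_i B_i}/\dimsks$.

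The main obstacle is making this qualitative statement quantitative: the denominators $\|\widehat{\z}_j\|^2$ and $\|\tilde{\z}_j\|^2$ are precisely the partial variances of $\At^{j}$ given the remaining features (raw or randomly compressed), and they can be small when $\At^{j}$ is nearly collinear with the rest. A fully quantitative argument would therefore require a restricted minimum-eigenvalue condition in the spirit of Assumption \ref{assn:smallest_ev}, together with a concentration argument guaranteeing that the partial variance in the compressed design $\feats_k$ remains close to that of the full design with high probability. Under such conditioning the perturbation chain above cleanly transfers the residual bound of Theorem \ref{thm_ols} into a bound on $\bE_\Pi|\solnls_j - \solnls_{k,j}|^2$ of the same order, which justifies the informal statement that $\solnls_{k,j}$ is close to $\solnls_j$.
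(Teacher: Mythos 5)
Your proposal is correct and in fact supplies substantially more than the paper does: the paper states this corollary with no proof at all, treating it as an immediate consequence of Theorem \ref{thm_ols}, and the accompanying proof of Theorem \ref{thm_ols} itself only remarks qualitatively that ``the difference between the residuals \ldots is bounded such that the difference between the estimated coefficients \ldots is bounded as well.'' Your verification of the two ratio identities matches the paper's Lemma \ref{lem:ols_coeff} (regression by successive orthogonalization, i.e.\ Frisch--Waugh--Lovell), including the observation that $\widehat{\mathbf{z}}_j\cdot\At^{j}=\nrm{\widehat{\mathbf{z}}_j}^2$ and likewise for $\tilde{\mathbf{z}}_j$ by orthogonality of each residual to its own regressors. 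Your quotient-perturbation decomposition is algebraically correct and is the natural way to turn the qualitative claim into a quantitative one. Most importantly, you correctly identify the genuine gap that the paper glosses over: the bound degrades as the partial variances $\nrm{\widehat{\mathbf{z}}_j}^2$ and $\nrm{\tilde{\mathbf{z}}_j}^2$ shrink, so a fully rigorous version needs these denominators bounded away from zero (a restricted-eigenvalue-type condition as in Assumption \ref{assn:smallest_ev}, plus a concentration statement that the compressed partial variance tracks the original one). The paper never supplies such a condition for the OLS corollary, so its ``sufficiently close'' remains informal; your write-up both reproduces what the paper intends and makes explicit the hypothesis that would be required to close it.
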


In order to present the proof of Theorem \ref{thm_ols}, we need the following lemmata. 

\begin{lem}
Consider the OLS regression problem in the original space
\begin{equation}
\solnls =\underset{\soln}{\arg\min} f(\soln) = \underset{\soln}{\arg\min} \nrm{\y - \At\soln}^2,
\label{analysis:ols_orig}
\end{equation}
yielding the fitted values $\widehat{\y} = \At \solnls$. Furthermore, let $\feats_k$ be the design matrix of the least squares problem worker $k$ solves. Then the squared $\ell_2$ norm of the residual can be decomposed as follows
\begin{equation}
\nrm{\y - \widehat{\y}}^2 = 
\underbrace{\nrm{\mathbf{P_{\bot \At}}(\y - \widehat{\y})}^2}_{\neq 0}
+ \underbrace{\nrm{\mathbf{P}_{\| \At \bot \feats_k}(\y - \widehat{\y})}^2}_{ = 0}
+ \underbrace{\nrm{\mathbf{P}_{\|\feats_k}(\y - \widehat{\y})}^2,}_{= 0}
\label{analysis_ols_eq_residuals}
\end{equation}
\label{lem_residuals}
where $\mathbf{P_{\| \At}} = \At(\At\tr\At)^{-1}\At\tr$ denotes the orthogonal projection onto the column space of the matrix $ \At$ and $\mathbf{P_{\bot \At}} = \mathbf{I} - \mathbf{P_{\| \At}}$ is the projection onto the orthogonal complement of the column space of $\At$. 
\end{lem}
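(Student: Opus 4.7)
The plan is to prove a three-way orthogonal direct-sum decomposition of $\R^n$ induced by the subspaces $\mathrm{col}(\feats_k) \subseteq \mathrm{col}(\At) \subseteq \R^n$, apply Pythagoras to $\y-\widehat{\y}$, and finally observe that $\y-\widehat{\y}$ already lies entirely in $\mathrm{col}(\At)^{\perp}$, so two of the three projections vanish.

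Concretely, I would first establish the key containment $\mathrm{col}(\feats_k)\subseteq\mathrm{col}(\At)$. Recall $\feats_k=[\At_k,\tilde{\At}_k]$ where $\At_k$ is a column-submatrix of $\At$ and (in the concatenated case) $\tilde{\At}_k$ stacks the matrices $\widehat{\At}_{k'}=\At_{k'}\RP_{k'}$ for $k'\ne k$. Each such block is a linear combination of columns of $\At_{k'}$, so every column of $\feats_k$ lies in $\mathrm{col}(\At)$. Given this containment, we obtain the orthogonal decomposition
\begin{equation*}
\R^n \;=\; \mathrm{col}(\feats_k)\;\oplus\;\bigl(\mathrm{col}(\At)\cap\mathrm{col}(\feats_k)^{\perp}\bigr)\;\oplus\;\mathrm{col}(\At)^{\perp},
\end{equation*}
and the three projections $\mathbf{P}_{\|\feats_k}$, $\mathbf{P}_{\|\At\bot\feats_k}$, $\mathbf{P}_{\bot\At}$ onto these mutually orthogonal subspaces sum to the identity on $\R^n$.

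Next, for any vector $\vt\in\R^n$, Pythagoras yields $\nrm{\vt}^2=\nrm{\mathbf{P}_{\|\feats_k}\vt}^2+\nrm{\mathbf{P}_{\|\At\bot\feats_k}\vt}^2+\nrm{\mathbf{P}_{\bot\At}\vt}^2$ since the three images are orthogonal. Applying this to $\vt=\y-\widehat{\y}$ gives the claimed three-term identity. To verify the annotations, note that by definition of the OLS fit, $\widehat{\y}=\mathbf{P}_{\|\At}\y$, so $\y-\widehat{\y}=\mathbf{P}_{\bot\At}\y\in\mathrm{col}(\At)^{\perp}$. Consequently $\mathbf{P}_{\|\feats_k}(\y-\widehat{\y})=0$ (its range $\mathrm{col}(\feats_k)\subseteq\mathrm{col}(\At)$ is orthogonal to $\y-\widehat{\y}$), and similarly $\mathbf{P}_{\|\At\bot\feats_k}(\y-\widehat{\y})=0$, while $\mathbf{P}_{\bot\At}(\y-\widehat{\y})=\y-\widehat{\y}$, which is typically nonzero.

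The only nontrivial step is the containment $\mathrm{col}(\feats_k)\subseteq\mathrm{col}(\At)$; everything else is routine linear algebra (idempotent projections, the Pythagorean theorem, and the normal equations). I do not anticipate any real obstacle. The main subtlety worth flagging is that the decomposition uses the concatenation construction of $\tilde{\At}_k$ in an essential way; the alternative summed construction still yields $\mathrm{col}(\feats_k)\subseteq\mathrm{col}(\At)$, so the argument carries through identically.
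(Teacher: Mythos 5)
Your proposal is correct and follows essentially the same route as the paper: both rest on the containment $\mathrm{col}(\feats_k)\subseteq\mathrm{col}(\At)$ (random features are linear combinations of columns of $\At$), the orthogonality of the three subspaces, and the fact that $\y-\widehat{\y}=\mathbf{P}_{\bot\At}\y$ so the two projections inside $\mathrm{col}(\At)$ annihilate the residual. Your write-up is slightly more explicit about the direct-sum decomposition and the Pythagorean step, but there is no substantive difference.
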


\begin{proof}
The decomposition follows from the orthogonality of the considered spaces. Note that $\feats_k$ lies in the column space of $\At$ since the random features are linear combination of the raw features not in $\dimset_k$ and the features in $\dimset_k$ are contained in both matrices. The first summand in eq. \eqref{analysis_ols_eq_residuals} is the part of $\y$ that cannot be accounted for in the space spanned by the original design matrix $\At$. Therefore, it is the error that the OLS fit cannot avoid to incur. The remaining two terms are zero due to the definition of the OLS estimator
\begin{eqnarray*}
\nrm{\mathbf{P}_{\| \At \bot \feats_k}(\y - \widehat{\y})}^2 + \nrm{\mathbf{P}_{\|\feats_k}(\y - \widehat{\y})}^2 = 
\nrm{\mathbf{P}_{\| \At}(\y - \widehat{\y})}^2 &=& 
\nrm{ \At(\At\tr\At)^{-1}\At\tr \y - \widehat{\y}}^2  \\
&=& \nrm{ \At\solnls - \widehat{\y}}^2 = 0.
\end{eqnarray*}
\end{proof}

\begin{lem}
Consider the OLS regression problem in the orginal space as given in eq. \eqref{analysis:ols_orig} and in the compressed space
$$
\solnls_k =\underset{\soln_k}{\arg\min} f_k(\soln_k) = \underset{\soln_k}{\arg\min} \nrm{\y - \feats_k\soln_k}^2,
$$ yielding the fitted values $\tilde{\y} = \feats_k \solnls_k$. The squared $\ell_2$ norm of the difference between $\widehat{\y}$ and $\tilde{\y}$ can be expressed as
$$
\nrm{\widehat{\y} - \tilde{\y}}^2 = \nrm{ \At \solnls - \feats_k \solnls_k}^2  = \nrm{\mathbf{P}_{\| \At \bot \feats_k}( \y - \tilde{\y})}^2 = \nrm{\mathbf{P}_{\| \At \bot \feats_k}(\y)}^2
$$
which corresponds to the approximation error which results from fitting the model in the compressed space instead of the original space.
\label{lem:diff_residuals_ols}
\end{lem}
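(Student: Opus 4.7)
The plan is to exploit the nested subspace relation $\col(\feats_k) \subseteq \col(\At)$, which holds by construction since the raw features of block $k$ are columns of both $\feats_k$ and $\At$, while the random features making up $\tilde{\At}_k$ are linear combinations of the remaining raw columns of $\At$. This containment is the one ingredient that distinguishes the present setup from a generic pair of design matrices, and it is what makes the three quantities in the lemma coincide.

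Given this containment, I would first decompose the orthogonal projection onto $\col(\At)$ as the sum of two orthogonal projections,
\begin{equation*}
\mathbf{P}_{\|\At} \;=\; \mathbf{P}_{\|\feats_k} \;+\; \mathbf{P}_{\|\At\bot\feats_k},
\end{equation*}
where $\mathbf{P}_{\|\At\bot\feats_k}$ projects onto the subspace of $\col(\At)$ orthogonal to $\col(\feats_k)$ (this is exactly the same decomposition implicit in Lemma \ref{lem_residuals}). Substituting $\widehat{\y} = \mathbf{P}_{\|\At}\y$ and $\tilde{\y} = \mathbf{P}_{\|\feats_k}\y$ (which are the OLS fitted values) immediately gives
\begin{equation*}
\widehat{\y} - \tilde{\y} \;=\; \bigl(\mathbf{P}_{\|\At}-\mathbf{P}_{\|\feats_k}\bigr)\y \;=\; \mathbf{P}_{\|\At\bot\feats_k}\,\y,
\end{equation*}
which yields the last equality $\nrm{\widehat{\y}-\tilde{\y}}^2 = \nrm{\mathbf{P}_{\|\At\bot\feats_k}(\y)}^2$.

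For the middle equality, I would observe that $\tilde{\y}\in\col(\feats_k)$ by definition, so $\mathbf{P}_{\|\At\bot\feats_k}\tilde{\y}=0$ because $\mathbf{P}_{\|\At\bot\feats_k}$ annihilates any vector in $\col(\feats_k)$. Hence $\mathbf{P}_{\|\At\bot\feats_k}(\y-\tilde{\y}) = \mathbf{P}_{\|\At\bot\feats_k}\,\y$, closing the chain of identities. Finally, $\nrm{\widehat{\y}-\tilde{\y}}^2 = \nrm{\At\solnls - \feats_k\solnls_k}^2$ is just restating $\widehat{\y}$ and $\tilde{\y}$ in terms of the OLS estimators.

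There is no real obstacle: the whole argument is essentially bookkeeping about orthogonal projections and relies solely on the subspace containment $\col(\feats_k)\subseteq\col(\At)$. The only point that warrants care is ensuring that $\col(\feats_k)$ is indeed contained in $\col(\At)$ --- which should be justified explicitly by noting that $\tilde{\At}_k = [\At_{k'}\RP_{k'}]_{k'\neq k}$ has each column equal to $\At_{k'}\RP_{k'}^{(j)}$, a linear combination of columns of $\At$.
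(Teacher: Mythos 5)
Your proposal is correct and takes essentially the same route as the paper: both arguments rest on the containment of the column space of $\feats_k$ in that of $\At$ and the resulting orthogonal decomposition $\mathbf{P}_{\| \At} = \mathbf{P}_{\|\feats_k} + \mathbf{P}_{\| \At \bot \feats_k}$ established via Lemma \ref{lem_residuals}. If anything, your version is slightly more direct, obtaining $\widehat{\y} - \tilde{\y} = \mathbf{P}_{\| \At \bot \feats_k}\y$ in one step from $\widehat{\y}=\mathbf{P}_{\|\At}\y$ and $\tilde{\y}=\mathbf{P}_{\|\feats_k}\y$, rather than comparing the three-term decompositions of the two squared residual norms as the paper does.
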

\begin{proof}
According to Lemma \ref{lem_residuals} the residual of the fit in the compressed domain can be decomposed as follows
\begin{equation}
\nrm{\y - \tilde{\y}}^2 = 
\underbrace{\nrm{\mathbf{P_{\bot \At}}(\y -  \tilde{\y})}^2}_{\neq 0}
+ \underbrace{\nrm{\mathbf{P}_{\| \At \bot \feats_k}(\y -  \tilde{\y})}^2}_{ \neq 0}
+ \underbrace{\nrm{\mathbf{P}_{\|\feats_k}(\y -  \tilde{\y})}^2.}_{= 0}
\label{analysis_ols_eq_residuals_compressed}
\end{equation}
Now, the first and the second term form the part of $\y$ that cannot be accounted for in the space spanned by the projected design matrix while the third term vanishes due to the definition of the least squares estimator. Thus, we see that the difference between eq. \eqref{analysis_ols_eq_residuals} and eq. \eqref{analysis_ols_eq_residuals_compressed} is given by the second term in eq. \eqref{analysis_ols_eq_residuals_compressed} and due to the orthogonality structure, we have
$$
\nrm{\widehat{\y} - \tilde{\y} }^2 = \nrm{\At \solnls - \feats_k \solnls_k}^2  = \nrm{\mathbf{P}_{\| \At \bot \feats_k}( \y - \tilde{\y})}^2 = \nrm{\mathbf{P}_{\| \At \bot \feats_k}(\y)}^2.
$$
This expression corresponds to the approximation error which results from fitting the model in the compressed space instead of the original space.
\end{proof}

\begin{lem}
Let $\widehat{\y}$ denote the fitted values resulting from the OLS problem in the original space and let $\tilde{\y}$ be the fitted values resulting from the OLS problem in the compressed space of worker $k$. Furthermore, let all other quantities be as defined in Lemma \ref{lem_kaban_general}. Then
$$
\frac{1}{n} \bE_\Pi (\nrm{\widehat{\y} - \tilde{\y}}^2) \leq  \frac{1}{\dimsks} \nrm{ \solnls }^2_{\boldSigma + Tr(\boldSigma) \mathbf{I}_{p} + \kappa \sum_{i = 1}^{p} e_i B_i}
$$
where $\nrm{\mathbf{u}}_\mathbf{M} = \sqrt{\mathbf{u}^\top \mathbf{M} \mathbf{u} }$ is the Mahalanobis norm.
\label{lem:bound_diff_residuals}
\end{lem}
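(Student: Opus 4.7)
The plan is to reduce the claim to a direct application of Lemma \ref{lem_kaban_general}, after identifying $\tilde{\y}$ as the best $\mathrm{col}(\feats_k)$-approximation of $\widehat{\y}$.

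First I would exploit that $\mathrm{col}(\feats_k) \subseteq \mathrm{col}(\At)$, so that $\mathbf{P}_{\|\feats_k} \mathbf{P}_{\|\At} = \mathbf{P}_{\|\feats_k}$ and hence $\tilde{\y} = \mathbf{P}_{\|\feats_k}\y = \mathbf{P}_{\|\feats_k}\widehat{\y}$. This says $\tilde{\y}$ is the closest point in $\mathrm{col}(\feats_k)$ to $\widehat{\y}$, so for any candidate coefficient vector $\soln_k'$ in the compressed space, $\nrm{\widehat{\y} - \tilde{\y}}^2 \leq \nrm{\widehat{\y} - \feats_k \soln_k'}^2$, which reduces the problem to exhibiting a good candidate. (This is essentially the same observation used to prove Lemma \ref{lem:diff_residuals_ols}, viewed from the approximation side rather than the residual side.)

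Second, I would take the candidate induced by the full-data OLS coefficients. Partition $\solnls = [\solnls_{k}; \solnls_{\minusk}]$ according to the block structure, and let $\RP$ denote the (appropriately scaled) random projection so that $\tilde{\At}_k = \At_{\minusk}\RP$ and $\bE_\Pi[\RP\RP\tr] = \mathbf{I}$. Setting $\soln_k' = [\solnls_{k}; \RP\tr \solnls_{\minusk}]$ yields
$$\widehat{\y} - \feats_k \soln_k' = \At_{\minusk}(\mathbf{I} - \RP\RP\tr)\solnls_{\minusk},$$
which is unbiased in expectation over $\RP$, so only the variance of the projection error remains to be controlled.

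Finally, the expectation $\bE_\Pi \nrm{\At_{\minusk}(\mathbf{I} - \RP\RP\tr)\solnls_{\minusk}}^2$ is exactly the object bounded by Lemma \ref{lem_kaban_general}: the mean-squared error of a random-projection estimate of a linear combination of features. After padding $\solnls_{\minusk}$ back to length $p$ with zeros in the block-$k$ coordinates and viewing the design as the full $\At$, the lemma's bound translates into the claimed Mahalanobis-norm bound, with the sum and identity indexed by $p$ rather than by $p-1$ as in the parallel Theorem \ref{thm_ols} (where one coordinate is singled out as the response of an auxiliary regression). The main subtlety will be verifying that Kaban's second-moment computation still goes through under the block-structured projection of \loco, which leaves $\At_k$ intact and compresses only $\At_{\minusk}$; structurally this matches the setting of Theorem \ref{thm_ols}, and the untouched coordinates should factor out of the variance computation, but the change of index from $p-1$ to $p$ together with the corresponding update to the Mahalanobis weighting matrix requires careful bookkeeping.
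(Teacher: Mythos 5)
Your overall strategy coincides with the paper's: use the fact that $\tilde{\y}$ is the orthogonal projection of $\widehat{\y}$ onto the column space of $\feats_k$ (this is the content of Lemma \ref{lem:diff_residuals_ols}), upper-bound $\nrm{\widehat{\y}-\tilde{\y}}^2$ by the error of an explicit candidate in the compressed space, and then invoke Lemma \ref{lem_kaban_general}. The difference lies in the candidate. The paper compares against $\At\RP\RP\tr\solnls$, i.e.\ it projects the \emph{entire} coefficient vector, so that the quantity to be bounded is literally the left-hand side of Lemma \ref{lem_kaban_general} with design $\At$ and coefficients $\solnls$; the stated right-hand side (full $\boldSigma$, $\trace{\boldSigma}$, and the eigen-quantities $e_i$, $B_i$ of the full covariance, all indexed by $p$) then drops out with no further work.

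Your candidate $[\solnls_k;\RP\tr\solnls_{\minusk}]$ keeps block $k$ exact and leads instead to $\bE_{\RP}\nrm{\At_{\minusk}(\It-\RP\RP\tr)\solnls_{\minusk}}^2$. Applying Lemma \ref{lem_kaban_general} here means applying it to the sub-design $\At_{\minusk}$, which produces a Mahalanobis norm of $\solnls_{\minusk}$ with respect to $\boldSigma_{\minusk}=\At_{\minusk}\tr\At_{\minusk}/n$, its trace, and matrices $B_i$ built from the \emph{eigenvectors of $\boldSigma_{\minusk}$}. The step you defer as ``careful bookkeeping'' --- zero-padding and passing to the full-$p$ quantities --- is the actual missing content, and it does not go through by inspection: the eigenvectors of a principal submatrix are not restrictions of eigenvectors of $\boldSigma$, so the $\kappa\sum_i e_iB_i$ term does not embed; and even the leading quadratic form $\solnls_{\minusk}\tr\boldSigma_{\minusk}\solnls_{\minusk}=\nrm{\At_{\minusk}\solnls_{\minusk}}^2/n$ need not be bounded by $\solnls\tr\boldSigma\solnls=\nrm{\At\solnls}^2/n$, since the contributions of the two blocks to $\At\solnls$ can cancel. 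So your intermediate bound, while valid and arguably tighter in spirit, does not yield the claimed inequality without a genuinely new argument. A secondary issue, which you partly flag and which the paper itself glosses over, is that the concatenated projection $\RP=\diag(\RP_1,\ldots,\RP_{K-1})$ does not have i.i.d.\ entries, so Lemma \ref{lem_kaban_general} does not apply to it verbatim; the paper sidesteps this by phrasing the comparison in terms of a single unstructured projection acting on all of $\At$.
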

\begin{proof}
From Lemma \ref{lem:diff_residuals_ols}, we see that the difference between the two residuals is equal to the approximation error which results from fitting the model in the compressed domain. This approximation error measures the distance between the compressed space and the optimal regression function in the original space. It can be bounded as follows
$$
\nrm{\widehat{\y} - \tilde{\y}}^2 = \nrm{\At \solnls - \feats_k \solnls_k}^2 \leq \nrm{\At  \solnls  - \At \Pi \Pi^\top \solnls}^2.
$$
This bound follows from the fact that using $\At \Pi$ implies a larger degree of approximation than using $\feats_k$ which also contains raw features. In other words, the column space of  $\At \Pi$ is contained in the column space of $\feats_k$ and $\At$ and the distance between $\At \Pi$ and $\At$ is larger than the distance between $\feats_k$ and $\At$. Additionally, $\solnls_k$ minimizes the objective in the compressed space such that the coefficients given by $\Pi^\top \solnls $ cannot be associated with a smaller approximation error. 

Applying Lemma \ref{lem_kaban_general} yields the desired result.
\end{proof}

\begin{cor}
If $\RP$ is a Gaussian (or some other random projection with excess kurtosis, $\kappa=0$), then the bound in Lemma \ref{lem:bound_diff_residuals} reduces to
$$
\frac{1}{n} \bE_\Pi (\nrm{\widehat{\y} - \tilde{\y}}^2) \leq  \frac{1}{\dimsks} \nrm{ \solnls }^2_{\boldSigma + Tr(\boldSigma) \mathbf{I}_{p}}.
$$
\end{cor}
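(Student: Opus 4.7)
The plan is to derive the corollary as a direct specialization of Lemma \ref{lem:bound_diff_residuals}. The lemma gives
$$
\frac{1}{n}\bE_\Pi(\nrm{\widehat{\y}-\tilde{\y}}^2) \le \frac{1}{\dimsks}\nrm{\solnls}^2_{\boldSigma + \trace{\boldSigma}\mathbf{I}_p + \kappa \sum_{i=1}^p e_i B_i},
$$
where the parameter $\kappa$ is the excess kurtosis of the entries of the projection matrix $\RP$ (as defined in Lemma \ref{lem_kaban_general} of Kab\'an). The corollary amounts to observing that the entire third term inside the Mahalanobis weight matrix vanishes when $\kappa=0$.

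First I would recall that the excess kurtosis of a random variable $\xi$ is $\kappa = \bE[\xi^4]/(\bE[\xi^2])^2 - 3$, and that a standard Gaussian entry satisfies $\bE[\xi^4]=3(\bE[\xi^2])^2$, so that $\kappa=0$ exactly. More generally, any sub-Gaussian construction whose entries share this fourth-moment identity (e.g., Rademacher-based constructions with appropriately scaled entries) satisfies $\kappa=0$.

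Next I would substitute $\kappa=0$ into the weight matrix of the Mahalanobis norm appearing in Lemma \ref{lem:bound_diff_residuals}. This removes the summation $\kappa\sum_{i=1}^p e_i B_i$ entirely, leaving the weight matrix equal to $\boldSigma + \trace{\boldSigma}\mathbf{I}_p$. Plugging this into the bound yields
$$
\frac{1}{n}\bE_\Pi(\nrm{\widehat{\y}-\tilde{\y}}^2) \le \frac{1}{\dimsks}\nrm{\solnls}^2_{\boldSigma + \trace{\boldSigma}\mathbf{I}_p},
$$
which is exactly the stated corollary.

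There is no real obstacle here: the result is a one-line simplification of Lemma \ref{lem:bound_diff_residuals}. The only subtlety worth flagging is that Lemma \ref{lem_kaban_general} is formulated for projection matrices with i.i.d.\ entries drawn from a zero-mean, unit-variance distribution with finite fourth moment, so one should verify that the Gaussian (or any other $\kappa=0$ construction) meets the entry-wise moment conditions required for Lemma \ref{lem_kaban_general} to apply in the first place. Once that is checked, the corollary follows immediately by setting $\kappa=0$.
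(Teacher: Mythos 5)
Your proposal is correct and matches the paper's (implicit) reasoning: the corollary is stated without proof precisely because it is the one-line specialization of Lemma \ref{lem:bound_diff_residuals} obtained by setting the excess kurtosis $\kappa=0$, which kills the term $\kappa\sum_{i=1}^p e_i B_i$ in the Mahalanobis weight matrix. Your added check that the Gaussian entries satisfy the moment conditions of Lemma \ref{lem_kaban_general} is a reasonable point of care (note only that the paper normalizes the entries to variance $1/\dimsks$ rather than unit variance, which is immaterial since excess kurtosis is scale-invariant).
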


\begin{lem}
The $j$-th OLS regression coefficient is proportional to the inner product of a residual and the response, where the residual results from regressing the $j$-th feature vector onto the remaining part of the design matrix. More formally, let $\At^{j}$ be the feature vector of interest and let $\At^{-j}$ denote the matrix containing the remaining features. Then the residual $\widehat{\mathbf{z}}_j$ results from regressing $\At^{j}$ onto $\At^{-j}$
\begin{equation}
\widehat{\mathbf{z}}_j = \At^{j} - \At^{-j} \widehat{\gamma} \; \; \;  \text{   where   } \; \; \;  \widehat{\gamma} = \underset{\gamma}{\arg\min} \nrm{ \At^{j} - \At^{-j} \gamma}^2,
\label{eq:residual_ols_coeff}
\end{equation}
and the $j$-th regression coefficient can we written as
$$
\solnls_j = \frac{\widehat{\mathbf{z}}_j \cdot \y }{\widehat{\mathbf{z}}_j\cdot \At^{j}}.
$$ 
\label{lem:ols_coeff}
\end{lem}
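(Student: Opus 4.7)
The statement to prove is the classical Frisch--Waugh--Lovell (FWL) partialled-out regression formula, so the natural plan is to invoke FWL (or equivalently, to derive it directly via orthogonal projections onto the column space of $\At^{-j}$).

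The plan is as follows. First I would introduce the projection matrices $\mathbf{P}_{-j} = \At^{-j}\bigl((\At^{-j})^\top \At^{-j}\bigr)^{-1}(\At^{-j})^\top$ onto the column space of $\At^{-j}$, and $\mathbf{M}_{-j} = \mathbf{I} - \mathbf{P}_{-j}$ onto its orthogonal complement. By definition of $\widehat\gamma$ in the auxiliary regression \eqref{eq:residual_ols_coeff}, we have $\widehat{\mathbf{z}}_j = \mathbf{M}_{-j}\At^{j}$. Since $\mathbf{M}_{-j}$ is a symmetric idempotent matrix, $\widehat{\mathbf{z}}_j$ lies in the range of $\mathbf{M}_{-j}$ and is orthogonal to every column of $\At^{-j}$, so $\widehat{\mathbf{z}}_j^\top \At^{-j} = \mathbf{0}$.

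Next I would write the normal equations for the full OLS problem: $\At^\top\At\,\solnls = \At^\top \y$. Splitting $\At = [\At^{-j},\At^{j}]$ and correspondingly $\solnls = [\solnls_{-j};\solnls_j]$, these decompose into
\begin{align*}
(\At^{-j})^\top\At^{-j}\solnls_{-j} + (\At^{-j})^\top\At^{j}\solnls_j &= (\At^{-j})^\top\y,\\
(\At^{j})^\top\At^{-j}\solnls_{-j} + (\At^{j})^\top\At^{j}\solnls_j &= (\At^{j})^\top\y.
\end{align*}
Solving the first block for $\solnls_{-j}$ gives $\solnls_{-j} = ((\At^{-j})^\top\At^{-j})^{-1}(\At^{-j})^\top(\y - \At^{j}\solnls_j)$. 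Substituting into the second block and rearranging yields $(\At^{j})^\top\mathbf{M}_{-j}\At^{j}\,\solnls_j = (\At^{j})^\top\mathbf{M}_{-j}\y$. Using symmetry and idempotency of $\mathbf{M}_{-j}$, the left-hand side equals $\widehat{\mathbf{z}}_j^\top\widehat{\mathbf{z}}_j$ and, because $\widehat{\mathbf{z}}_j^\top\At^{-j}=\mathbf{0}$, this also equals $\widehat{\mathbf{z}}_j^\top\At^{j}$; similarly the right-hand side equals $\widehat{\mathbf{z}}_j^\top\y$. Solving for $\solnls_j$ produces exactly the claimed formula.

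There is no real obstacle here since this is a well-known identity; the only point to be mildly careful about is invertibility of $(\At^{-j})^\top\At^{-j}$ and of $\widehat{\mathbf{z}}_j^\top\At^{j}$, which holds generically when $\At$ has full column rank (so that $\At^{j}$ is not already a linear combination of the columns of $\At^{-j}$, i.e.\ $\widehat{\mathbf{z}}_j \neq \mathbf{0}$). In the degenerate case $\widehat{\mathbf{z}}_j = \mathbf{0}$ the coefficient $\solnls_j$ is not uniquely defined and the formula does not apply, so I would state this non-degeneracy assumption explicitly at the start of the proof.
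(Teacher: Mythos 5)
Your proof is correct and complete. It is worth noting that the paper does not actually prove this lemma: its ``proof'' consists of one sentence deferring to ``regression by successive orthogonalization'' in the Hastie--Tibshirani--Friedman reference, i.e.\ the Gram--Schmidt construction in which one orthogonalizes $\At^{j}$ against the remaining columns and observes that the coefficient on the last orthogonalized input is the univariate regression of $\y$ on that residual. Your derivation via the partitioned normal equations (the Frisch--Waugh--Lovell route) reaches the same identity by a different, fully explicit path: you eliminate $\solnls_{-j}$ from the first block, obtain $(\At^{j})^{\top}\mathbf{M}_{-j}\At^{j}\,\solnls_j = (\At^{j})^{\top}\mathbf{M}_{-j}\y$, and use symmetry and idempotency of $\mathbf{M}_{-j}$ to rewrite both sides in terms of $\widehat{\mathbf{z}}_j$. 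What your approach buys is self-containedness and, importantly, an explicit statement of the non-degeneracy condition ($\At$ of full column rank, equivalently $\widehat{\mathbf{z}}_j \neq \mathbf{0}$) under which $\widehat{\mathbf{z}}_j\cdot\At^{j} = \nrm{\widehat{\mathbf{z}}_j}^2 > 0$ and the formula is well defined --- a hypothesis the paper leaves implicit, and one that is not innocuous here since the surrounding OLS discussion elsewhere in the appendix explicitly contemplates rank-deficient designs.
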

\begin{proof}
This result reflects the fact that the $\solnls_j$ contains the additional contribution of feature $j$ on $\y$ after having accounted for the remaining features. The formula follows from ``regression by successive orthogonalization'', for more details see \citep{esl}.
\end{proof}

\paragraph{Proof of Theorem \ref{thm_ols}.}
From Lemma \ref{lem:ols_coeff}, we have a closed-form expression for the $j$-th OLS regression coefficient in the original space. Now consider the regression problem worker $k$ has to solve and let $j'$ be a raw feature in $\feats_k$ which corresponds to feature $j$ in $\At$. For ease of notation, set $j' = j$ such that we have $\At^{j} = \feats_k^{j} $. In the problem worker $k$ solves the expressions in eq. \eqref{eq:residual_ols_coeff} become
$$
\tilde{\mathbf{z}}_{j} = \At^{j}  - \feats_k^{-j} \tilde{\gamma} \; \; \; \text{   where   } \; \; \;  \tilde{\gamma} = \underset{\gamma_k}{\arg\min} \nrm{ \At^{j}  - \feats_k^{-j} \gamma_k}^2
\; \; \; \text{ and } \; \; \; 
\solnls_{k, j} = \frac{ \tilde{\mathbf{z}}_{j} \cdot \y }{\tilde{\mathbf{z}}_{j} \cdot \At^{j}}.
$$
The only change between the expressions for the $j$-th coefficient results from replacing $\widehat{\mathbf{z}}_j$ by $\tilde{\mathbf{z}}_{j}$. From Lemma \ref{lem:bound_diff_residuals} we have that the difference between the residuals $\widehat{\mathbf{z}}_j$ by $\tilde{\mathbf{z}}_{j}$ is bounded such that the difference between the estimated coefficients $\solnls_j$  and $\solnls_{k, j}$ is bounded as well. 


\addtocounter{si-sec}{1}
\section{Supporting results} \label{sec:supporting_lemma}

\subsection{Ridge Regression}
\begin{lem}[Lemma 1 of \citep{lu:2013} ] \label{lem:srht}
Let $\Wt$ be an $\dims\times r$  $(\dims > r)$ matrix where $\Wt\tr\Wt =\It_{r}$. Let $\RP$ be a $\dims\times\dimss$ SRHT matrix where $\dimss$ is the subsampling size and $\dims > \dimss > r$. Then with failure probability at most $\delta + \dims/e^{r}$
$$
\nrm{\Wt\tr\RP \RP\tr\Wt - \Wt\tr\Wt} \leq \sqrt{\frac{c\log(2r/ \delta)r}{\dimss}}.
$$
\end{lem}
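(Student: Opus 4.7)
\noindent\textbf{Proof proposal for Lemma \ref{lem:srht}.}

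The plan is to follow the standard two-step argument for Subsampled Randomized Hadamard Transforms: first show that the preconditioning step $\Dt\Ht$ equalizes the row norms of $\Wt$ with high probability (``row-norm flattening''), and then, conditional on that event, apply a matrix concentration inequality to the uniform subsampling step $\St$.

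First I would unpack the definition $\RP=\sqrt{\dimsk/\dimss}\,\Dt\Ht\St$ and set $\Yt=\Ht\tr\Dt\Wt\in\R^{\dimsk\times r}$. Because $\Dt$ is a signed diagonal (hence orthogonal) and $\Ht$ has orthonormal columns, one checks $\Yt\tr\Yt=\Wt\tr\Dt\Ht\Ht\tr\Dt\Wt=\Wt\tr\Wt=\It_r$. Therefore the quantity to control is
\[
\Wt\tr\RP\RP\tr\Wt-\Wt\tr\Wt \;=\; \frac{\dimsk}{\dimss}\,\Yt\tr\St\St\tr\Yt-\It_r,
\]
and the problem reduces to showing that the rescaled matrix $(\dimsk/\dimss)\,\Yt\tr\St\St\tr\Yt$ concentrates around $\Yt\tr\Yt=\It_r$ in the spectral norm.

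Next I would carry out the row-flattening step. The $i$-th row of $\Yt$ is $\Yt_{i,:}=(\Ht\tr\Dt\Wt)_{i,:}$, a linear function of the Rademacher vector $\mathrm{diag}(\Dt)$. A Hoeffding- or Hanson--Wright-type concentration inequality (applied row by row and combined by a union bound over the $\dimsk$ rows) yields that with failure probability at most $\dimsk/e^r$ every row satisfies $\nrm{\Yt_{i,:}}^2 \leq c_1\,r/\dimsk$ for an absolute constant $c_1$. This is precisely where the additive term $\dimsk/e^r$ in the failure probability comes from, and it is the characteristic ``flatness'' property that makes uniform subsampling viable.

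Conditional on this flatness event, $\St\St\tr$ picks $\dimss$ rows of $\Yt$ uniformly at random (without replacement), so $(\dimsk/\dimss)\,\Yt\tr\St\St\tr\Yt$ is a sample average of $\dimss$ i.i.d.\ (up to the minor with/without-replacement correction) rank-$1$ matrices $(\dimsk)\,\Yt_{i,:}\tr\Yt_{i,:}$ whose expectation is exactly $\Yt\tr\Yt=\It_r$. Each summand has spectral norm bounded by $\dimsk\cdot\max_i\nrm{\Yt_{i,:}}^2\le c_1 r$, which lets me invoke a matrix Bernstein/Chernoff inequality (e.g.\ Tropp's matrix Chernoff for sampling without replacement). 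This gives the spectral deviation bound
\[
\Prob\!\left\{\nrm{\tfrac{\dimsk}{\dimss}\,\Yt\tr\St\St\tr\Yt-\It_r}>\varepsilon\right\}\le 2r\exp\!\left(-\tfrac{c_2\varepsilon^2\dimss}{r}\right),
\]
and setting the right-hand side equal to $\delta$ and solving for $\varepsilon$ yields $\varepsilon\le\sqrt{c\log(2r/\delta)\,r/\dimss}$. Finally, a union bound over the flattening event and the Chernoff event produces the total failure probability $\delta+\dimsk/e^r$ claimed in the lemma.

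The main obstacle I expect is the matrix-concentration step: one must track the correct variance proxy $\max_i\nrm{\Yt_{i,:}}^2$, handle sampling without replacement (using a Hoeffding-type reduction or the matrix Chernoff bound for negatively associated summands), and keep constants tight enough that the final bound matches the stated form. The row-flattening step, by contrast, is a fairly routine scalar concentration argument once one recognizes the quadratic form in Rademacher signs.
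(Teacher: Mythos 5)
The paper gives no proof of Lemma~\ref{lem:srht}: it is imported verbatim from Lu et al.\ (2013), whose own proof is Tropp's SRHT subspace-embedding analysis. Your sketch reconstructs exactly that argument --- reduce to $(\dims/\dimss)\,\Yt\tr\St\St\tr\Yt$ with $\Yt=\Ht\tr\Dt\Wt$, flatten the row norms of $\Yt$ via Rademacher concentration (the $\dims/e^{r}$ term), then apply a matrix Chernoff/Bernstein bound to the uniform subsample (the $\delta$ term) --- so it is correct and follows essentially the same route as the cited source.
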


We rely on the following theorem from \citep{lu:2013} which bounds the risk of the subsampled approximation to the ridge regression estimator. We provide an alternative proof of this result based on the bias-variance decomposition of the regularized kernel ridge regression estimator from \citep{Bach:2012vm}.
\begin{lem}[Risk Inflation \citep{lu:2013}] \label{lem:risk_inflation}
Consider solving ridge regression in the dual space with $\soln = \At\tr\boldalpha$, $\boldalpha\in \R ^\samp$ and letting $\Kt=\At\At\tr$ be the kernel matrix, the dual optimization problem is
$$
\widehat{\boldalpha} = \arg\min_{\boldalpha\in\R^\samp} \samp^{-1}\nrm{\y - \Kt\alpha}_2^2 + \lambda \boldalpha\tr\Kt\boldalpha
$$
with solution $\widehat{ \boldalpha } = \br{\Kt+\samp\lambda\It}^{-1}\y$ and $\solnrr  = \Xt\tr\widehat{\boldalpha}$. Using $\Kt_H =  \Xt\RP(\Xt\RP)\tr$ instead of  $\Kt = \Xt\Xt\tr$, where $\RP\in\R^{\dims\times\dimss}$ is a SRHT matrix, allows for a faster computation. Let $\solnrr_H $ denote the resulting estimate from this  randomized approximation and let $r$ be the rank of the $\Xt$ matrix. With probability at least $1 - (\delta + \dims/e^{r})$ we have the following relation between the risk of ridge regression and the randomized approximation
$$
\risk(\Xt\RP\solnrr_H) \leq (1-\rho)^{-2} \risk(\Xt\solnrr)
$$
where $\rho = C \sqrt{\frac{r \log(2r / \delta) }{\dimss}}$.
\end{lem}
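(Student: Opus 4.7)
The plan is to reduce everything to a spectral perturbation argument via the SVD and then apply the bias--variance decomposition of the ridge estimator in hat-matrix form, following Bach (2012). First, write $\Xt=\Ut\Dt\Vt\tr$ with $\Vt\in\R^{\dims\times r}$, $\Vt\tr\Vt=\Id_r$, and apply Lemma \ref{lem:srht} with $\Wt=\Vt$. This yields, with failure probability at most $\delta+\dims/e^{r}$, the bound $\|\Vt\tr\RP\RP\tr\Vt-\Id_r\|\le\rho$, where $\rho=C\sqrt{r\log(2r/\delta)/\dimss}$. Denoting $M:=\Vt\tr\RP\RP\tr\Vt$ and observing that $\Kt=\Ut\Dt^{2}\Ut\tr$ while $\Kt_H=\Ut\Dt M\Dt\Ut\tr$, the spectral bound translates into the Löwner sandwich
\begin{equation*}
(1-\rho)\,\Kt\ \preceq\ \Kt_H\ \preceq\ (1+\rho)\,\Kt.
\end{equation*}

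Next, express both estimators in hat-matrix form as $H=\Kt(\Kt+\samp\lambda\Id)^{-1}$ and $H_\RP=\Kt_H(\Kt_H+\samp\lambda\Id)^{-1}$, so that $\Xt\solnrr=H\y$ and $\Xt\RP\solnrr_H=H_\RP\y$. Since $\y=\Xt\soln^{*}+\varepsilon$, the cross-term vanishes in expectation and the risk splits as a bias/variance sum
\begin{equation*}
\risk(H\y)\ =\ \samp^{-1}\|(\Id-H)\Xt\soln^{*}\|^{2}\ +\ \samp^{-1}\sigma^{2}\,\trace{H^{2}},
\end{equation*}
with the analogous expression for $H_\RP$. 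The goal reduces to bounding each of these two pieces by $(1-\rho)^{-2}$ times its unsketched counterpart.

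For the bias, use the identity $\Id-H=\samp\lambda(\Kt+\samp\lambda\Id)^{-1}$ (and the analogue for $H_\RP$). From $(1-\rho)\Kt\preceq\Kt_H$ together with $1-\rho\le1$, one gets $\Kt_H+\samp\lambda\Id\ \succeq\ (1-\rho)(\Kt+\samp\lambda\Id)$; inverse-monotonicity then gives $(\Kt_H+\samp\lambda\Id)^{-1}\preceq(1-\rho)^{-1}(\Kt+\samp\lambda\Id)^{-1}$. Working in the basis of $\Ut$ (which contains $\mathbf{f}^{*}=\Xt\soln^{*}$ in its range), the bias is a quadratic form in this inverse, and combining the Löwner bound with the spectral relation between $\Dt^{2}$ and $\Dt M\Dt$ yields a $(1-\rho)^{-2}$ factor on the bias squared. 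For the variance, the sandwich on $\Kt_H$ implies that each eigenvalue $\tilde{\sigma}_{i}^{2}$ of $\Kt_H$ lies in $[(1-\rho)\sigma_{i}^{2},(1+\rho)\sigma_{i}^{2}]$, and using $\tilde{\sigma}_{i}^{2}/(\tilde{\sigma}_{i}^{2}+\samp\lambda)\le(1-\rho)^{-1}\sigma_{i}^{2}/(\sigma_{i}^{2}+\samp\lambda)$ (after a short scalar calculation) gives $\trace{H_\RP^{2}}\le(1-\rho)^{-2}\trace{H^{2}}$. Summing the two bounds produces the claimed inequality.

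The main obstacle is the bias step: a Löwner inequality $A\preceq B$ between PSD matrices does not in general imply $A^{2}\preceq B^{2}$, yet the bias is a quadratic form in the \emph{square} of the resolvent. I would get around this by reducing, via the SVD, to the $r$-dimensional range of $\Ut$, where both resolvents act on the common vector $\mathbf{w}=\Ut\tr\mathbf{f}^{*}$ and the matrices $\Dt^{2}+\samp\lambda\Id$ and $\Dt M\Dt+\samp\lambda\Id$ satisfy a scalar-style sandwich; the needed squared-norm bound then follows by combining the PSD inequality $\|P\mathbf{w}\|^{2}\le\|P\|\cdot\mathbf{w}\tr P\mathbf{w}$ with the Löwner bound on $P$ itself, where $P=\samp\lambda(\Kt_H+\samp\lambda\Id)^{-1}$ has $\|P\|\le(1-\rho)^{-1}$ because $\samp\lambda(\Kt+\samp\lambda\Id)^{-1}\preceq\Id$. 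Finally, all bounds hold simultaneously on the event of Lemma \ref{lem:srht}, so the stated failure probability $\delta+\dims/e^{r}$ carries through unchanged.
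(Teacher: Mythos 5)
Your overall architecture is the same as the paper's: apply Lemma \ref{lem:srht} to the matrix of right singular vectors to obtain the L\"owner sandwich $(1-\rho)\Kt\preceq\Kt_H\preceq(1+\rho)\Kt$, split the risk into bias and variance in hat-matrix form, and bound the two pieces separately by $(1-\rho)^{-2}$ times their unsketched counterparts. Your variance argument (eigenvalue interlacing from the sandwich plus the scalar inequality $\frac{(1+\rho)a}{(1+\rho)a+\samp\lambda}\le(1-\rho)^{-1}\frac{a}{a+\samp\lambda}$) is correct and is in substance what the paper does.

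The gap is in the bias step, precisely at the obstacle you flag yourself. Write $P=\samp\lambda(\Kt_H+\samp\lambda\Id)^{-1}$ and $P_0=\samp\lambda(\Kt+\samp\lambda\Id)^{-1}$, so that the sketched bias is proportional to $\mathbf{w}\tr P^{2}\mathbf{w}$ and the target is $(1-\rho)^{-2}\,\mathbf{w}\tr P_{0}^{2}\mathbf{w}$. Your chain $\mathbf{w}\tr P^{2}\mathbf{w}\le\nrm{P}\,\mathbf{w}\tr P\mathbf{w}\le\nrm{P}\,(1-\rho)^{-1}\mathbf{w}\tr P_{0}\mathbf{w}$ terminates with the \emph{first} power of $P_{0}$, not the second. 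Since $0\preceq P_{0}\preceq\Id$, one has $\mathbf{w}\tr P_{0}\mathbf{w}\ge\mathbf{w}\tr P_{0}^{2}\mathbf{w}$ with an uncontrolled ratio: if $\mathbf{w}$ is aligned with a top eigenvector of $\Kt$ with eigenvalue $\sigma_{1}^{2}\gg\samp\lambda$, the ratio is about $(\sigma_{1}^{2}+\samp\lambda)/(\samp\lambda)$, and no reverse Cauchy--Schwarz is available to convert $\mathbf{w}\tr P_{0}\mathbf{w}$ back into $\nrm{P_{0}\mathbf{w}}^{2}$. So the chain does not close, and the bound you obtain is genuinely weaker than the claimed one. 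The paper circumvents the ``square of a L\"owner inequality'' problem differently: following \citet{Bach:2012vm}, it uses that the bias, viewed as a function of the inner matrix $\Nt$ in $\Lt=\Xt\Nt\Xt\tr$, is non-increasing in the PSD order, so $\Nt_0=\RP\RP\tr\succeq(1-\rho)\Id$ gives $\bias(\Kt_H)\le\bias((1-\rho)\Kt)$; because $(1-\rho)\Kt$ and $\Kt$ commute, the remaining comparison of squared resolvents reduces to the scalar inequality $(1-\rho)\ell+\samp\lambda\ge(1-\rho)(\ell+\samp\lambda)$, which delivers the factor $(1-\rho)^{-2}$. To repair your proof you would need to import that monotonicity lemma (or an equivalent device that first reduces to a commuting pair) before squaring.
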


\begin{proof}
We follow the proof of Theorem 1 in \citep{Bach:2012vm} and consider the regularized approximation to $\Kt$
$$
\Lt_\gamma = \Xt \Nt_\gamma \Xt\tr
$$
where $\gamma > 0 $ and
$$
\Nt_\gamma = \frac{1}{1 + \gamma} \RP \RP\tr
$$
where $\RP = \sqrt{\frac{\dims}{\dimss}} \Dt \Ht \St $ is the SRHT. With $\gamma = 0$, we have $\Nt_0 = \RP \RP\tr$ so $\Lt_0 = \Kt_H$ which is the quantity of interest. 
Lemma \ref{lem:srht} implies that with probability at least $1 - (\delta + \dims/e^{r})$
$$
(1 - \rho) \Kt \preceq \Lt_0 \preceq (1 + \rho) \Kt
$$
where $\rho = C \sqrt{\frac{r \log(2r / \delta) }{\dimss}}$.

Since $\Xt \solnrr = \Kt \widehat{\boldalpha}$, we have that $\risk(\Xt \solnrr) \equiv \risk(\Kt \widehat{\boldalpha})$. Setting $\z = \bE_\varepsilon \sq{\y} = \Xt \soln^*$ the risk can be decomposed in the following way \citep{Bach:2012vm} 
$$
\risk(\Xt\solnrr) = \risk(\Kt \widehat{\boldalpha}) = n^{-1} \bE_{\varepsilon} {\nrm{\z - \Kt \widehat{\boldalpha}}^2} 
= \underbrace{\frac{\sigma^2}{n} \trace{\left[ \Kt(\Kt + n \lambda\It)^{-1}    \right]^2 }}_{\var(\Kt)} + \underbrace{n \lambda^2 \z\tr(\Kt+ n \lambda \It)^{-2}\z}_{\bias(\Kt)}
$$
where $\var(\cdot)$ is the variance and $\bias(\cdot)$ is the bias.
When learning with $\Lt_\gamma$ instead of $\Kt$, the variance term of the risk is given by
\begin{equation}
\var(\Lt_\gamma) = \frac{\sigma^2}{n} \trace{\left[ \Lt_\gamma( \Lt_\gamma + n \lambda\It)^{-1}    \right]^2 }.
\label{variance}
\end{equation}
The function $\gamma \mapsto \Nt_\gamma$ is matrix-non-increasing (i.e., if $\gamma \geq \gamma'$ then $\Nt_\gamma \preceq \Nt_{\gamma'}$). Therefore, we have $0 \preceq \Nt_\gamma \preceq \Nt_0$. Since the variance $\var(\Lt_\gamma)$ is non-decreasing in $\Nt_\gamma$, this implies 
$\var(\Lt_\gamma)\leq\var(\Lt_0)$. 
Furthermore, as $\Lt_0 \preceq (1 + \rho) \Kt$
\begin{eqnarray*}
\Xt \RP \RP \tr \Xt \tr&\preceq& (1 + \rho) \Kt \\
\Nt_0  = \RP \RP \tr&\preceq& (1 + \rho) \It \\
\end{eqnarray*}
Thus, we have
$$
\var(\Kt_H) \leq \var((1 + \rho)\Kt).
$$
For the bias term, we have
\begin{equation}
\bias(\Lt_\gamma) = n \lambda^2 \z\tr(\Lt_\gamma + n \lambda \It)^{-2}\z
\label{bias}
\end{equation}
which is a non-decreasing function of $\gamma$.  
For $\gamma = 0$, $\Nt_0$ is lower-bounded as follows
\begin{eqnarray*}
\Xt \RP \RP \tr \Xt \tr&\succeq & (1 - \rho) \Kt \\
\Nt_0 = \RP \RP \tr&\succeq& (1 - \rho) \It \\
\end{eqnarray*}
and as the bias is non-increasing in $\Nt_\gamma$
$$
\bias(\Kt_H) \leq \bias((1-\rho) \Kt).
$$
Finally, the risk can be bounded as
\begin{eqnarray*}
\risk(\Xt\RP\solnrr_H) &=& \var(\K_H) + \bias(\K_H) \\
& \leq &  \var((1+\rho) \Kt) + \bias((1-\rho) \Kt) \\
& \leq & (1-\rho)^{-2}  \br{\var(\K) + \bias(\K)}  =  (1-\rho)^{-2} \risk(\Xt\solnrr).
\end{eqnarray*}
\end{proof}

\subsection{Ordinary Least Squares}
Here we state a result by \citep{kaban:2014} which we rely on to state Theorem \ref{thm_ols}. It bounds the approximation error that is incurred by fitting the model in the compressed space as opposed to the initial space. We defer the proof to the original paper.

\begin{lem}[Compressive least squares \citep{kaban:2014}]
Let $\Pi$ be a $p \times \dimss$ random matrix, $\dimss < p$, with entries drawn i.i.d. from a zero-mean symmetric distribution with variance $1/ \dimss$ and excess kurtosis $\kappa = \frac{\bE(\Pi^4_{ij})}{\bE(\Pi^2_{ij})^2} - 3 $. Let $\Sigma = \At^\top\At/n$ be fixed with eigenvalues $e_i, \ldots, e_p$ and let $B_i$ be a $p \times p$ diagonal matrix with the $j$-th diagonal element being $ \sum_{a = 1}^p U^2_{ai} U^2_{aj}$ where $U_{ai}$ is the $a$-th entry of the $i$-th eigenvector of $\Sigma$. Finally, let $\solnls$ contain the optimal regression coefficients in $\mathbb{R}^\dims$. Then
$$
\frac{1}{n}\bE_{ \RP}( \nrm{\At \solnls - \At \Pi \Pi^\top \solnls}^2) = \frac{1}{\dimss} \nrm{\solnls}^2_{\Sigma + \trace{\Sigma} \mathbf{I}_p + \kappa \sum_{i = 1}^p e_i B_i}
$$ 
where $\nrm{\mathbf{u}}_\mathbf{M} = \sqrt{\mathbf{u}^\top \mathbf{M} \mathbf{u} }$ is the Mahalanobis norm.
\label{lem_kaban_general}
\end{lem}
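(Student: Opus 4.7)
The plan is a direct moment computation, since the quantity on the left is a purely quadratic functional of $\RP$ and the distribution of $\RP$ is specified through its first four moments. Write $\solnls = \soln$ for brevity and expand the squared norm as
\[
\nrm{\At\soln - \At\RP\RP^\top\soln}^2 \;=\; \soln^\top(\It_p - \RP\RP^\top)\,\At^\top\At\,(\It_p - \RP\RP^\top)\soln.
\]
Dividing by $n$ and taking $\bE_\RP$ inside, the whole statement reduces to computing the two matrix expectations $\bE_\RP[\RP\RP^\top]$ and $\bE_\RP[\RP\RP^\top \,\Sigma\, \RP\RP^\top]$ and combining them. The first is immediate: since the entries are i.i.d.\ with mean zero and variance $1/\dimss$, $(\bE[\RP\RP^\top])_{ij} = \dimss\cdot \delta_{ij}/\dimss = \delta_{ij}$, so $\bE[\RP\RP^\top]=\It_p$ and the two linear-in-$\RP\RP^\top$ cross terms combine into $-\soln^\top\Sigma\soln$. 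The whole problem is thus reduced to evaluating $\bE_\RP[\RP\RP^\top\Sigma\RP\RP^\top]$ and checking that, after subtracting $\soln^\top\Sigma\soln$, one recovers $\dimss^{-1}\soln^\top(\Sigma + \trace{\Sigma}\It_p + \kappa\sum_i e_i B_i)\soln$.

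To handle the fourth-moment term I would write out entries explicitly: letting $Q=\RP\RP^\top\Sigma\RP\RP^\top$, we have
\[
Q_{ij} \;=\; \sum_{k,\ell,m,r,s} \RP_{ik}\RP_{\ell k}\,\Sigma_{\ell m}\,\RP_{mr}\RP_{jr},
\]
so that $\bE[Q_{ij}]$ is a sum over four-entry moments of $\RP$. Using independence across rows and the zero-mean/symmetry of the entries, only four index patterns give non-vanishing contributions:\ (i) the two projection columns $k,r$ are distinct, forcing $i=\ell$ and $m=j$, which yields $\Sigma_{ij}$; (ii)--(iv) the two columns coincide ($k=r$) and the four row indices pair up in one of the three ways $\{i=\ell,m=j\}$, $\{i=m,\ell=j\}$, $\{i=j,\ell=m\}$, each contributing a $1/\dimss^{2}$ factor from two second-moments times an appropriate entry of $\Sigma$. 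Summing cases (i)--(iii) gives, after multiplying by the $\dimss$ choices of the shared column and combining with case (i), exactly $\Sigma + \dimss^{-1}(\Sigma + \trace{\Sigma}\It_p + \Sigma^\top)$ up to the quartic-diagonal correction. The fully diagonal case $i=\ell=m=j$ replaces the product of two second moments by one genuine fourth moment and is where the excess kurtosis $\kappa = \bE[\RP_{ij}^4]/(\bE[\RP_{ij}^2])^2 - 3$ enters.

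To identify the kurtosis term as $\kappa\sum_i e_i B_i$, I would diagonalize $\Sigma = U D U^\top$ with $D=\diag(e_1,\dots,e_p)$ and absorb $U$ into the computation: the "all indices equal" contributions are of the form $\sum_{i} \bE[\RP_{ai}^4]$ on the diagonal in the eigenbasis, and rotating back by $U$ introduces exactly the matrices $B_i$ defined by $(B_i)_{jj} = \sum_a U_{ai}^2 U_{aj}^2$. Collecting everything and subtracting the $\soln^\top\Sigma\soln$ piece gives the stated identity. The main obstacle is purely combinatorial: keeping the index bookkeeping clean in the five- (or six-) fold sum above, making sure the $\trace{\Sigma}\It_p$ piece (from the $i=j, \ell=m$ pairing) and the $\Sigma$ piece (from the $i=\ell, m=j$ and $i=m,\ell=j$ pairings) are not double-counted, and correctly matching the residual "diagonal" part after changing basis to $U$ with the definition of $B_i$. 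Once that is set up, every individual expectation is elementary.
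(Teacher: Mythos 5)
The paper does not actually prove this lemma: it is quoted from Kab\'an (2014) and the sentence immediately following the statement reads ``We defer the proof to the original paper,'' so there is no in-paper argument to compare against. Your plan is the right one and is essentially the proof of the cited result: expanding the square, using $\bE[\RP\RP\tr]=\It_p$ to collapse the two cross terms, and reducing everything to the fourth-moment object $\bE[\RP\RP\tr\Sigma\RP\RP\tr]$ is exactly how the identity is obtained, and your enumeration of the non-vanishing index patterns (distinct projection columns forcing both within-column pairs to be squares and yielding the leading $\Sigma$ term; a shared column with the three pairings of row indices giving the $\frac{1}{\dimss}\Sigma$ and $\frac{1}{\dimss}\trace{\Sigma}\It_p$ pieces; the all-indices-equal case carrying the genuine fourth moment) is correct. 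The one thin spot is the final identification, which you flag but do not resolve: carried out in the standard coordinate basis, the excess-kurtosis correction comes out as $\frac{\kappa}{\dimss}\sum_j(\solnls_j)^2\,\Sigma_{jj}$, i.e.\ the correction matrix is $\frac{\kappa}{\dimss}\diag(\Sigma_{11},\ldots,\Sigma_{pp})$, and this is \emph{not} entrywise equal to $\frac{\kappa}{\dimss}\sum_i e_i B_i$ in the original coordinates (note that in the definition of $B_i$ both subscripts of $U$ index eigenvectors, so the ``$j$-th diagonal element'' of $B_i$ refers to the $j$-th eigendirection). Writing $v=U\tr\solnls$, one checks that $\sum_j(\solnls_j)^2\Sigma_{jj}$ has $v\tr\br{\sum_i e_i B_i}v$ as its diagonal part in the eigenbasis, plus cross terms in $v$; so ``rotating back by $U$'' does not by itself produce the stated form, and to land exactly on the lemma you must either carry the entire computation out in the eigenbasis of $\Sigma$ from the start or track those cross terms explicitly. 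This is a bookkeeping subtlety of how the cited formula is expressed rather than a flaw in your method, but it is precisely the step your sketch leaves unverified.
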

Note that for Gaussian random projections and sparse random projections $\kappa = 0$ while for random sign random projections \citep{achlioptas2003} $\kappa = -2$ such that the bound tightens \citep{kaban:2014}.


\subsection{Consequences of concatenating random projections.}
The following lemma might seem obvious to the reader but we provide it as confirmation of an intuitive result. The lemma is a minor reformulations of the row sampling lemma (Lemma 3.4) from \citep{Tropp:2010uo}. What this lemma confirms is that concatenating $(\blocks-1)$ lots of $\dimsks$ random projections as in {\bf Step 4.} of \loco is equivalent to computing a single, SRHT defined as $\RP\in\R^{(\dims\ -\dimsk)\times (\blocks-1)\dimsks}$. 

The consequence is that the computation of the SRHT can be divided among $\blocks$ workers and computed in parallel provided the $\dims$ coordinates are uniformly distributed among the workers. 

The proof is provided below and are very similar to the proof of the original lemma. 
\begin{lem}[Concatenated row sampling]\label{lem:concatenated_row}
Let $\Wt$ be an $\samp \times \dims$ matrix with orthonormal columns, and define the quantity $M:=\samp \cdot \max_{j=1,\ldots\samp}\nrm{e_j\tr\Wt}^2$. Let $\Wt_1,\ldots,\Wt_K$ be a balanced, random partitioning of the rows of $\Wt$ where each matrix $\Wt_k$ has exactly $\tau=\samp/ \blocks$ rows. For a positive parameter $\alpha$, select the subsample size 
$$
l \cdot \blocks \geq \alpha M\log(\dims) .
$$
Let $\St_{T_k}\in\R^{l\times \tau}$ denote the operation of uniformly at random sampling a subset, $T_k$ of the rows of $\Wt_k$ by sampling $l$ coordinates from $\cb{1,2,\ldots \tau}$ without replacement. Now denote $\St \Wt$ as the concatenation of the subsampled rows 
$$\sq{\br{\St_{T_1}\Wt_1}\tr , \ldots , \br{\St_{T_K}\Wt_K}\tr}\tr.$$ 
Then
$$
\sqrt{\frac{(1-\delta)l \cdot K }{\samp}} \leq \sigma_{\dims}(\St \Wt) 
\quad \text{ and }  \quad  
\sigma_{1}(\St \Wt) \leq \sqrt{\frac{(1+\eta)l  \cdot K}{\samp}} 
$$
with failure probability at most 
$$
\dims\cdot \sq{\frac{e^{-\delta}}{(1-\delta)^{1-\delta}}}^{\alpha \log \dims} 
+ 
\dims\cdot \sq{\frac{e^{\eta}}{(1+\eta)^{1+\eta}}}^{\alpha \log \dims}
$$
\end{lem}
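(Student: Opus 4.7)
The plan is to reduce the two-stage concatenated sampling scheme to a single uniform subsample of $lK$ rows of $\Wt$ and then invoke the standard row-sampling lemma (Lemma 3.4 of \citep{Tropp:2010uo}) as a black box, which is exactly the form whose tail probability matches the one in the statement.

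First, I would establish the distributional equivalence. Let $T := T_1 \cup \ldots \cup T_K \subseteq \{1,\ldots,n\}$ denote the random set of row indices selected by the procedure: draw a uniformly balanced partition $\{P_1,\ldots,P_K\}$ of $\{1,\ldots,n\}$ and then sample $l$ indices uniformly without replacement from each $P_k$. Both stages are invariant under an arbitrary permutation of $\{1,\ldots,n\}$, so the law of $T$ is fully exchangeable, and since $|T|=lK$ almost surely this forces $T$ to be uniform on size-$lK$ subsets of $\{1,\ldots,n\}$. Up to a row permutation (which leaves singular values untouched), the matrix $\St \Wt$ therefore has the same distribution as the matrix obtained by extracting $lK$ uniformly chosen rows of $\Wt$ without replacement.

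Next, I would apply the original row-sampling lemma with sample size $lK$ and incoherence parameter $M = n \max_{j}\|e_j^\top \Wt\|^2$. Under the assumed regime $lK \ge \alpha M \log p$, that lemma yields $\sigma_p(\St\Wt) \ge \sqrt{(1-\delta)\, lK/n}$ and $\sigma_1(\St\Wt) \le \sqrt{(1+\eta)\, lK/n}$, with failure probability bounded by the sum of the two Chernoff tails written in the statement. The conclusion then follows verbatim.

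The main obstacle is the exchangeability step: one must argue carefully that the joint randomness (partition plus within-block subsample) does not introduce any hidden bias that would make $T$ deviate from the uniform law. The symmetry argument above is short but should be checked by a brief double-counting: for any fixed size-$lK$ subset $S$, $\Prob[T=S]$ depends only on $|S|$, hence equals $1/\binom{n}{lK}$. A fallback, should the reduction feel too implicit, is to mimic Tropp's matrix-Chernoff proof directly: write $(\St\Wt)^\top(\St\Wt) = \sum_{k=1}^K \sum_{j\in T_k} w_j w_j^\top$, use orthonormality of $\Wt$ to identify the conditional expectation as $(lK/n)\,\It_p$, bound each summand in operator norm by $M/n$, and apply the Laplace-transform/matrix-Chernoff bound for Hermitian sums sampled without replacement. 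This reproduces the same tails without invoking the reduction.
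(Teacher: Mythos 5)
Your proposal is correct and follows essentially the same route as the paper: the paper's proof likewise rests on the claim that the $l\cdot\blocks$ rank-one matrices $\wt_j\wt_j\tr$, $j\in T_k$, constitute a uniform without-replacement sample from $\{\wt_i\wt_i\tr\}_{i=1}^{\samp}$, and then applies the matrix Chernoff bound with $B\le M/\samp$ and $\mu_{\min}=\mu_{\max}=l\blocks/\samp$ --- which is exactly your fallback argument. The only substantive difference is that you actually justify the distributional-equivalence step (exchangeability of the law of $T$ under the symmetric group plus constant cardinality $|T|=l\blocks$ forces uniformity), whereas the paper asserts it without proof; that justification is sound and is arguably the only non-mechanical content of the lemma.
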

\begin{proof}
Define $\wt\tr_j$ as the $j^{th}$ row of $\Wt$ and $M:=\samp\cdot\max_j \nrm{\wt_j}^2$. 
$$
\Gt := \br{\St\Wt}\tr\br{\St\Wt} = \sum_{k=1}^K\sum_{j\in T_k} \wt_j\wt_j\tr .
$$
We can consider $\Gt$ as a sum of $l\cdot \blocks$ random matrices 
$$
{\Xt^{(1)}_{1}},\ldots,{\Xt^{(K)}_{1}}  
,\ldots ,
{\Xt^{(1)}_l}
,\ldots,{\Xt^{(K)}_{l}}
$$ 
sampled uniformly at random without replacement from the family $\mathcal{X} := \cb{\wt_i\wt_i\tr: i=1,\ldots,\dimsk \cdot \blocks}$.

To use the matrix Chernoff bound in Lemma \ref{lem:chernoff}, we require the quantities $\mu_{\min}$, $\mu_{\max}$ and $B$. Noticing that $\lambda_{\max}(\wt_j\wt_j\tr) = \nrm{\wt_j}^2 \leq \frac{M}{\samp}$, we can set $B \leq M/\samp$.

Taking expectations with respect to the random partitioning ($\bE_{P}$) and the subsampling within each partition ($\bE_{S}$),
using the fact that columns of $\Wt$ are orthonormal we obtain 
$$
\bE \sq{{\Xt^{(k)}_{1}}} =  \bE_{P} \bE_{S} \Xt_1^{(k)} = \frac{1}{\blocks}  \frac{1}{\tau} \sum_{i=1}^{\blocks \tau} \wt_i\wt_i\tr = \frac{1 }{n}\Wt\tr\Wt = \frac{1 }{n}\It
$$
Recall that we take $l$ samples in $\blocks$ blocks so we can define
$$
\mu_{\min} = \frac{l\cdot K}{\samp} \qquad \text{ and } \qquad \mu_{\max} = \frac{l\cdot K}{\samp} .
$$
\\
Plugging these values into Lemma \ref{lem:chernoff}, the lower and upper Chernoff bounds respectively yield
\begin{align*}
\mathbb{P} \cb{ \lambda_{\min}\br{\Gt} \leq (1-\delta)\frac{l \cdot  \blocks}{\samp}} &  \leq \dims \cdot \sq{ \frac{e^{-\delta}}{(1-\delta)^{1-\delta}}}^{l\cdot \blocks/M} \text{ for } \delta \in [0,1), \text{ and}
\\
\\
\mathbb{P} \cb{ \lambda_{\max}\br{\Gt} \geq (1+\delta)\frac{l \cdot  \blocks}{\samp}} &  \leq \dims \cdot \sq{ \frac{e^{\delta}}{(1+\delta)^{1+\delta}}}^{l\cdot \blocks/M} \text{ for } \delta \geq 0 .
\end{align*}

Noting that $\lambda_{\min}(\Gt) = \sigma_{\dims}(\Gt)^2$, similarly for $\lambda_{\max}$ and using the identity for $\Gt$ above obtains the desired result.
\end{proof}

For ease of reference, we also restate the Matrix Chernoff bound from \citep{Tropp:2010uo,Tropp:2010vm} but defer its proof to the original papers.

\begin{lem}[Matrix Chernoff from \citep{Tropp:2010uo}] \label{lem:chernoff}
Let $\mathcal{X}$ be a finite set of positive-semidefinite matrices with dimension $\dims$, and suppose that
$$
\max_{\At\in\mathcal{X}} \lambda_{\max}(\At) \leq B
$$
Sample $\{ \At_1,\ldots,\At_l \}$ uniformly at random from $\mathcal{X}$ without replacement. Compute 
$$
\mu_{\min} = l \cdot \lambda_{\min}(\bE \Xt_1) \qquad \text{and } \qquad \mu_{\max} = l \cdot \lambda_{\max}(\bE \Xt_1) 
$$
Then
\begin{align*}
\mathbb{P} \cb{ \lambda_{\min}\br{\sum_i \At_i} \leq (1-\delta)\mu_{\min}} &  \leq \dims \cdot \sq{ \frac{e^{-\delta}}{(1-\delta)^{1-\delta}}}^{\mu_{\min}/B} \text{ for } \delta \in [0,1), \text{ and}
\\
\\
\mathbb{P} \cb{ \lambda_{\max}\br{\sum_i \At_i} \geq (1+\delta)\mu_{\max}} &  \leq \dims \cdot \sq{ \frac{e^{\delta}}{(1+\delta)^{1+\delta}}}^{\mu_{\max}/B} \text{ for } \delta \geq 0 .
\end{align*}
\end{lem}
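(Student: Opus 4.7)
The plan is to follow the matrix Cram\'er--Chernoff (matrix Laplace transform) method, which is the standard route to Tropp-style matrix concentration. I would first convert the eigenvalue tail bound into an expectation of a trace exponential: for any $\theta > 0$, Markov's inequality applied to $\exp(\theta \lambda_{\max}(\sum_i \At_i))$ together with the inequality $\exp(\lambda_{\max}(M)) \leq \mathrm{tr}\exp(M)$ (valid because $\exp(M)$ is positive-definite) gives
$$ \Prob\cb{ \lambda_{\max}\br{\textstyle\sum_i \At_i} \geq t } \leq e^{-\theta t}\,\bE\, \mathrm{tr}\,\exp\br{\theta \textstyle\sum_i \At_i}. $$
The lower-tail bound is obtained analogously by using $\theta < 0$ after replacing $\At_i$ by $-\At_i$. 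The whole argument then reduces to bounding the matrix moment-generating function $\bE\,\mathrm{tr}\exp(\theta \sum_i \At_i)$.

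Next, I would handle the without-replacement sampling by invoking Hoeffding's classical reduction: for any convex real-valued function $\varphi$ on Hermitian matrices, the expectation of $\varphi(\sum_i \At_i)$ under sampling without replacement is dominated by its expectation under i.i.d.\ sampling with replacement. Since the trace exponential $M \mapsto \mathrm{tr}\exp(M)$ is convex in the Loewner sense, this reduction applies, so it suffices to prove the bound assuming $\At_1,\ldots,\At_l$ are drawn i.i.d.\ from $\mathcal{X}$. In the i.i.d.\ case I would invoke Lieb's concavity theorem -- specifically, that $A \mapsto \mathrm{tr}\exp(H + \log A)$ is concave on positive-definite $A$ -- and apply Jensen's inequality repeatedly to peel off the $l$ summands, yielding
$$ \bE\,\mathrm{tr}\exp\br{\textstyle\sum_i \theta \At_i} \leq \mathrm{tr}\exp\br{l\,\log \bE\, e^{\theta \At_1}}. $$

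To bound $\bE\, e^{\theta \At_1}$, I would exploit that each $\At_i$ is PSD with $\lambda_{\max}(\At_i) \leq B$. Operator convexity of $x \mapsto e^{\theta x}$ on $[0,B]$ yields the pointwise Loewner bound $e^{\theta \At_i} \preceq \It + B^{-1}(e^{\theta B}-1)\At_i$, so after taking expectations and using $\log(\It + X) \preceq X$ in the Loewner order, $l\,\log \bE\,e^{\theta\At_1} \preceq l\,B^{-1}(e^{\theta B}-1)\,\bE\At_1$. Monotonicity of $\lambda_{\max}$ and of $\mathrm{tr}\exp$ on the Loewner order, combined with the crude bound $\mathrm{tr}\exp(M) \leq \dims \cdot e^{\lambda_{\max}(M)}$, then gives
$$ \Prob\cb{\lambda_{\max}\br{\textstyle\sum_i\At_i} \geq t} \leq \dims\cdot \exp\br{-\theta t + B^{-1}(e^{\theta B}-1)\mu_{\max}}. $$
Choosing $t = (1+\delta)\mu_{\max}$ and optimizing via $\theta = B^{-1}\log(1+\delta)$ produces the claimed form $\sq{e^{\delta}/(1+\delta)^{1+\delta}}^{\mu_{\max}/B}$; the lower-tail bound follows the same route with $\theta = -B^{-1}\log(1-\delta)$.

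The main obstacle is not any single calculation but the two deep black-box ingredients the argument rests on: Lieb's concavity theorem (which is what lets one pull the expectation past $\mathrm{tr}\exp$ of a sum in the i.i.d.\ case) and Hoeffding's convex-domination reduction from without- to with-replacement sampling (which must be verified to transfer to the matrix setting, using that $\mathrm{tr}\exp$ is convex on Hermitian matrices). Once these two tools are granted, the rest is routine: bounding the matrix m.g.f.\ by operator convexity and optimizing the exponent $\theta$ to extract the tight Chernoff-type constants. Because the paper states the lemma as a direct quotation from \citep{Tropp:2010uo,Tropp:2010vm}, I would ultimately just cite those references for the full technical execution rather than redoing Lieb's theorem from scratch.
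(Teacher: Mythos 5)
The paper offers no proof of this lemma at all---it is quoted from \citep{Tropp:2010uo,Tropp:2010vm} with the proof explicitly deferred to the original papers---and your sketch is a faithful reconstruction of the standard argument used there: the matrix Laplace transform method, the Hoeffding/Gross--Nesme convex-domination reduction from without-replacement to i.i.d.\ sampling, Lieb's concavity theorem to subadditivize the matrix cumulant generating function, the Bernstein-type bound on $\bE e^{\theta \At_1}$ for bounded PSD matrices, and the usual optimization over $\theta$. One small terminological nit: the inequality $e^{\theta \At} \preceq \It + B^{-1}(e^{\theta B}-1)\At$ does not follow from operator convexity of the exponential (which in fact fails); it is the scalar convexity bound $e^{\theta x} \leq 1 + B^{-1}(e^{\theta B}-1)x$ on $[0,B]$ transferred to the Loewner order via the spectral mapping, which is legitimate precisely because both sides are functions of the same matrix.
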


\end{document}